\documentclass{article}

\usepackage{microtype}
\usepackage{graphicx}
\usepackage{subcaption}
\usepackage{booktabs} 

\usepackage{hyperref}
\usepackage{xcolor}
\usepackage{etoc}

\usepackage[most]{tcolorbox}
\usepackage{listings}
\usepackage{xcolor}

\definecolor{EBFT}{HTML}{1B9E77}   
\definecolor{SFT}{HTML}{7570B3}    
\definecolor{RLVR}{HTML}{D95F02}   
\definecolor{Base}{HTML}{3D3D3D}   
\definecolor{CardRule}{HTML}{222222}
\definecolor{SpecBg}{HTML}{F3F4F6}
\definecolor{FailBg}{HTML}{FFF1F2}
\definecolor{FailRule}{HTML}{E11D48}

\newcommand{\methodtag}[2]{%
  \textbf{\textsc{\textcolor{#1}{#2}}}%
}

\newcommand{\failuretag}[1]{%
  \vspace{1pt}\par
  \begin{tcolorbox}[
    enhanced,
    colback=FailBg,
    colframe=FailRule,
    boxrule=0.4pt,
    arc=1.2pt,
    left=4pt,right=4pt,top=2pt,bottom=2pt,
    boxsep=1pt
  ]
  {\footnotesize 
  \textbf{\textcolor{FailRule}{Failure mode:}}~#1
  }
  \end{tcolorbox}%
}

\newcommand{\spechead}[1]{%
  \begin{tcolorbox}[
    enhanced,
    colback=SpecBg,
    colframe=CardRule,
    boxrule=0.35pt,
    arc=1.2pt,
    left=4pt,right=4pt,
    top=1pt,bottom=1pt,  
    boxsep=0.5pt,        
  ]
  \centering\itshape~#1
  \end{tcolorbox}%
}

\tcbset{
  qualcard/.style={
    enhanced,
    colback=white,
    colframe=black,
    boxrule=0.5pt,
    arc=2pt,

    left=5pt,right=5pt,top=2pt,bottom=4pt,
    boxsep=1pt,

    fonttitle=\bfseries,
    colbacktitle=white,
    coltitle=black,

    toptitle=1pt,
    bottomtitle=1pt,
    lefttitle=0pt,
    righttitle=0pt,
    titlerule=0pt,

    before skip=6pt,
    after skip=0pt
  }
}

\NewDocumentCommand{\samy}{O{0.8\linewidth} +m}{%
  {\color{red}[Samy:\ \parbox[t]{#1}{#2}]}%
}

\usepackage[preprint]{icml2026}

\usepackage{amsmath}
\usepackage{amssymb}
\usepackage{mathtools}
\usepackage{amsthm}

\usepackage{algorithm}
\usepackage{algorithmic}

\usepackage{enumitem}

\usepackage{accents}

\usepackage[capitalize,noabbrev]{cleveref}

\theoremstyle{plain}
\newtheorem{theorem}{Theorem}[section]

\newtheorem{lemma}[theorem]{Lemma}

\theoremstyle{definition}

\theoremstyle{remark}

\usepackage{xcolor}

\usepackage[disable,textsize=tiny]{todonotes}

\icmltitlerunning{Energy-Based Fine-Tuning}

\begin{document}

\twocolumn[
  \icmltitle{Matching Features, Not Tokens: Energy-Based Fine-Tuning of Language Models}



  \icmlsetsymbol{equal}{\textbf{*}}
  
  \begin{icmlauthorlist}
    \icmlauthor{Samy Jelassi}{equal,harvard}
    \icmlauthor{Mujin Kwun}{equal,harvard}
    \icmlauthor{Rosie Zhao}{equal,harvard}
    \icmlauthor{Yuanzhi Li}{tbd} 
    \icmlauthor{Nicolo Fusi}{msr}
    \icmlauthor{Yilun Du}{harvard}
    \icmlauthor{Sham M. Kakade}{harvard}
    \icmlauthor{Carles Domingo-Enrich}{equal,msr}
  \end{icmlauthorlist}

  \icmlaffiliation{harvard}{Harvard University, Kempner Institute}
  \icmlaffiliation{tbd}{MBZUAI}
  \icmlaffiliation{msr}{Microsoft Research New England}
  \icmlcorrespondingauthor{Samy Jelassi}{jelassisamy@gmail.com}
  \icmlcorrespondingauthor{Mujin Kwun}{mujin\_kwun@harvard.edu}
  \icmlcorrespondingauthor{Rosie Zhao}{rosiezhao@g.harvard.edu}
  \icmlcorrespondingauthor{Carles Domingo-Enrich}{carlesd@microsoft.com}

  \icmlkeywords{language models, reinforcement learning, energy-based models, sequence-level learning}

  \vskip 0.3in
]



\printAffiliationsAndNotice{\icmlEqualContribution}

\begin{abstract}
Cross-entropy (CE) training provides dense and scalable supervision for language models, but it optimizes next-token prediction under teacher forcing rather than sequence-level behavior under model rollouts.
We introduce a feature-matching objective for language-model fine-tuning that targets sequence-level statistics of the completion distribution, providing dense semantic feedback without requiring a task-specific verifier or preference model.
To optimize this objective efficiently, we propose \emph{energy-based fine-tuning} (EBFT), which uses strided block-parallel sampling to generate multiple rollouts from nested prefixes concurrently, batches feature extraction over these rollouts, and uses the resulting embeddings to perform an on-policy policy-gradient update. 
We present a theoretical perspective connecting EBFT to KL-regularized feature-matching and energy-based modeling.
Empirically, across Q\&A coding, unstructured coding, and translation, EBFT matches RLVR and outperforms SFT on downstream accuracy while achieving a lower validation cross-entropy than both methods\footnote{Our code is available at \url{https://github.com/sjelassi/ebft_openrlhf} and our project page at \url{https://energy-based-fine-tuning.github.io}.}.
\end{abstract}

\section{Introduction}

Cross-entropy (CE) training under teacher forcing is the standard approach for pre-training, continued/mid-training, and supervised fine-tuning (SFT) of large language models.
Its next-token objective provides an extremely \emph{dense} learning signal, is stable under massively parallel optimization, and admits efficient implementations at scale \cite{kaplan2020scaling,brown2020language}.
However, the same teacher-forcing setup introduces a distribution shift: during training, the model conditions on ground-truth prefixes, while at deployment time, it must condition on \emph{its own} generations. Errors early in a generated sequence alter the conditioning context for subsequent predictions, causing later tokens to be sampled from distributions the model was rarely trained on \cite{bengio2015scheduledsampling,lamb2016professorforcing}.

\citet{braverman2019calibration} quantify this distribution shift by measuring the expected conditional entropy of the $k$-th generated token as a function of $k$. For a perfect model, this quantity should remain stable as $k$ grows, since the generated prefixes would be distributionally identical to ground-truth text. In practice, however, the expected entropy increases with $k$, even for models that achieve low training perplexity. This reveals a fundamental limitation of token-level supervision: low perplexity measures one-step prediction accuracy on ground-truth prefixes but does not guarantee well-calibrated behavior over longer generations. The model may match the data distribution locally while diverging from it at the sequence level.

\begin{figure}[h]
    \centering
    \includegraphics[width=0.99\linewidth]{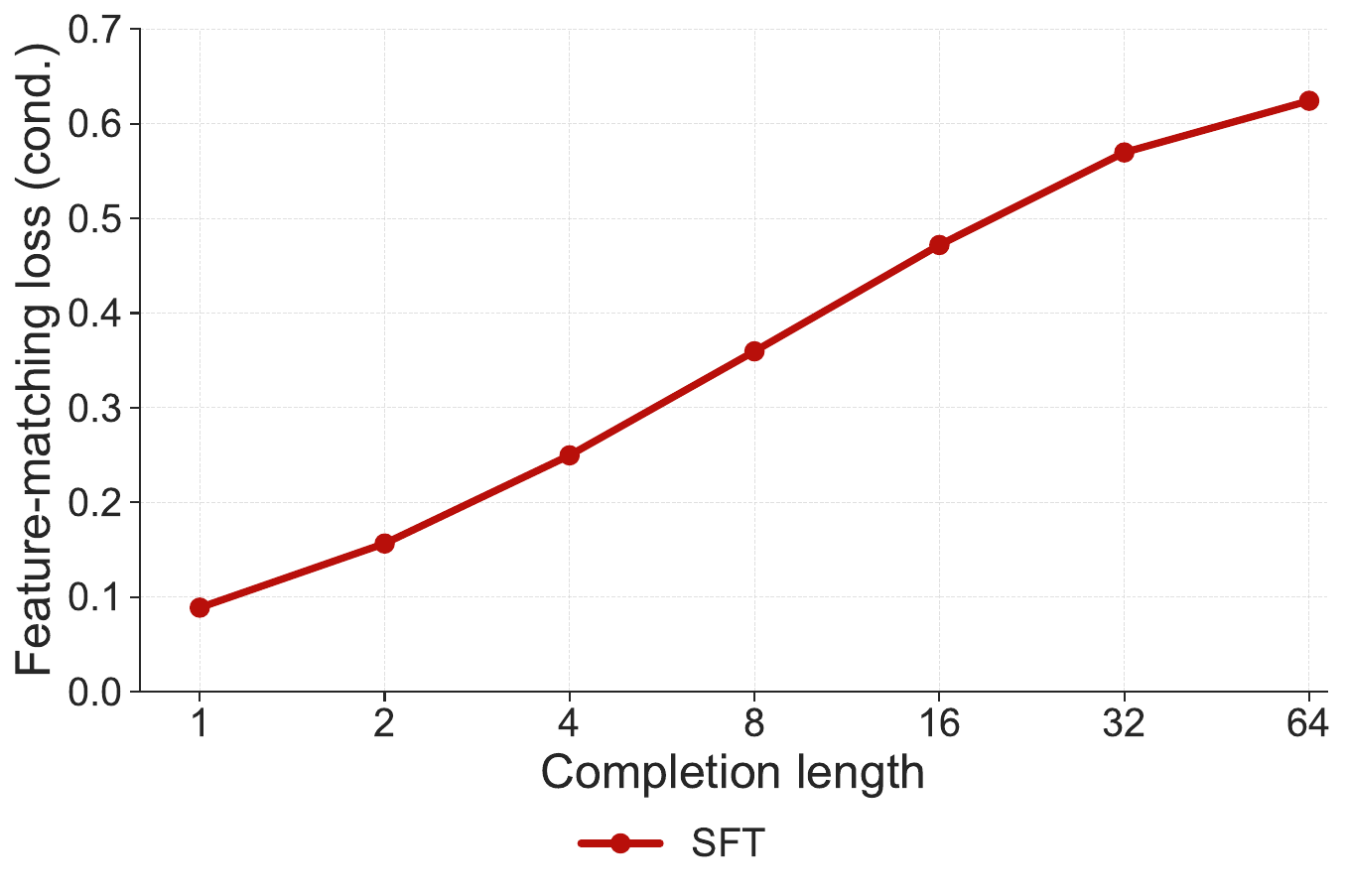}
    \caption{\textbf{Feature-matching loss grows with completion length.} Conditional feature-matching loss (lower is better) as a function of completion length for Qwen2.5-1.5B fine-tuned with SFT on OpenCodeInstruct~\citep{ahmad2025opencodeinstruct}. Although this increase is expected even under a perfect model due to growing feature variance, part of the degradation reflects SFT's inability to calibrate the model's rollout distribution over long horizons.}
    \label{fig:gen_len_sft_fig}
\end{figure}

A natural way to measure this sequence-level divergence is to 
compare statistics of model-generated completions against those of 
ground-truth completions in a feature space. We formalize this 
through a \emph{conditional feature-matching loss} that, for each 
context, measures the squared error between the mean feature 
embedding of model rollouts and that of the ground-truth 
completion (formal definition in \Cref{subsec:fm_loss}). We say a 
model is \emph{(feature-) calibrated} when this loss is zero, meaning 
its expected feature embeddings match those of the data for all 
contexts. \Cref{fig:gen_len_sft_fig} plots this loss as a 
function of completion length for a Qwen2.5-1.5B model fine-tuned 
with SFT. The loss increases with completion length, which is 
partly expected even for a perfect model due to growing feature 
variance over longer generations. However, part of this 
degradation reflects SFT's failure to calibrate the model's 
rollout distribution. A fine-tuning method that directly targets 
this feature-matching loss could reduce this gap.

RL fine-tuning~\cite{ouyang2022training,schulman2017ppo} addresses this mismatch by optimizing sequence-level rewards under the model's own rollouts, enabling direct behavioral control. However, its effectiveness depends on access to a reliable reward function or verifier, as in reinforcement learning with verifiable rewards (RLVR;~\cite{lightman2023lets,shao2024deepseekmath}). These reward signals may be unavailable, noisy, or misaligned with desired behavior in many open-ended tasks. Even when a reliable reward exists, RL optimizes a scalar signal and does not directly target distributional calibration of the rollout distribution. In our experiments, we observe this tradeoff concretely: RLVR improves downstream performance at the cost of worsening both the validation cross-entropy and the feature-matching loss introduced above.

 Beyond RLVR, a related class of methods incorporates \emph{partial rollouts} at training time~\cite{zelikman2024quietstarlanguagemodelsteach, hatamizadeh2025rlp, dong2025rpt}. Since these rollouts are typically too short or incomplete to be scored by a verifier, these methods introduce surrogate rewards -- commonly the model's own log-probabilities or token-overlap similarity between a sampled continuation and a reference. While useful in practice, neither type of surrogate provides calibration guarantees: self-likelihood rewards reinforce already high-probability samples without necessarily improving coverage, and similarity-based measures can improve the chosen metric without improving likelihood or calibration.

We propose an alternative that replaces these heuristic surrogate rewards with a principled objective: we directly optimize the feature-matching loss, using it not just as a diagnostic but as the training signal itself. A frozen feature network $\phi$, initialized from the pre-trained model, embeds concatenated prompt--completion sequences, and the generator $p_{\theta}$ is fine-tuned to match the resulting feature moments using a REINFORCE-style gradient estimator on partial rollouts (see \Cref{fig:ebm}). The resulting training signal is dense, operates at the sequence level, requires no task-specific reward or verifier, and — unlike the surrogate-reward methods above — optimizes a proper scoring rule under sufficiently rich features. We call this approach \emph{Energy-Based Fine-Tuning} (EBFT): under a KL-regularized view, the feature-matching objective implicitly defines an energy function over sequences, with the optimal policy taking the form of an exponential tilt of the base model (\Cref{sec:KL_regularization}).

\begin{figure}[t]
    \centering
    \includegraphics[width=0.99\linewidth]{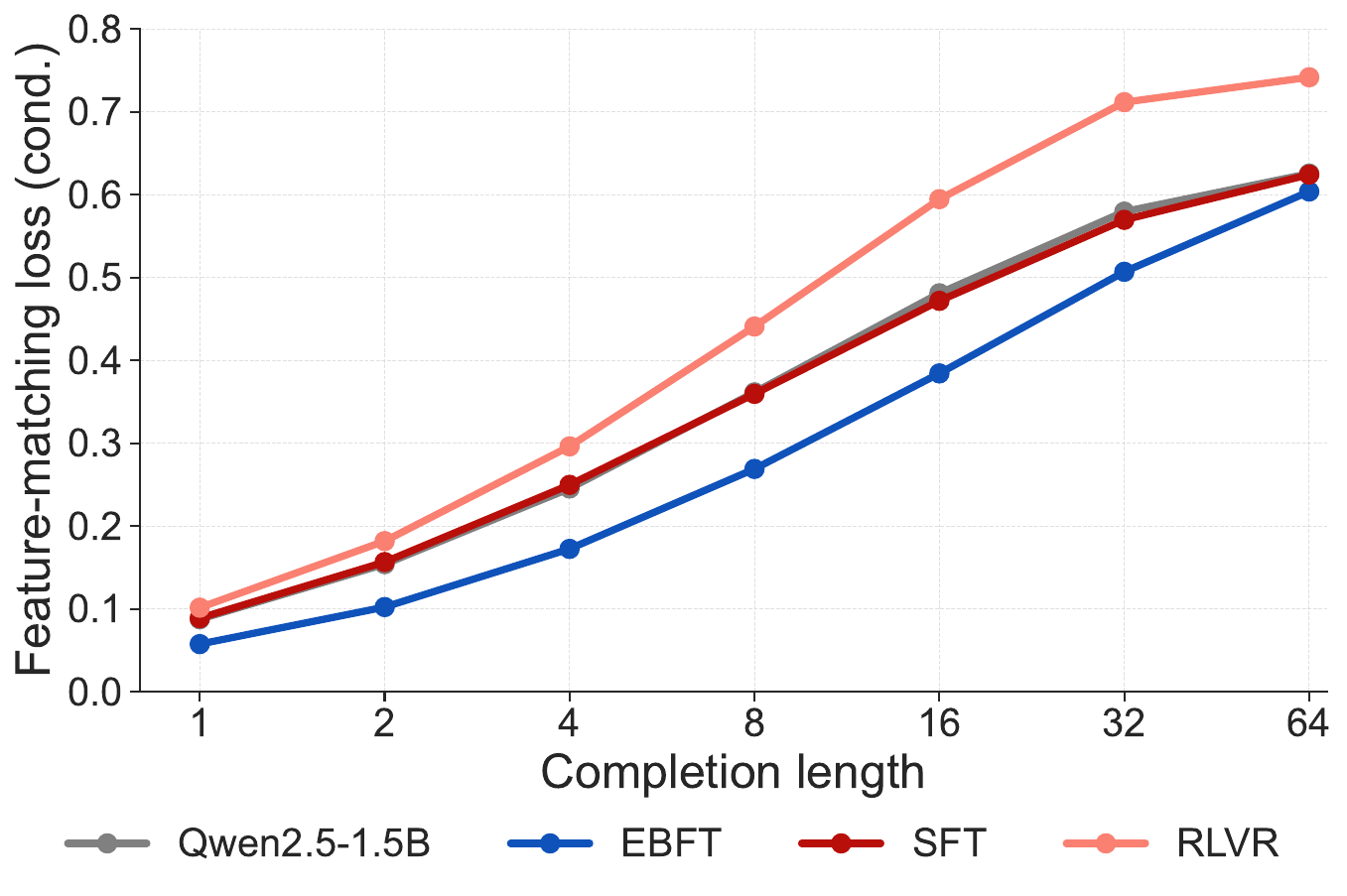}
    \caption{\textbf{EBFT achieves the lowest feature-matching loss across all completion lengths.}  Despite training with a rollout horizon of only 8 tokens, EBFT's gains persist and grow at longer completions. RLVR worsens this loss relative to the base model.}
    
    \label{fig:gen_len_fig}
\end{figure}

\begin{figure*}
    \centering
    \includegraphics[width=\linewidth]{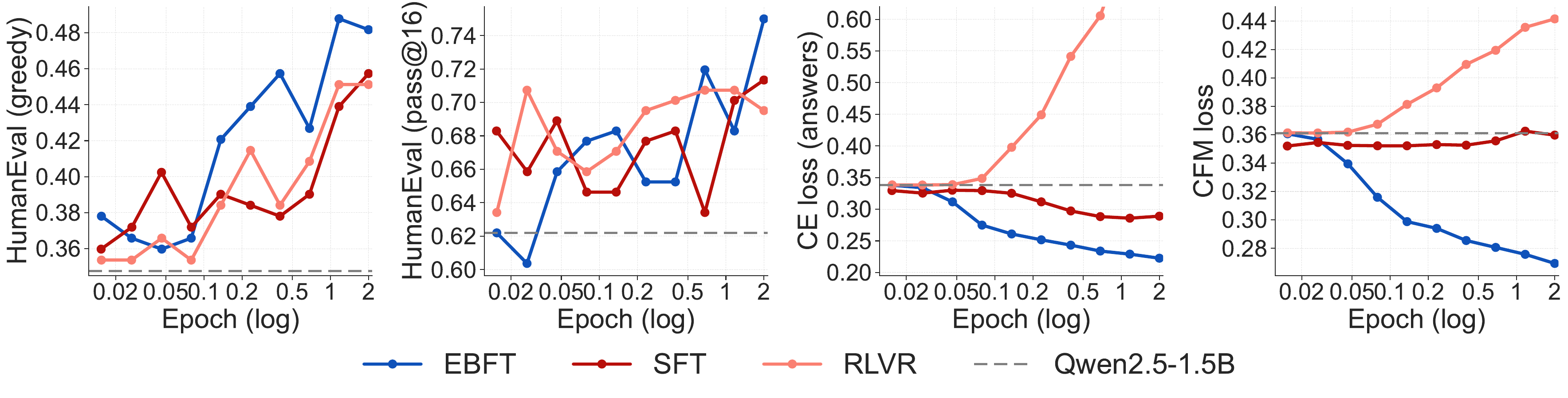}
    \caption{\textbf{EBFT improves downstream performance without sacrificing distributional calibration.} From left to right, we plot HumanEval accuracy (greedy and pass@16), validation cross-entropy (CE), and conditional feature-matching (CFM) loss over training for Qwen2.5-1.5B fine-tuned on OpenCodeInstruct~\citep{ahmad2025opencodeinstruct}. SFT improves cross-entropy and CFM loss but lags on downstream accuracy. RLVR improves downstream accuracy but substantially degrades both calibration metrics relative to the base model (dashed line). EBFT achieves the best results across all four metrics, avoiding this tradeoff. CE and CFM losses are computed on a 1k-samples held-out subset of OpenCodeInstruct.} 
    \label{fig:front_figure}
\end{figure*}

\paragraph{Contributions.}
We introduce a feature-matching loss for language model fine-tuning that targets sequence-level statistics of the rollout distribution, and propose Energy-Based Fine-Tuning (EBFT) as a practical method to optimize it. We provide a theoretical perspective connecting EBFT to KL-regularized energy-based models (\Cref{sec:KL_regularization}). Empirically, we observe the following across Q\&A coding, unstructured coding, and translation datasets:
\begin{enumerate}[leftmargin=*, itemsep=1pt, topsep=2pt]
    \item EBFT achieves the lowest feature-matching loss across all completion lengths (\Cref{fig:gen_len_fig}). Despite training with a rollout horizon of only 8 tokens, its gains persist and grow at longer generations, indicating genuine distributional calibration. In contrast, RLVR worsens this loss relative to the base model.
    \item On downstream performance, EBFT consistently outperforms SFT and is competitive with RLVR, despite requiring no task-specific reward or verifier.
    \item On validation cross-entropy, EBFT improves over SFT across all tasks, even though SFT explicitly optimizes this objective (\Cref{fig:front_figure}). RLVR, by contrast, substantially degrades validation perplexity.
    \item EBFT can be applied in non-verifiable settings where RLVR is inapplicable. For instance, when training on raw code scraped from GitHub, EBFT yields substantial gains over SFT.
\end{enumerate}

\section{Language Modeling with Feature Matching}
\label{sec:loss}

\subsection{The feature-matching loss}
\label{subsec:fm_loss}

\begin{figure*}[t]
  \centering
  \includegraphics[width=\textwidth]{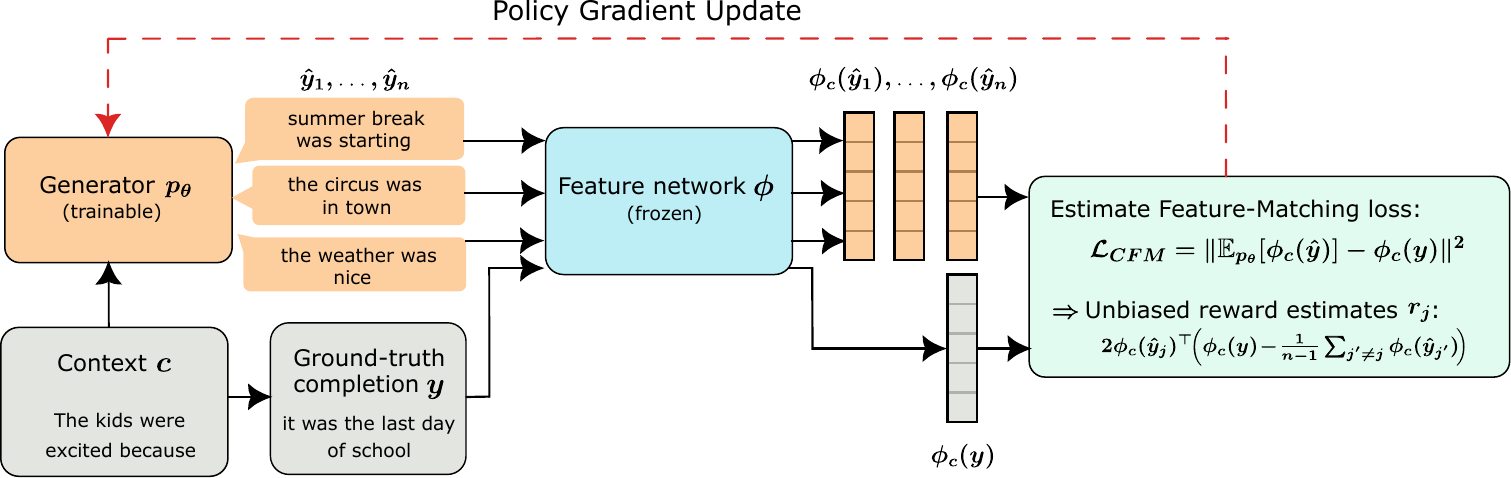}
  \caption{\textbf{Overview of Energy-Based Fine-Tuning (EBFT).} For each context $c$, the generator $p_\theta$ samples $n$ completions. A frozen feature network $\phi$ embeds each prompt--completion pair, producing features $\phi(c\!:\!\hat{y}_j)$ for the sampled completions and $\phi(c\!:\!y)$ for the ground truth. Each completion receives a feature-matching reward measuring alignment with the ground-truth feature moment, and the generator is updated via REINFORCE with an RLOO baseline.}
  \label{fig:ebm}
\end{figure*}

Given vocabulary $\mathcal{V}$ and  ground truth distribution $p$ over contexts $c \in \mathcal{V}^{*}$ and completions $y \in \mathcal{V}^{G}$ of length $G$, and a language model $p_{\theta}$, the feature-matching loss function is:
\begin{equation}
\begin{aligned} \label{eq:L_FM}
    &\mathcal{L}_{\mathrm{FM}}(\theta)\\
    \! :=& \mathbb{E}_{c \sim p}\Big[
      \big\| \mathbb{E}_{\hat{y} \sim p_{\theta}(\cdot|c)}[\phi(c\!:\!\hat{y})]
      \! - \! \mathbb{E}_{y \sim p(\cdot|c)}[\phi(c\!:\!y)] \big\|^2
    \Big],
\end{aligned}
\end{equation}
where $c\!:\!y$ denotes concatenation and $\phi : \mathcal{V}^{*} \to \mathbb{R}^d$ is the feature map. We use the short-hand $\phi_c(y) := \phi(c\!:\!y)$.
Since $\mathcal{L}_{\mathrm{FM}}$ depends on the unknown data moment
$\mathbb{E}_{y \sim p(\cdot|c)}[\phi_c(y)]$, it cannot be directly estimated from
ground-truth pairs $(c,y)$.
A bias-variance decomposition lets us write the feature-matching loss in terms of the \emph{conditional feature-matching loss}:
\begin{align}
\begin{split} \label{eq:L_CFM}
&\mathcal{L}_{\mathrm{FM}}(\theta) = \mathcal{L}_{\mathrm{CFM}}(\theta)
    - \mathbb{E}_{c \sim p}\Big[ \mathrm{Var}[\phi_c(y)|c] \Big], 
\end{split}
\end{align}
where $\mathcal{L}_{\mathrm{CFM}}(\theta):= \mathbb{E}_{c \sim p}\Big[\big\| \mathbb{E}_{\hat{y} \sim p_{\theta}(\cdot|c)}[\phi_c(\hat{y})] \! - \! \phi_c(y) \big\|^2\Big]$, $\mathrm{Var}[\phi_c(y)|c] = \mathbb{E}_{y\sim p(\cdot|c)}\Big[\big\|\phi_c(y) - \mathbb{E}_{y'\sim p(\cdot|c)}[\phi_c(y')] \big\| \Big].$

The offset $\mathbb{E}_{c \sim p}[\mathrm{Var}[\phi_c(y)|c]]$ is independent of $\theta$
and captures the per-context variability of the ground-truth features.
Since $\mathcal{L}_{\mathrm{FM}}$ has optimal value zero, $\mathcal{L}_{\mathrm{CFM}}$
equals this offset at optimality.
$\mathcal{L}_{\mathrm{CFM}}$ admits an unbiased estimator from ground-truth pairs $(c,y)$;
we plot these estimates in Figures~\ref{fig:gen_len_sft_fig} and~\ref{fig:gen_len_fig}.
Note that $\mathcal{L}_{\mathrm{FM}}$ and $\mathcal{L}_{\mathrm{CFM}}$ mirror the
(conditional) flow-matching loss functions for continuous generative modeling
\citep{lipman2022flow}.
\paragraph{Why minimize $\mathcal{L}_{\mathrm{FM}}$?} We say that $p_{\theta}$ is \emph{(feature-) calibrated} if $\mathcal{L}_{\mathrm{FM}}(\theta) = 0$, meaning its expected feature embeddings match those of the data for all contexts $c$. If the feature map $\phi$ is rich enough that matching feature moments implies matching distributions, then feature-calibration is equivalent to $p_{\theta} = p$ and $\mathcal{L}_{\mathrm{FM}}$ is a \emph{strictly proper scoring rule} \citep{gneiting2007strictly}. In other words, under a sufficiently expressive feature map, minimizing $\mathcal{L}_{\mathrm{FM}}$ is guaranteed to recover the true conditional distribution: it cannot be fooled by a model that matches some statistics while diverging elsewhere.

\paragraph{Relationship between feature-matching and cross-entropy.}
In practice, we optimize a mixed objective that combines feature matching with standard next-token cross-entropy (CE):
\begin{align*}
    \mathcal{L}(\theta) = \mathcal{L}_{\mathrm{FM}}(\theta) + \gamma\, \mathcal{L}_{\mathrm{CE}}(\theta), \qquad \gamma \ge 0.
\end{align*}

To build intuition, consider the special case of completions of length $G=1$ with one-hot features $\phi_c(y)=e_y \in \{0,1\}^{|\mathcal{V}|}$. Feature matching then reduces to an $\ell_2$ moment-matching loss on the next-token distribution:
\begin{align} \label{eq:FM_one_hot}
    \mathcal{L}_{\mathrm{FM}}(\theta) = \mathbb{E}_{c \sim p}\Big[ \sum_{y \in \mathcal{V}} \big(p_{\theta}(y|c) - p(y|c) \big)^2 \Big],
\end{align}
while CE is
\begin{align} \label{eq:CE_loss_sec_2}
    \mathcal{L}_{\mathrm{CE}}(\theta) = - \mathbb{E}_{c \sim p}\Big[ \sum_{y \in \mathcal{V}} p(y|c)\, \log p_{\theta}(y|c) \Big].
\end{align}
Both losses \emph{share} the same unique minimizer--- the ground-truth distribution $p_{\theta} = p$--- and so does their combination $\mathcal{L}$. However, their landscapes differ: $\mathcal{L}_{\mathrm{FM}}$ penalizes deviations from $p(y|c)$ symmetrically, while $\mathcal{L}_{\mathrm{CE}}$ penalizes underestimation more heavily than overestimation. More importantly, using longer completions (e.g., $G\in\{4,8,16\}$) with richer features lets $\mathcal{L}_{\mathrm{FM}}$ target sequence-level statistics that the token-level CE loss is blind to.  

\paragraph{Constructing the feature map.}
We instantiate $\phi$ as a frozen \emph{feature network} obtained by copying $p_\theta$ at initialization.
Given a concatenated sequence $c\!:\!y$, we take the concatenation of intermediate activations at different depths of the feature network, normalize each block to unit $L^2$ norm, and concatenate them to form $\phi(c\!:\!y)$.
In all experiments, we use layers at depths 25\%, 50\%, and 75\%; the intuition is that earlier layers capture low-level information, final layers are biased toward next-token prediction, and middle layers carry semantic and structural information. We hypothesize that such high-dimensional feature maps are close to satisfying the richness condition above.

\subsection{Feature-matching rewards and on-policy training}
\label{subsec:fm_reinforce}

This subsection derives an unbiased REINFORCE estimator for $\nabla_\theta \mathcal{L}_{\mathrm{FM}}(\theta)$ and describes the practical training recipe summarized in \Cref{alg:outer}.
 
\paragraph{Gradient estimation via REINFORCE.}
Since $\mathcal{L}_{\mathrm{FM}}$ and $\mathcal{L}_{\mathrm{CFM}}$ differ by a
constant independent of $\theta$ per \eqref{eq:L_CFM}, their gradients coincide:
$\nabla_{\theta} \mathcal{L}_{\mathrm{FM}}(\theta) =
\nabla_{\theta} \mathcal{L}_{\mathrm{CFM}}(\theta)$.
Hence, it suffices to estimate the gradient of the per-example loss
\begin{align}
    \mathcal{L}_{\mathrm{CFM}}(\theta;c,y)
    = \big\| \mathbb{E}_{\hat{y} \sim p_{\theta}(\cdot|c)}[\phi_c(\hat{y})]
      \! - \! \phi_c(y) \big\|^2,
    \label{eq:L_FM_surrogate}
\end{align}
which satisfies $\nabla_{\theta} \mathcal{L}_{\mathrm{CFM}}(\theta) =
\mathbb{E}_{(c,y) \sim p}[\nabla_{\theta} \mathcal{L}_{\mathrm{CFM}}(\theta;c,y)]$.
Using the product rule of differentiation and the widely used identity $\nabla_{\theta}\mathbb{E}_{\hat{y}\sim p_\theta}[g(\hat{y})] = \mathbb{E}_{\hat{y}\sim p_\theta}[g(\hat{y})\,\nabla_{\theta}\log p_\theta(\hat{y}\mid c)]$ yields a REINFORCE gradient
\begin{align*}
    \nabla_{\theta} \mathcal{L}_{\mathrm{CFM}}(\theta;c,y)
    = - \mathbb{E}_{\hat{y} \sim p_{\theta}(\cdot|c)} \big[ \nabla_{\theta} \log p_{\theta}(\hat{y}|c)\, r(\hat{y},c)\big],
\end{align*}
where the reward is
\begin{align} \label{eq:r_haty_c}
   \hspace*{-.3cm} r(\hat{y},c) = \underbrace{2 \phi_c(\hat{y})^{\top} \phi_c(y)}_{\text{alignment term}} - \underbrace{2 \phi_c(\hat{y})^{\top} \mathbb{E}_{\tilde{y} \sim p_{\theta}(\cdot|c)}\big[ \phi_c(\tilde{y}) \big]}_{\text{diversity term}}.
\end{align}
We obtain an unbiased estimator of this gradient by sampling $n > 1$ completions $(\hat{y}_j)_{j=1}^{n}$ from $p_{\theta}(\cdot|c)$ and computing
\begin{align}
    \begin{split} \label{eq:REINFORCE_practical}
        &\frac{1}{n} \sum_{j=1}^{n} \nabla_{\theta} \log p_{\theta}(\hat{y}_j|c) r_j, \quad \text{where} \\
        &r_j = 2 \phi_c(\hat{y}_j)^{\top} \phi_c(y) - \frac{2}{n - 1} \sum_{j' \neq j} \phi_c(\hat{y}_j)^{\top} \phi_c(\hat{y}_{j'}).
    \end{split}
\end{align}
As in REINFORCE, it is possible to reduce the variance of this gradient by subtracting from $r_j$ a baseline which is independent from $\hat{y}_j$.
We use REINFORCE leave-one-out (RLOO), but must account for the fact that $r_j$ already depends on the other completions; see \Cref{sec:RLOO_baseline} for the derivation of the REINFORCE gradient and RLOO baseline.

\begin{algorithm}[t]
\caption{One EBFT training iteration.}
\label{alg:outer}
\begin{algorithmic}[1]
\STATE \textbf{Input:} input prompt $c$; ground-truth completion $y$; $n$: samples per prompt; generation length $G$; 
\emph{generator} $p_\theta$; \emph{feature network} $\phi$
\STATE \textbf{Generation:} sample $n$ rollouts of length $G$ from the actor: $(\hat{y}_j)_{j=1}^{n} \sim p_\theta(\cdot \mid c)$
\STATE \textbf{Feature network embeddings:} compute the ground truth feature vector $\phi_c(y)$
and the rollout feature vectors $\big(\phi_c(\hat{y}_j)\big)_{j=1}^{n}$. Whiten the features as in \eqref{eq:whitened_features} if needed.
\STATE \textbf{Feature-matching reward:} For $j=1:n$, compute
\begin{align*}
r_j \! = \! 2 \phi_c(\hat{y}_j)^{\top} \phi_c(y) \! - \! \frac{2}{n \!- \! 1} \! \sum_{j'=1, j' \neq j}^{n} \! \phi_c(\hat{y}_j)^{\top} \phi_c(\hat{y}_{j'}),
\end{align*}
and the RLOO baseline $b^j$ as in \eqref{eq:RLOO_baseline} in \Cref{sec:RLOO_baseline}.
\STATE \textbf{Actor update:} update $p_\theta$ with an RLOO update across $j=1,\dots,n$.
\end{algorithmic}
\end{algorithm}

\paragraph{Energy-Based Fine-Tuning (EBFT) training recipe.}
\label{sec:method}
\Cref{alg:outer} summarizes one EBFT iteration. EBFT uses two models: a \emph{generator} $p_\theta$, which is the model that we want to fine-tune, and a \emph{feature network} $\phi$.
In what follows, we only train the generator; we keep the feature network \emph{frozen}. Given a pair $(c,y)$ of ground truth context and completion of length $G$, we generate $n$ completions $(\hat{y}_j)_{j=1}^{n}$ of the same length, and we feed the concatenated sequences $c\!:\!y$ and $(c\!:\!\hat{y}_j)_{j=1}^{n}$ through the feature network to obtain the feature vectors $\phi_c(y)$ and $(\phi_c(\hat{y}_j))_{j=1}^{n}$, which we use to get the rewards $(r_j)_{j=1}^{n}$ and the RL gradient following equation \eqref{eq:REINFORCE_practical}.

In practice, we introduce additional implementation details which affect how the method is instantiated: an optional \textit{alignment-biased reward} that adjusts the fidelity--diversity trade-off, an efficient \textit{strided block-parallel rollout scheme} for collecting many on-policy samples, and \textit{feature whitening} to improve the conditioning of the feature space. We describe the first two below, and discuss whitening in the following subsection.

\paragraph{Adding an alignment bias.} In some settings, one may prefer higher-fidelity samples at the cost of reduced diversity. This can be achieved by scaling the diversity term of the reward in \eqref{eq:r_haty_c} by a factor $\alpha \in (0,1)$, which biases the objective toward closer alignment with the ground-truth features. We describe this variant in \Cref{sec:alignment_bias} and include experiments in \Cref{app:alpha_gamma_sweep}.

\paragraph{Strided block-parallel rollouts.}
To obtain many on-policy rollouts per training sequence efficiently, we use a strided block-parallel decoding scheme implemented with a custom attention mask (introduced by Quiet-STaR \cite{zelikman2024quietstarlanguagemodelsteach}).
At a high level, this amortizes prefix computation and enables batched feature-network evaluation across many anchored prompts extracted from the same sequence.
We give details and an example in \Cref{sec:strided_parallel}.

\subsection{Feature matching with whitening}
\label{subsec:fm_variants}
When the feature map $\phi$ has correlated or anisotropic 
directions, some dimensions can dominate the feature-matching 
loss. We address this with a whitened variant. For each context 
$c$ and sampled completions $(\hat{y}_j)_{j=1}^{n}$, we estimate 
the second-moment matrix 
$\hat{\Sigma}_{c} = \frac{1}{n}\sum_{j=1}^{n} 
\phi_c(\hat{y}_j)\phi_c(\hat{y}_j)^{\top}$ and define whitened 
features
\begin{align}
    \tilde{\phi}_c(z) = 
(\hat{\Sigma}_{c}^{\dagger})^{1/2}\phi_c(z),
    \label{eq:whitened_features}
\end{align}
where $\dagger$ denotes the Moore--Penrose pseudoinverse. The 
whitened feature-matching loss corresponds to a relaxation of the 
local $\chi^2$ divergence between $p(\cdot\mid c)$ and 
$p_{\theta}(\cdot\mid c)$. Since 
$D_{\mathrm{KL}}(P\|Q) \approx \tfrac{1}{2}D_{\chi^2}(P\|Q)$ 
when $P$ and $Q$ are close, whitened feature matching approximates 
a sequence-level cross-entropy in the fine-tuning regime 
$p_{\theta} \approx p$ (see \Cref{sec:whitening}).
However, whitening with the low-rank estimate 
$\hat{\Sigma}_c$ instead of the true second-moment matrix 
$\Sigma_c$ systematically reduces the norm of the whitened 
ground-truth feature $\tilde{\phi}_c(y)$, weakening the alignment 
signal. We find that normalizing the whitened features \emph{only 
in the alignment term} corrects this, yielding the reward used in 
all whitening experiments in this paper:
\begin{align}
    r_j =
    \underbrace{
    2\,
    \frac{\tilde{\phi}_c(\hat{y}_j)^{\top}\tilde{\phi}_c(y)}
    {\|\tilde{\phi}_c(\hat{y}_j)\|\,\|\tilde{\phi}_c(y)\|}
    }_{\text{normalized alignment term}}
    -
    \underbrace{
    \frac{2}{n-1}\sum_{j' \neq j}
    \tilde{\phi}_c(\hat{y}_j)^{\top}\tilde{\phi}_c(\hat{y}_{j'})
    }_{\text{whitened diversity term}}.
    \label{eq:r_j_whitened_main}
\end{align}
The diversity term is left unnormalized to retain the full 
whitened geometry. Additional variants are studied in 
\Cref{sec:whitening}.

\subsection{Connections with energy-based models and calibration}
\label{subsec:fm_connections}
 
To conclude this section, we further motivate EBFT by showing that, under KL regularization, feature matching admits both a calibration view and an energy-based interpretation.

\paragraph{Energy-based view (via KL regularization).}
As in standard reinforcement learning, one may add a Kullback--Leibler (KL) regularization term to prevent the learned distribution from deviating too far from a reference distribution $q(\cdot| c)$. Consider the KL-regularized objective
\begin{align}
\begin{split} \label{eq:fm_kl}
&\min_{\rho} 
\;\; \mathbb{E}_{c \sim p} \Big[
\big\| \mathbb{E}_{\rho(\cdot| c)}[\phi_c(y)] - \mathbb{E}_{p(\cdot| c)}[\phi_c(y)] \big\|^2
\\ &\qquad\qquad\quad +\; {\textstyle \frac{1}{\beta}} \, 
D_{\mathrm{KL}}
\!\left(\rho(\cdot| c)\,\|\,q(\cdot|c)\right) \Big],
\end{split}
\end{align}
where $\beta>0$ controls the strength of the regularization.
Although we do not include this KL term in our experiments, it provides a useful interpretation of EBFT.
In particular, the solution to \eqref{eq:fm_kl} has the form of an exponential tilt of the base distribution,
\begin{align*}
\rho^{\star}(y|c) \propto q(y|c)\,\exp\!\big(-\chi_c^{\top}\phi_c(y)\big),
\end{align*}
for a context-dependent vector $\chi_c \in \mathbb{R}^d$.
Intuitively, $\chi_c$ is the tilt direction that assigns the most probability to
completions actually observed in the data, subject to a size constraint on
$\|\chi_c\|$; see Theorem~\ref{thm:EBFT_KL}
for the precise statement.
This is precisely the maximum-likelihood problem for an energy-based model with
energy function $E(y,c) = \chi_c^{\top}\phi_c(y)$, motivating the term
\emph{energy-based fine-tuning}.
Importantly, EBFT does not explicitly parameterize or learn $\chi$; instead, it directly optimizes the generator parameters via feature-matching gradients.
We provide a detailed derivation of this connection in \Cref{sec:KL_regularization}.

\begin{table*}[t]
\centering
\normalsize
\setlength{\tabcolsep}{5pt}
\renewcommand{\arraystretch}{1.15}

\begin{tabular}{l cccccc cccccc}
\toprule
& \multicolumn{6}{c}{\textbf{Q\&A Coding}} 
& \multicolumn{6}{c}{\textbf{Unstructured Coding}} \\
\cmidrule(lr){2-7}\cmidrule(lr){8-13}
\textbf{Method} 
& {\footnotesize \textbf{CE}} & {\footnotesize \textbf{FM}} & {\footnotesize \textbf{greedy}} & {\footnotesize \textbf{pass@1}} & {\footnotesize \textbf{pass@4}} & {\footnotesize \textbf{pass@16}}
& {\footnotesize \textbf{CE}} & {\footnotesize \textbf{FM}} & {\footnotesize \textbf{greedy}} & {\footnotesize \textbf{pass@1}} & {\footnotesize \textbf{pass@4}} & {\footnotesize \textbf{pass@16}} \\
\midrule
Base         & 
0.338 & 
0.361 & 0.484 & 0.424 & 0.606 & 0.715 & 0.631 & 
0.369 & 0.473 & 0.419 & 0.596 & 0.702 \\
Warm start    & 
0.301 & 
0.344 & 0.483 & 0.440 & 0.611 & 0.723 & 0.499 & 
0.317 & 0.508 & 0.458 & 0.638 & 0.743 \\
\midrule
SFT          & 
0.289 & 
0.315 & 0.483 & 0.455 & 0.617 & 0.728 & 0.501 & 
0.321 & 0.504 & 0.467 & 0.644 & 0.747 \\
EBFT         & 
0.207 & 
0.258
& \textbf{0.548} & 0.510 & 0.659 & \textbf{0.771} & 0.499 &
0.320 & \textbf{0.548} & \textbf{0.524} & \textbf{0.664} & \textbf{0.769} \\
EBFT (ws.)   & 
\textbf{0.190} & 
\textbf{0.255} & 0.534 & 0.508 & 0.658 & 0.756 & \textbf{0.481} &  
\textbf{0.312} & 0.536 & 0.514 & 0.659 & \textbf{0.769} \\
\midrule
RLVR         & 
0.774 & 
0.442 & 0.535 & 0.510 & 0.660 & 0.752 & -- & -- & -- & -- & -- & -- \\
RLVR (ws.)   & 
0.389 & 
0.402 & 0.524 & \textbf{0.529} & \textbf{0.662} & 0.749 & -- & -- & -- & -- & -- & -- \\
\bottomrule
\end{tabular}

\vspace{2mm}
 
\begin{tabular}{l cc cccc cccc}
\toprule
& \multicolumn{2}{c}{

} 
& \multicolumn{4}{c}{\textbf{Translation (COMET)}} 
& \multicolumn{4}{c}{\textbf{Translation (BLEU)}} \\
\cmidrule(lr){2-3}\cmidrule(lr){4-7}\cmidrule(lr){8-11}
\textbf{Method} 
& {\footnotesize \textbf{CE}} & {\footnotesize \textbf{FM}}
& {\footnotesize \textbf{greedy}} & {\footnotesize \textbf{best-of-1}} & {\footnotesize \textbf{best-of-4}} & {\footnotesize \textbf{best-of-16}}
& {\footnotesize \textbf{greedy}} & {\footnotesize \textbf{best-of-1}} & {\footnotesize \textbf{best-of-4}} & {\footnotesize \textbf{best-of-16}} \\
\midrule
Base         & 1.870 & 0.637 & 0.644 & 0.611 & 0.701 & 0.745 & 0.074 & 0.124 & 0.186 & 0.231 \\
Warm start   & 2.647 & 0.695 & 0.711 & 0.691 & 0.759 & 0.793 & 0.158 & 0.169 & 0.233 & 0.279 \\
\midrule
SFT          & 1.782 & 0.690 & 0.717 & 0.696 & 0.761 & 0.795 & 0.160 & 0.172 & 0.235 & 0.280 \\
EBFT         & \textbf{1.670} & \textbf{0.578} & 0.725 & 0.713 & 0.765 & 0.795 & 0.182 & 0.194 & 0.244 & 0.283 \\
EBFT (ws.)   & 1.671 & 0.580 & \textbf{0.734} & \textbf{0.724} & \textbf{0.772} & \textbf{0.800} & 0.185 & 0.197 & \textbf{0.247} & \textbf{0.286} \\
\midrule
RLVR         & 2.454 & 0.641 & 0.697 & 0.691 & 0.735 & 0.761 & 0.176 & 0.194 & 0.226 & 0.248 \\
RLVR (ws.)   & 2.311 & 0.718 & 0.724 & 0.718 & 0.759 & 0.781 & \textbf{0.195} & \textbf{0.210} & 0.245 & 0.269 \\
\bottomrule
\end{tabular}

\vspace{2mm}
\caption{\textbf{EBFT outperforms SFT and matches or exceeds RLVR on downstream metrics, while achieving the best distributional calibration across all tasks.} Best results per method on Q\&A coding, unstructured coding, and translation. CE: validation cross-entropy; FM: feature-matching loss (both lower is better). ``ws.'': warm-started from an SFT checkpoint. RLVR is inapplicable to unstructured coding where no verifier exists; EBFT still yields substantial gains over SFT in this setting. See \Cref{tab:per_benchmark_results} for per-benchmark results and \Cref{sec:experiments} for full experimental details.} 
\label{tab:best_prefix_aggregate}
\vspace{-10pt}
\end{table*}

\paragraph{Calibration view: KL projection onto moment constraints.}
The KL-regularized objective also has a calibration interpretation.
Suppose we want a distribution that matches a target statistic
$\mathbb{E}_{p(\cdot\mid c)}[f(y,c)] = m$
while staying close to a base distribution $q(\cdot\mid c)$.
The solution to this constrained KL minimization is an exponential tilt,
\begin{align}
p_{\chi}(y\mid c) \propto \exp\!\big(\chi_c^{\top} f(y,c)\big)\, q(y\mid c),
\end{align}
where $\chi_c$ is chosen so that the moment constraint is satisfied.
\citet{braverman2019calibration} use this principle to correct entropy-rate drift
in language model generations, 
applying a \emph{scalar} tilt with
$f(y,c) = -\log p_{\theta}(y \mid c)$ (the negative log-probability of the model).
EBFT performs the same type of correction, but with $f(y,c) = -\phi_c(y)$,
enforcing \emph{high-dimensional} moment constraints in a semantically rich
feature space rather than a single scalar statistic.

\begin{figure*}[!t]
    \centering
    \includegraphics[width=0.99\linewidth]{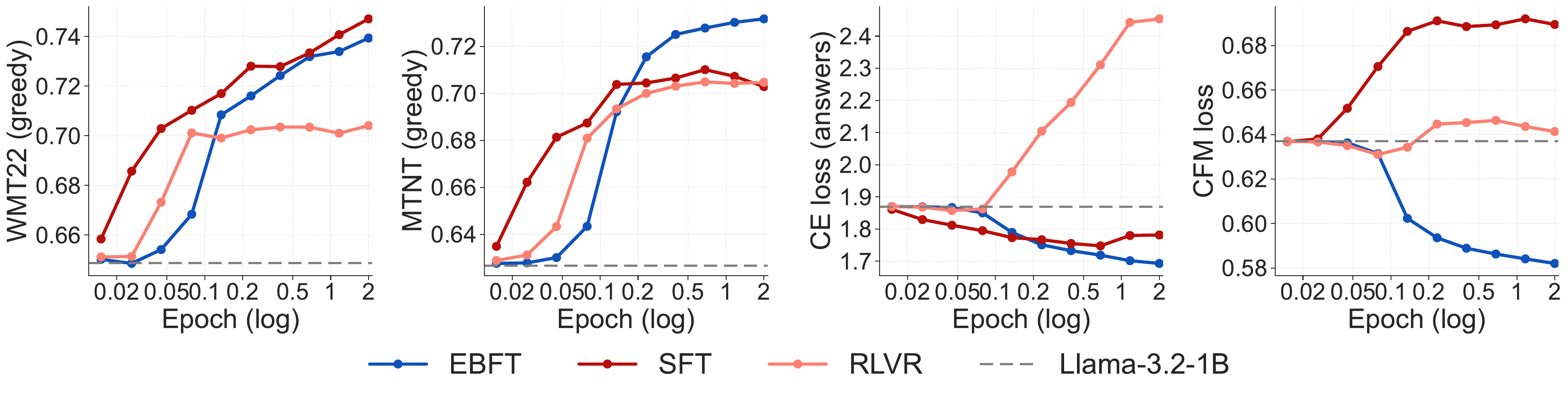}
    \caption{
   \textbf{On translation, EBFT outperforms both SFT and RLVR on downstream accuracy, cross-entropy, and feature-matching loss.} From left to right, we plot COMET scores on WMT22 and MTNT, validation cross-entropy, and CFM loss over training for Llama-3.2-1B fine-tuned on ALMA~\citep{xu2023paradigm}. EBFT achieves the lowest CE and CFM losses and matches SFT on WMT22 while clearly outperforming it on MTNT. RLVR underperforms SFT on all four metrics, with cross-entropy rising well above the base model (dashed line).}
    
    \label{fig:translation_main}
    \vspace{-10pt}
\end{figure*}

\begin{figure*}[!t]
    \centering
    \includegraphics[width=\linewidth]{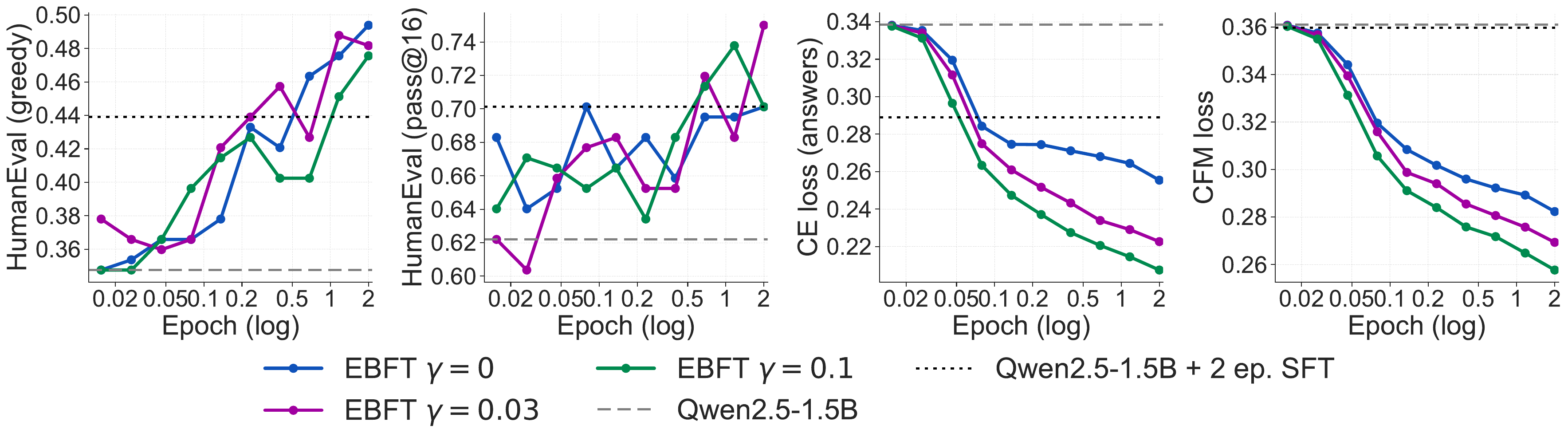}
    \caption{\textbf{The CE regularization weight $\gamma$ controls cross-entropy reduction without affecting downstream performance or feature-matching loss.} We ablate EBFT with $\gamma \in \{0, 0.03, 0.1\}$ on Qwen2.5-1.5B. Dashed and dotted lines indicate the base model and 2-epoch SFT, respectively. Larger $\gamma$ accelerates CE reduction while downstream accuracy and CFM loss remain nearly identical across settings. Even pure feature matching ($\gamma=0$) surpasses SFT on cross-entropy, confirming that the two objectives are aligned rather than in tension.}
\label{fig:sweep_gamma_ce_moment_downstream}
    \vspace{-10pt}
\end{figure*}

\section{Experimental protocol}
\label{sec:experiments}

\subsection{Tasks and metrics}
We evaluate EBFT on tasks spanning both \emph{verifiable} settings, where a correctness signal exists and RLVR can be applied, and \emph{non-verifiable} settings, where no such signal is available and SFT is typically the only option. For all tasks, we train on subsets of the full datasets to enable controlled comparisons across methods under a fixed compute budget.

\paragraph{Coding tasks.}
We consider two complementary training regimes: (a)  Q\&A coding uses a 100k-sample subset of OpenCodeInstruct~\citep{ahmad2025opencodeinstruct}, consisting of natural-language programming prompts paired with reference solutions, and (b)  Unstructured coding uses a 40k-sample subset of SwallowCode~\citep{fujii2025rewritingpretrainingdataboosts}, containing raw Python code without explicit instructions. The former is a verifiable setting (solutions can be checked against unit tests); the latter is not, as there is no correctness signal for raw code continuation, making RLVR inapplicable.

We evaluate on HumanEval~\citep{austin2021program}, MBPP~\citep{chen2021codex}, and MultiPL-E~\citep{cassano2023multipl}, reporting greedy accuracy (temperature 0) as well as \texttt{pass@1}, \texttt{pass@4}, and \texttt{pass@16} accuracy at temperature 0.6. For models trained on Q\&A coding data, HumanEval and MBPP can be considered in-distribution benchmarks, since OpenCodeInstruct contains similar instruction-solution pairs. For models trained on unstructured code, both benchmarks are out-of-distribution, as SwallowCode contains raw Python without explicit prompts or test cases. MultiPL-E translates HumanEval problems into many programming languages; we evaluate on eight of them (C++, JavaScript, TypeScript, Rust, C\#, Go, PHP, and Java). Since all training data is Python-only, MultiPL-E is out-of-distribution for both training regimes and serves primarily as a test of cross-lingual transfer.
\paragraph{Translation.}
We train on a 100k subset of ALMA-Human-Parallel~\citep{xu2023paradigm,xu2024contrastive}, consisting of human-curated parallel sentence pairs. Following \citet{xu2023paradigm}, we use WMT'22 as our primary evaluation benchmark, which covers news and general-domain translation. To test out-of-distribution robustness, we additionally evaluate on two challenging benchmarks. MTNT~\citep{michel2018mtnt} consists of noisy Reddit comments featuring typos, slang, and code-switching, while OpenSubtitles~\citep{lison2016opensubtitles2016} contains short, informal movie and TV dialogue. Both are stylistically far from the clean, formal parallel sentences in ALMA, making them challenging out-of-distribution benchmarks. We report COMET scores in the main text and BLEU in the appendix. For best-of-$k$ evaluation ($k \in \{1, 4, 16\}$, temperature 0.6), we report the per-instance maximum aggregated over the test set.

In addition to downstream task metrics, we track validation cross-entropy and feature-matching loss throughout training on 1k-sample held-out subsets of the respective training datasets (OpenCodeInstruct for coding, ALMA for translation), as these quantities are central to our analysis.

\subsection{Baselines and methods}
We evaluate three methods: (a) standard CE fine-tuning (SFT); (b) RLVR, where the reward is whether the generated code passes all unit tests for Q\&A coding, and BLEU score for translation; and (c) EBFT with a frozen feature network. RLVR is only applicable to Q\&A coding and translation, where verifiable rewards exist. All methods are initialized from the base pre-trained model (Qwen2.5-1.5B~\citep{qwen2025qwen25technicalreport} for coding and Llama3.2-1B~\citep{grattafiori2024llama} for translation). All EBFT runs use whitening as described in \Cref{eq:whitened_features}. We run all methods for 2 epochs. As an additional variant, we report results for EBFT and RLVR initialized from a \textit{warm-start} checkpoint obtained after one epoch of SFT, followed by one epoch of EBFT or RLVR. We include this setting because, as we show in \Cref{sec:results}, RLVR requires a warm-start to achieve competitive downstream performance. Hyperparameter details are provided in \Cref{sec:hyperparams}.

\section{Experimental results}
\label{sec:results}

We evaluate EBFT against SFT and RLVR on Q\&A coding, unstructured coding, and translation. The main finding is that EBFT consistently matches or exceeds RLVR on downstream accuracy while achieving the best cross-entropy and feature-matching losses across all tasks --- avoiding the tradeoff between task performance and distributional quality that characterizes RLVR. \Cref{tab:best_prefix_aggregate} summarizes the best results per method; Figures~\ref{fig:front_figure} and~\ref{fig:translation_main} show training dynamics for representative runs; Figures~\ref{fig:sweep_gamma_ce_moment_downstream}--\ref{fig:humaneval_scale} report ablations. Full results across hyperparameter sweeps are provided in \Cref{sec:add_experimental_results}.

\subsection{Main results}

\paragraph{EBFT matches RLVR and outperforms SFT on downstream accuracy.}
On Q\&A coding (\Cref{tab:best_prefix_aggregate,fig:front_figure}), EBFT outperforms SFT by a wide margin across all decoding strategies (e.g., greedy: 0.548 vs 0.483, pass@16: 0.771 vs 0.728) and matches or exceeds RLVR (greedy: 0.548 vs 0.535, pass@16: 0.771 vs 0.752), despite not using any correctness signal. On unstructured code (\Cref{tab:best_prefix_aggregate}), where RLVR is inapplicable, EBFT similarly outperforms SFT across all metrics (pass@1: 0.524 vs 0.467, pass@16: 0.769 vs 0.747). On translation (\Cref{tab:best_prefix_aggregate}), EBFT outperforms both SFT and RLVR on COMET scores across all decoding strategies (e.g., greedy: 0.725 vs 0.717 for SFT and 0.697 for RLVR).

\paragraph{EBFT achieves lower cross-entropy than SFT, while RLVR degrades it.}
A striking finding is that EBFT reduces the validation cross-entropy more than SFT, even though SFT explicitly optimizes this objective. On Q\&A coding, EBFT achieves a validation CE of 0.207 compared to 0.289 for SFT (\Cref{tab:best_prefix_aggregate}) and \Cref{fig:front_figure} shows that this gap widens steadily over training. On translation, EBFT similarly outperforms SFT on CE (1.670 vs 1.782). On unstructured code, the two methods are comparable (0.499 vs 0.501). We attribute this counterintuitive result to EBFT with whitening approximately optimizing a relaxation of the $\chi^2$ divergence, which is locally equivalent to the KL divergence when the model is close to the data distribution (see \Cref{subsec:fm_variants}). RLVR exhibits the opposite behavior: its validation CE increases throughout training (\Cref{fig:front_figure}), reaching 0.774 on Q\&A coding and 2.454 on translation --- both substantially worse than the base model (0.338 and 1.870, respectively). This confirms that reward-driven optimization can improve downstream accuracy at the cost of severely degrading the model's language modeling quality, a tradeoff that EBFT avoids entirely.

\begin{figure*}[!t]
    \centering
    \includegraphics[width=0.99\linewidth]{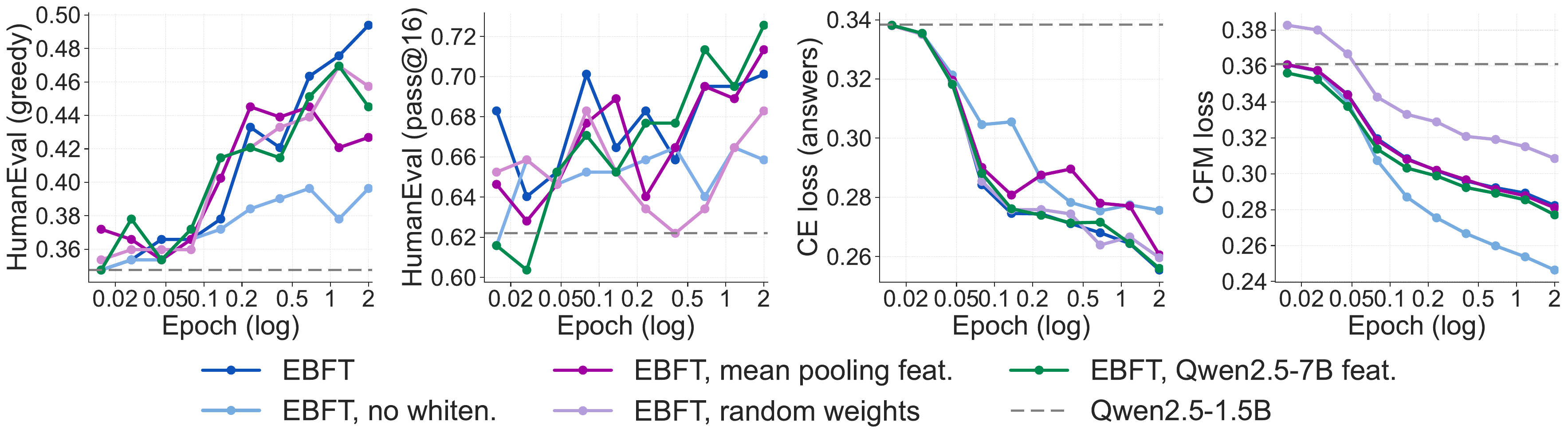}
    \caption{\textbf{Feature network ablations: whitening and last-token pooling matter most; scaling the feature network does not help.} HumanEval accuracy, validation cross-entropy, and CFM loss over training for EBFT ($\gamma=0$) on Qwen2.5-1.5B with different feature network configurations. The default (last-token features with whitening from a frozen 1.5B copy) achieves the best downstream accuracy and CFM loss. Removing whitening and mean pooling cause the largest degradations. Random weights hurt only modestly, and replacing the 1.5B feature network with a frozen Qwen2.5-7B yields similar results, suggesting that pre-trained representations help but that naively scaling the feature network does not.}\label{fig:humaneval_feature_network}
\end{figure*}

\begin{figure*}[!t]
    \centering
    \includegraphics[width=0.99\linewidth]{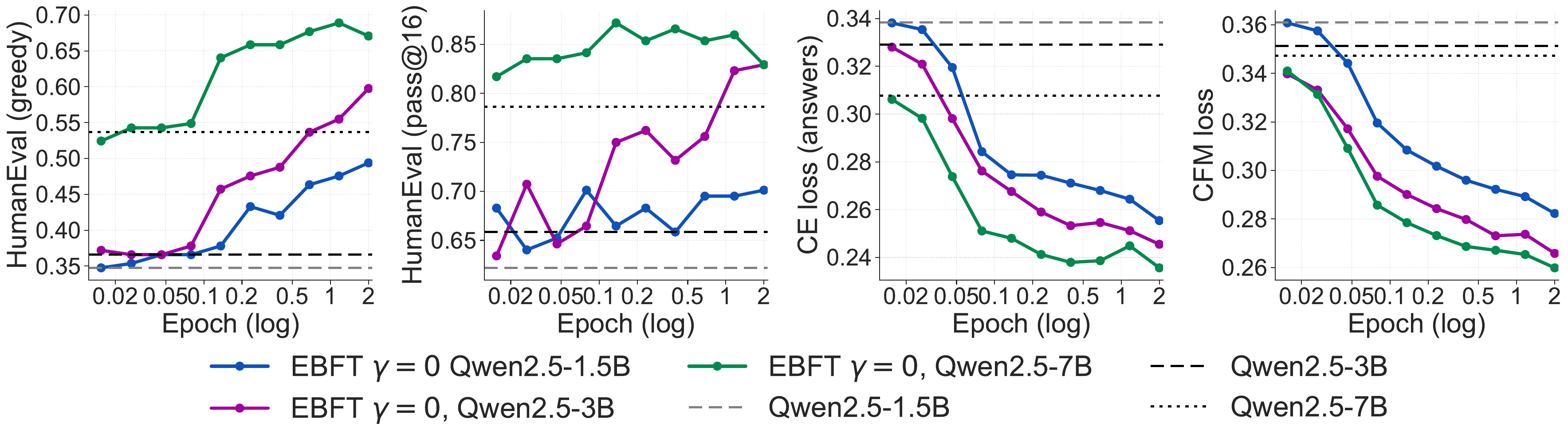}
    \caption{\textbf{EBFT improvements are consistent across model scales.} HumanEval accuracy, validation cross-entropy, and CFM loss over training for EBFT ($\gamma=0$) applied to Qwen2.5-1.5B, 3B, and 7B. Each model uses a frozen copy of itself as the feature network. Dashed lines indicate base model performance. All three scales show substantial and qualitatively similar improvements across all four metrics, with no sign of diminishing returns.}
    \label{fig:humaneval_scale}
\end{figure*}

\paragraph{EBFT achieves the lowest feature-matching loss.}
While it is natural to expect improvements on the feature-matching metric that EBFT directly optimizes, the margins are informative. 
On Q\&A coding, EBFT achieves a feature-matching loss of 0.258, compared to 0.315 for SFT and 0.442 for RLVR (\Cref{tab:best_prefix_aggregate}). RLVR not only fails to improve this metric but actively worsens it relative to the base model (0.361), and \Cref{fig:front_figure} shows that this degradation accelerates over training. On translation, EBFT achieves the largest improvement (0.578 vs 0.690 for SFT and 0.641 for RLVR), while on unstructured code EBFT and SFT are comparable (0.320 vs 0.321). As shown in \Cref{fig:gen_len_fig}, this improvement holds across all completion lengths and extends well beyond the 8-token rollout horizon used during training, suggesting that EBFT improves calibration of the rollout distribution broadly rather than overfitting to the training sequence length.

\paragraph{The CE loss coefficient $\gamma$ improves CE loss without sacrificing downstream performance or feature matching.}
\Cref{fig:sweep_gamma_ce_moment_downstream} compares EBFT with $\gamma \in \{0, 0.03, 0.1\}$ on Q\&A coding. The three settings achieve nearly identical feature-matching loss trajectories and comparable downstream performance on HumanEval, but differ markedly in how fast the validation cross-entropy decreases: larger $\gamma$ drives it down faster, with $\gamma = 0.1$ reaching a CE of approximately 0.21 compared to 0.25 for $\gamma = 0$. All three settings surpass the 2-epoch SFT baseline on cross-entropy, confirming that even pure feature matching ($\gamma = 0$) reduces CE more effectively than directly optimizing it. The absence of any tension between these objectives is expected from a theoretical standpoint: as mentioned in \Cref{subsec:fm_loss}, $\mathcal{L}_{\mathrm{FM}}$ and $\mathcal{L}_{\mathrm{CE}}$ share the same minimizer (the ground-truth distribution $p$), so optimizing feature matching naturally drives the cross-entropy down as well. The role of $\gamma$ is simply to control how aggressively the CE loss is minimized, at no cost to calibration or downstream accuracy.
\paragraph{EBFT generalizes better than SFT to out-of-distribution benchmarks.}
On out-of-distribution coding languages (MultiPL-E benchmark performance in \Cref{tab:per_benchmark_results}), SFT \emph{degrades} performance relative to the base model (greedy: 0.465 vs 0.506), while EBFT yields improvements (0.524). On translation, EBFT outperforms both SFT and RLVR on the noisy MTNT benchmark (greedy COMET: 0.737 vs 0.703 and 0.705), while performing comparably on OpenSubtitles.
\subsection{Ablations}
\paragraph{Feature network ablations: mean pooling and removing whitening hurt most; random weights hurt slightly; a larger feature network has little effect.}
\Cref{fig:humaneval_feature_network} ablates key feature network design choices on Q\&A coding at $\gamma = 0$. The default configuration (last-token features with whitening from a frozen copy of the 1.5B generator) achieves the best downstream performance and lowest feature-matching loss. Mean pooling and removing whitening cause the largest degradations, while random feature network weights hurt only modestly, indicating that pre-trained representations are helpful but not essential. Perhaps surprisingly, replacing the 1.5B feature network with a frozen Qwen2.5-7B produces similar results, suggesting that naively scaling the feature network does not yield additional gains. 

 \paragraph{EBFT improvements scale consistently across model sizes.}
To assess whether EBFT's benefits persist at larger scales, we run EBFT with $\gamma=0$ using Qwen2.5-1.5B, 3B, and 7B as both actor and feature networks, each initialized from the respective base checkpoint. As shown in \Cref{fig:humaneval_scale}, downstream improvements are consistent across model sizes: greedy HumanEval scores increase from approximately 0.49 (1.5B) to 0.60 (3B) to 0.69 (7B), with each model improving substantially over its respective base performance (0.35, 0.37, and 0.55). The same figure shows that both validation cross-entropy and feature-matching losses decrease faster and reach lower absolute values at larger scales, while preserving the same monotonic ordering across runs. These results suggest that EBFT's mechanism, which matches rollout feature statistics to ground-truth statistics, transfers predictably across model scales. 
 
\subsection{Qualitative analysis}
Across both code and translation, EBFT outputs are more \emph{semantically faithful} to the prompt and more \emph{cleanly formatted}. We provide representative generations from HumanEval and MTNT translation in \Cref{app:qual_analysis_code,app:qual_analysis_translation}; here we summarize the main patterns. 
 
Each method exhibits a characteristic failure mode. SFT typically produces structurally reasonable outputs but misses subtle prompt requirements. For instance, when asked to count overlapping substring occurrences, SFT advances by the full substring length and misses overlaps; when asked to return the greatest integer satisfying a condition, SFT returns the first one it finds instead. RLVR often generates plausible logic but fails at the execution level: the generated code calls helper functions like \texttt{is\_prime} without defining them, or includes prose explanations interleaved with code, preventing execution; translations begin with a reasonable output but then drift into multilingual tag lists (e.g., appending "Português:~...", "Spanish:~...") and truncate mid-word. EBFT avoids both failure modes, producing self-contained executable code and clean single-sentence translations that preserve the source meaning. These patterns suggest that the feature-matching objective encourages outputs that are both semantically faithful and cleanly formatted.

\section{Related Work}

Most language model training pipelines remain centered on next-token maximum likelihood (MLE), with reinforcement learning (RL) typically applied as a post-training step. RLHF-style methods optimize sequence-level rewards while regularizing toward a reference policy, often via a KL constraint \cite{christiano2017deep, ouyang2022training}, and preference-optimization approaches such as DPO can be interpreted as reward maximization under a similar regularization \cite{rafailov2023dpo}. Earlier sequence-level training methods likewise augment cross-entropy training with REINFORCE-style updates, but continue to rely on token-level supervision \cite{ranzato2016sequence, edunov2018classical}.

Recent work has explored using RL earlier in training or framing pretraining objectives in RL terms. RLP \cite{hatamizadeh2025rlp}, Reinforcement Pre-Training (RPT) \cite{dong2025rpt}, and RLPT \cite{li2025rlpt} introduce rewards tied to reasoning traces, information gain, or next-segment prediction. However, in all cases the reward signal is ultimately derived from next-token likelihood or correctness on the pretraining stream, rather than from a distinct semantic objective. Similarly, FlowRL proposes matching the full reward distribution to encourage diversity, but still defines rewards through likelihood-based or task-specific signals \cite{zhu2025flowrl}. 

Closely related are methods that derive reward signals from internal model representations rather than external verifiers. Generative Adversarial Post-Training (GAPT) employs a co-evolving discriminator to mitigate reward hacking in interactive generation \cite{wu2025gapt}. RARO \cite{cai2025escaping} uses a relativistic discriminator within an inverse reinforcement learning framework to recover implicit rewards from expert reasoning demonstrations. Concurrently, RLFR \cite{prasad2026features} trains lightweight probes on internal model activations to detect hallucinated claims and uses the probe output as a reward signal for reinforcement learning. While motivated by different applications, all three methods reduce rich representations to learned, task-specific scalar rewards. In contrast, our approach avoids learned reward models entirely, instead optimizing a fixed, vector-valued feature-matching objective that directly aligns rollout and data distributions in a general-purpose feature space.


Alternative generative frameworks aim to move beyond left-to-right likelihood training. Energy-Based Diffusion Language Models (EDLM) and related energy-based approaches operate at the sequence level \cite{xu2024edlm}, but focus on modeling the data distribution itself rather than defining a feature-space alignment objective for an autoregressive policy. Embedding-based similarity has been widely used for evaluation (e.g., BERTScore \cite{zhang2019bertscore}) and occasionally optimized via RL for metric-driven fine-tuning \cite{rennie2017selfcritical}, but not as a general replacement for teacher-forced token prediction.

In contrast, our approach decouples training from surface-form tokens entirely. We define rewards via a learned feature network and optimize an autoregressive policy using REINFORCE to match generated and ground-truth text in embedding space. This provides dense, semantic feedback that does not depend on next-token log loss, enabling sequence-level optimization that directly targets meaning rather than token reconstruction.
 
\section{Conclusion}
We introduced Energy-Based Fine-Tuning (EBFT), a method that 
fine-tunes language models by matching feature statistics of 
on-policy rollouts to those of ground-truth completions. Across 
Q\&A coding, unstructured coding, and translation, EBFT 
consistently outperforms SFT and matches RLVR on downstream 
accuracy, while achieving the best cross-entropy and 
feature-matching losses. Notably, EBFT reduces cross-entropy more 
than SFT despite not directly optimizing it. Unlike RLVR, EBFT 
requires no task-specific reward or verifier, making it applicable 
in non-verifiable settings where RLVR cannot be used. 

EBFT connects classical ideas from moment matching and 
distribution alignment with modern language model training. By 
operating in a feature space rather than over tokens or scalar 
rewards, it provides a flexible mechanism for shaping 
sequence-level behavior. However, EBFT is rollout-based and 
therefore slower per update than standard SFT, making it most 
suitable as a fine-tuning stage applied after cross-entropy 
training. It also requires a frozen feature network and has so far 
been evaluated on models up to 7B parameters with short rollout 
horizons. Scaling both axes, and exploring learned or adaptive 
feature networks, are promising directions for future work. More 
broadly, we view feature matching as a complementary training 
signal that may help bridge likelihood-based training and 
rollout-based optimization.

\bibliography{references}
\bibliographystyle{icml2026}

\clearpage

\appendix
\onecolumn
\part*{}
\etocsettocstyle{\section*{Appendix Contents}}{}
\etocsettocdepth{subsection}
\localtableofcontents
 
\section{The feature-matching loss profile and its optimal behavior}

Figures \ref{fig:gen_len_sft_fig} and \ref{fig:gen_len_fig} show the conditional feature-matching loss defined in \eqref{eq:L_CFM}, plotted against the completion length $G$. We refer to the function that maps $G$ to the corresponding (conditional) feature-matching loss value as the \emph{(conditional) feature-matching loss profile}.
To compute conditional feature-matching loss values, we extract ground-truth pairs $(c,y)$ from long ground-truth token sequences by \emph{selecting a strided set of prefixes of the sequence as the contexts $c$, and the ensuing windows of length $G$ as the completions $y$}. 
The bias-variance decomposition \eqref{eq:L_CFM} directly implies that the minimum value of $\mathcal{L}_{\mathrm{CFM}}$ is $\mathbb{E}_{c \sim p}\big[ \mathrm{Var}[\phi_c(y)|c] \big]$.
The following lemma shows that $\mathbb{E}_{c \sim p}\big[ \mathrm{Var}[\phi_c(y)|c] \big]$ is non-decreasing with the completion length $G$.

\begin{lemma}[The optimal conditional feature-matching profile] \label{lem:FM_profile_properties}
Consider the assumptions
\begin{enumerate}[label=(\alph*)]
  \item $\phi_c(y) := \phi(c\!:\!y)$ depends on $c$ and $y$ only through their concatenation $c\!:\!y$, which follows the construction in \Cref{subsec:fm_loss}.
  \item Context-completion pairs $(c,y)$ are selected/sampled from long ground-truth sequences as described above, such that for a fixed completion length $G$, concatenations $c:y$ are equally distributed to contexts $c$. Neglecting edge effects that stem from the long ground-truth sequences being finite, this holds for example if contexts are sampled as random prefixes of the ground-truth sequence. 
\end{enumerate}
Then, the optimal conditional feature-matching loss profile $\mathbb{E}_{c \sim p}\big[ \mathrm{Var}[\phi_c(y)|c] \big]$ is non-decreasing with $G$ and admits the bound
\begin{align} \label{eq:FM_bounds}
\mathbb{E}_{c \sim p}\big[ \mathrm{Var}[\phi_c(y)|c] \big]
\leq \mathrm{Var}_{c \sim p}[\phi(c)].
\end{align}
\end{lemma}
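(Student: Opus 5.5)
The plan is to reduce the statement to the law of total variance applied to a single random vector, the full concatenation $c\!:\!y$, conditioned on successively coarser $\sigma$-algebras. Throughout, $\mathrm{Var}$ denotes the trace of the covariance, i.e.\ the expected squared norm of the deviation. The definition after \eqref{eq:L_CFM} omits the square on the norm; I read it with the square, as the bias-variance decomposition requires.

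By assumption (a), $\phi_c(y)=\phi(c\!:\!y)$ is a function of the concatenation alone. Write $Z=c\!:\!y$. For $G=0$ we have $Z=c$ and there is nothing to prove.

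For the bound \eqref{eq:FM_bounds}, apply the law of total variance to $\phi(Z)$ conditioned on $c$:
\begin{align*}
\mathrm{Var}[\phi(Z)] = \mathbb{E}_c\big[\mathrm{Var}[\phi(Z)\mid c]\big] + \mathrm{Var}_c\big[\mathbb{E}[\phi(Z)\mid c]\big] \ge \mathbb{E}_c\big[\mathrm{Var}[\phi_c(y)\mid c]\big].
\end{align*}
Assumption (b) says that $Z$ has the same distribution as a context $c$, so $\mathrm{Var}[\phi(Z)]=\mathrm{Var}_{c\sim p}[\phi(c)]$, which gives \eqref{eq:FM_bounds}.

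For monotonicity in $G$, compare lengths $G$ and $G+1$. The key step is a coupling: by (b), a context-completion pair of length $G+1$ can be realized as $(c, y_{1:G+1})$, and $(c,y_{1:G})$ is then a valid pair of length $G$ under the same sampling of contexts. Both completions are drawn from the same sampled prefix, so the context distribution does not depend on $G$. Write $V_G(c)=\mathrm{Var}[\phi(c\!:\!y_{1:G})\mid c]$. I would then express $V_{G+1}(c)$ through the law of total variance, conditioning additionally on $y_{1:1}$ (the first completion token), so that $c' = c\!:\!y_1$ becomes a new context. This gives
\begin{align*}
\mathbb{E}_c[V_{G+1}(c)] \ge \mathbb{E}_{c}\,\mathbb{E}\big[\mathrm{Var}[\phi(c'\!:\!y_{2:G+1})\mid c']\,\big|\, c\big] = \mathbb{E}_{c'}[V_G(c')].
\end{align*}
By (b), $c'=c\!:\!y_1$ is distributed as a context, so the right-hand side equals $\mathbb{E}_c[V_G(c)]$. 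This is the same trick as the bound, shifted by one token. Induction over $G$ then gives the non-decreasing property.

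The main obstacle is making precise that the shifted context $c'$ is distributed like $c$ \emph{and} that the conditional law of $y_{2:G+1}$ given $c'$ is the true $p(\cdot\mid c')$. Both should follow from the prefix-sampling picture in (b), since every prefix of a ground-truth sequence is a context and its continuation is its ground-truth continuation, neglecting edge effects. I would state this explicitly as the stationarity consequence of (b) before running the two variance decompositions.
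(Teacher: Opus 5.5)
Your proposal is correct and is essentially the paper's proof. The paper expands the conditional variance as $\mathbb{E}\|\phi\|^2 - \|\mathbb{E}\phi\|^2$ and applies Jensen's inequality, which is your law-of-total-variance step in another form. For monotonicity it conditions on the first $G-G'$ completion tokens $y''$ and uses (b) to treat $c\!:\!y''$ as a fresh context, splitting off all extra tokens at once where you induct one token at a time; for the bound it compares against the unconditional mean, and your squared-norm reading of $\mathrm{Var}$ matches what the paper's proof actually uses.
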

\begin{proof}
Consider completion lengths $1 \leq G' \leq G$. Let $y, \hat{y}$ denote sequences of length $G$, $y', \hat{y}'$ completions of length $G'$, and $y'', \hat{y}''$ completions of length $G'' = G - G'$, which means that we write the optimal conditional feature-matching loss at completion lengths $G'$ and $G$ as $\mathbb{E}_{c \sim p}\big[ \mathrm{Var}_{y' \sim p(\cdot|c)}[\phi(c\!:\!y')|c] \big]$ and $\mathbb{E}_{c \sim p}\big[ \mathrm{Var}_{y \sim p(\cdot|c)}[\phi(c\!:\!y)|c] \big]$, respectively. Observe that
\begin{align}
\begin{split} \label{eq:optimal_2}
    \mathbb{E}_{c \sim p}\big[ \mathrm{Var}_{y \sim p(\cdot|c)}[\phi_c(y)|c] \big] &= \mathbb{E}_{(c,y) \sim p} \Big[ \big\| \phi_c(y) \big\|^2\Big] - \mathbb{E}_{c \sim p} \Big[ \big\| \mathbb{E}_{y \sim p(\cdot|c)}[\phi_c(y)] \big\|^2\Big] \\ &= \mathbb{E}_{c \sim p} \Big[ \big\| \phi(c) \big\|^2\Big] - \mathbb{E}_{c \sim p} \Big[ \big\| \mathbb{E}_{y'' \sim p(\cdot|c)}\big[ \mathbb{E}_{y' \sim p(\cdot|c:y'')}[\phi(c\!:\!y''\!:\!y')] \big] \big\|^2\Big]
\end{split}    
\end{align}
Here, the second equality holds because $c:y$ is equally distributed to $c$ by Assumption \emph{(b)}, and using the tower property of expectation together with the decomposition $y = y''\!:\!y'$.
By Jensen's inequality, we have that 
\begin{align}
\begin{split}
 &\mathbb{E}_{c \sim p} \Big[ \big\| \mathbb{E}_{y'' \sim p(\cdot|c)}\big[ \mathbb{E}_{y' \sim p(\cdot|c:y'')}[\phi(c\!:\!y''\!:\!y')] \big] \big\|^2\Big] \leq \mathbb{E}_{c \sim p} \Big[ \mathbb{E}_{y'' \sim p(\cdot|c)}\Big[ \big\| \mathbb{E}_{y' \sim p(\cdot|c:y'')}[\phi(c\!:\!y''\!:\!y')] \big\|^2 \Big] \Big] \\ &= \mathbb{E}_{(c,y'') \sim p}\Big[ \big\| \mathbb{E}_{y' \sim p(\cdot|c:y'')}[\phi(c\!:\!y''\!:\!y')] \big\|^2 \Big] = \mathbb{E}_{c \sim p}\Big[ \big\| \mathbb{E}_{y' \sim p(\cdot|c)}[\phi(c\!:\!y')] \big\|^2 \Big] 
\end{split}
\end{align}
Plugging this back into the right-hand side of \eqref{eq:optimal_2} yields
\begin{align}
\begin{split}
    \mathbb{E}_{c \sim p}\big[ \mathrm{Var}_{y \sim p(\cdot|c)}[\phi_c(y)|c] \big] &\geq \mathbb{E}_{c \sim p} \Big[ \big\| \phi(c) \big\|^2\Big] - \mathbb{E}_{c \sim p}\Big[ \big\| \mathbb{E}_{y' \sim p(\cdot|c)}[\phi(c\!:\!y')] \big\|^2 \Big] \\ &= \mathbb{E}_{(c,y) \sim p} \Big[ \big\| \phi_c(y) \big\|^2\Big] - \mathbb{E}_{c \sim p}\Big[ \big\| \mathbb{E}_{y' \sim p(\cdot|c)}[\phi(c\!:\!y')] \big\|^2 \Big] \\ &= \mathbb{E}_{c \sim p}\big[ \mathrm{Var}_{y' \sim p(\cdot|c)}[\phi_c(y')|c] \big],
\end{split}
\end{align}
which concludes the proof that the optimal conditional feature-matching loss is non-decreasing with the completion length $G$. To prove the bound \eqref{eq:FM_bounds}, we apply Jensen's inequality in the opposite direction:
\begin{align}
    \mathbb{E}_{c \sim p} \Big[ \big\| \mathbb{E}_{y \sim p(\cdot|c)}[\phi_c(y)] \big\|^2\Big] \geq \big\| \mathbb{E}_{(c,y) \sim p} [ \phi_c(y) ] \big\|^2,
\end{align}
and plugging this into \eqref{eq:optimal_2} yields
\begin{align}
\begin{split}
    &\mathbb{E}_{c \sim p}\big[ \mathrm{Var}_{y \sim p(\cdot|c)}[\phi_c(y)|c] \big] \leq \mathbb{E}_{(c,y) \sim p} \Big[ \big\| \phi_c(y) \big\|^2\Big] - \big\| \mathbb{E}_{(c,y) \sim p} [ \phi_c(y) ] \big\|^2 \\ &= \mathrm{Var}_{(c,y) \sim p}[\phi(c\!:\!y)] = \mathrm{Var}_{c \sim p}[\phi(c)]
\end{split}
\end{align}
\end{proof}

\section{Feature matching with whitening}
\label{sec:whitening}

This section motivates a \emph{whitened} variant of feature matching by connecting standard cross-entropy training to a local $\chi^2$ objective. Then, we relax a variational formulation of the $\chi^2$ divergence by restricting the function space to a generalized linear model space corresponding to a chosen feature map. The resulting optimization problem admits a closed form and corresponds to the feature-matching loss with whitening, which amounts to premultiplying the feature vectors by the inverse of the matrix of second moments. However, in practice we only have access to a low-rank approximation of this matrix, which means that we can only compute a pseudo-inverse. We describe different empirical loss variants that we tried, including the one that we used to obtain the results in the main paper.
 
\subsection{Relating cross-entropy training to a $\chi^2$ divergence objective}
Fix a context $c$ and consider completions $y\in\mathcal{V}^{G}$ (for some completion length $G$). Let $p(\cdot\mid c)$ denote the ground-truth conditional distribution over completions and $p_{\theta}(\cdot\mid c)$ the model distribution. Cross-entropy training minimizes the conditional KL divergence
\begin{align}
    D_{\mathrm{KL}}\!\big(p(\cdot\mid c)\,\|\,p_{\theta}(\cdot\mid c)\big)
    &= \sum_{y \in \mathcal{V}^{G}} p(y\mid c)\log \frac{p(y\mid c)}{p_{\theta}(y\mid c)}.
    \label{eq:kl_def_whitening}
\end{align}
Rewriting \eqref{eq:kl_def_whitening} as an expectation under $p_{\theta}$ gives
\begin{align}
    D_{\mathrm{KL}}\!\big(p(\cdot\mid c)\,\|\,p_{\theta}(\cdot\mid c)\big)
    &= \sum_{y \in \mathcal{V}^{G}} p_{\theta}(y\mid c)\,\frac{p(y\mid c)}{p_{\theta}(y\mid c)}\log \frac{p(y\mid c)}{p_{\theta}(y\mid c)}.
    \label{eq:kl_rewrite_whitening}
\end{align}
The first-order Taylor expansion of $x \mapsto x\log x$ around $x=1$ is
\begin{align}
    x\log x = (x-1) + \tfrac{1}{2}(x-1)^2 + O\big((x-1)^3\big).
    \label{eq:taylor_xlogx}
\end{align}
Plugging \eqref{eq:taylor_xlogx} into \eqref{eq:kl_rewrite_whitening} yields
\begin{align}
    \begin{split}
    D_{\mathrm{KL}}\!\big(p(\cdot\mid c)\,\|\,p_{\theta}(\cdot\mid c)\big)
    &= \tfrac{1}{2}\sum_{y\in\mathcal{V}^{G}} p_{\theta}(y\mid c)\Big(\tfrac{p(y\mid c)}{p_{\theta}(y\mid c)}-1\Big)^2 
    + \sum_{y\in\mathcal{V}^{G}} p_{\theta}(y\mid c)\,O\!\Big(\Big(\tfrac{p(y\mid c)}{p_{\theta}(y\mid c)}-1\Big)^3\Big) \\
    &= \tfrac{1}{2}\,D_{\chi^2}\!\big(p(\cdot\mid c)\,\|\,p_{\theta}(\cdot\mid c)\big)
    + \sum_{y\in\mathcal{V}^{G}} p_{\theta}(y\mid c)\,O\!\Big(\Big(\tfrac{p(y\mid c)}{p_{\theta}(y\mid c)}-1\Big)^3\Big),
    \end{split}
    \label{eq:kl_chi2_local}
\end{align}
where the $\chi^2$ divergence is
\begin{align}
    D_{\chi^2}\!\big(p(\cdot\mid c)\,\|\,p_{\theta}(\cdot\mid c)\big)
    := \sum_{y\in\mathcal{V}^{G}} \frac{\big(p(y\mid c)-p_{\theta}(y\mid c)\big)^2}{p_{\theta}(y\mid c)}
    = \mathbb{E}_{Y\sim p_{\theta}(\cdot\mid c)}\!\Big[\Big(\tfrac{p(Y\mid c)}{p_{\theta}(Y\mid c)}-1\Big)^2\Big].
    \label{eq:chi2_def}
\end{align}
When $p(\cdot\mid c)\approx p_{\theta}(\cdot\mid c)$ (so the ratio $p/p_{\theta}$ is close to $1$), the remainder term in \eqref{eq:kl_chi2_local} can be neglected, and we obtain the local approximation
\begin{align}
    D_{\mathrm{KL}}\!\big(p(\cdot\mid c)\,\|\,p_{\theta}(\cdot\mid c)\big)\approx \tfrac{1}{2}\,D_{\chi^2}\!\big(p(\cdot\mid c)\,\|\,p_{\theta}(\cdot\mid c)\big).
    \label{eq:kl_approx_chi2}
\end{align}

\subsection{Relaxing a variational formulation of the $\chi^2$ divergence to a linear feature class}

The representation \eqref{eq:chi2_def} makes explicit that $D_{\chi^2}$ corresponds to an $L^2$ discrepancy in the space of one-hot features. Next, we want to express the $\chi^2$ divergence, or rather a relaxation of it, in terms of generic feature maps. For that, we consider a variational representation of the $\chi^2$ divergence.

\begin{lemma}[Variational representation of the chi-squared divergence]
\label{lem:variational_chi_squared_whitening}
Let $P$ and $Q$ be probability measures on a measurable space $\mathcal{Y}$ such that $P\ll Q$, and write their Radon--Nikodym derivative as $r(y)=\frac{dP}{dQ}(y)$. Then
\begin{align} \label{eq:chi2_variational_whitening}
    D_{\chi^2}(P\|Q)
    = \sup_{f\in L^2(Q), \, \mathbb{E}_{Y \sim Q}[f(Y)] = 0}\Big\{\,2\,(\mathbb{E}_{Y\sim P}[f(Y)] - \mathbb{E}_{Y\sim Q}[f(Y)]) - \mathbb{E}_{Y\sim Q}[f(Y)^2]\,\Big\},
\end{align}
Moreover, the supremum is attained at $f^{\star}(y)=r(y)-1$.
\end{lemma}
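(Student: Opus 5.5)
The plan is to rewrite every expectation under $P$ as an expectation under $Q$ weighted by $r$, and then complete the square in $L^2(Q)$. Since $P\ll Q$, for any $f$ with $rf$ integrable we have $\mathbb{E}_{Y\sim P}[f(Y)] = \mathbb{E}_{Y\sim Q}[r(Y)f(Y)]$. Under the constraint $\mathbb{E}_Q[f]=0$, the objective in \eqref{eq:chi2_variational_whitening} therefore becomes
\begin{align*}
J(f) := 2\,\mathbb{E}_Q[r f] - \mathbb{E}_Q[f^2] = 2\,\mathbb{E}_Q[(r-1) f] - \mathbb{E}_Q[f^2].
\end{align*}
The second equality uses the constraint once more: we may subtract $2\,\mathbb{E}_Q[f]=0$. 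The key observation is that $J(f) = \mathbb{E}_Q[(r-1)^2] - \mathbb{E}_Q[(f-(r-1))^2]$. This is a pointwise completion of the square, $2ab - b^2 = a^2 - (a-b)^2$ with $a=r-1$ and $b=f$. Recall that $D_{\chi^2}(P\|Q) = \mathbb{E}_Q[(r-1)^2]$, as in \eqref{eq:chi2_def}.

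First suppose $D_{\chi^2}(P\|Q)<\infty$, i.e.\ $r\in L^2(Q)$. Then every quantity above is finite, and for every admissible $f$ we get $J(f)\le D_{\chi^2}(P\|Q)$. Equality holds exactly when $f=r-1$ $Q$-a.s. The candidate $f^\star = r-1$ is admissible: it lies in $L^2(Q)$, and $\mathbb{E}_Q[r-1] = P(\mathcal{Y})-1 = 0$. Hence the supremum equals $D_{\chi^2}(P\|Q)$ and is attained at $f^\star$, which proves both claims.

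The step I expect to need the most care is integrability and the infinite case. For $f\in L^2(Q)$ and $r\in L^2(Q)$, Cauchy--Schwarz makes $\mathbb{E}_Q[rf]$ finite. If $r\notin L^2(Q)$, then $D_{\chi^2}=\infty$ and we must show the supremum is also $+\infty$. Here I would use truncations $f_M = (r\wedge M) - \mathbb{E}_Q[r\wedge M]$. These are bounded and centered, so they are admissible, and $\mathbb{E}_P[f_M]$ is finite because $f_M$ is bounded. A direct computation gives $J(f_M) = 2\,\mathbb{E}_Q[r(r\wedge M)] - \mathbb{E}_Q[(r\wedge M)^2] - (\mathbb{E}_Q[r\wedge M])^2$. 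Since $r\ge r\wedge M$, this is at least $\mathbb{E}_Q[(r\wedge M)^2] - 1$, using $\mathbb{E}_Q[r\wedge M]\le 1$. By monotone convergence this lower bound tends to $+\infty$ as $M\to\infty$, so the supremum is $+\infty$ as well. This also covers any reading of $\mathbb{E}_P[f]$ for general $f\in L^2(Q)$, since the truncated witnesses are bounded.
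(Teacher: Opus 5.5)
Your proof is correct, and its core is the paper's argument: rewrite $\mathbb{E}_P[f]=\mathbb{E}_Q[rf]$, then complete the square pointwise, $2(r-1)f-f^2=(r-1)^2-(f-r+1)^2$, with equality iff $f=r-1$ $Q$-a.s. The paper does not use the centering constraint at this step, since the objective already subtracts $\mathbb{E}_Q[f]$. Its upper bound therefore holds on all of $L^2(Q)$, and the constraint only needs checking for $f^\star$.

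What you add is the case $r\notin L^2(Q)$, which the paper's proof tacitly excludes. In that case its identity reads $\infty-\infty$, and $\mathbb{E}_P[f]$ need not be defined for every $f\in L^2(Q)$. Your truncation argument therefore genuinely completes the proof. Attainment at $f^\star$ only makes sense in the finite case.

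There is one arithmetic slip in that extra step. With $m=\mathbb{E}_Q[r\wedge M]$, the cross term $2m\,\mathbb{E}_Q[r]$ equals $2m$, not $2m^2$. Hence $J(f_M)=2\,\mathbb{E}_Q[r(r\wedge M)]-\mathbb{E}_Q[(r\wedge M)^2]-2m+m^2$, not $\ldots-m^2$. For example, take $Q$ uniform on two points, $r$ taking the values $2$ and $0$, and $M=1$. Your formula gives $5/4$, which exceeds $D_{\chi^2}(P\|Q)=1$, while the true value is $3/4$.

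The conclusion is unaffected. Since $-2m+m^2=(1-m)^2-1\ge -1$ and $r\ge r\wedge M\ge 0$, you still get $J(f_M)\ge \mathbb{E}_Q[(r\wedge M)^2]-1$. By monotone convergence this tends to $+\infty$.
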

\begin{proof}
Define $g(y)=\frac{dP}{dQ}(y)$. Note that $\mathbb{E}_{Q}[r(Y)]=1$ and
$\chi^{2}(P\|Q)=\mathbb{E}_{Q}\big[(g(Y)-1)^{2}\big]$.
For any $f\in L^{2}(Q)$,
\begin{align}
    2\,(\mathbb{E}_{P}[f(Y)] - \mathbb{E}_{Q}[f(Y)])-\mathbb{E}_{Q}[f(Y)^{2}]
    &= 2\,\mathbb{E}_{Q}[(g(Y)-1)f(Y)]-\mathbb{E}_{Q}[f(Y)^{2}]
    = \mathbb{E}_{Q}\!\big[2(g(Y) - 1)f(Y)-f(Y)^{2}\big].
    \label{eq:proof_start}
\end{align}
Completing the square pointwise gives 
$2 (g - 1)f - f^{2}=-(f-g+1)^{2}+(r-1)^{2}$, 
hence
\begin{align}
    2\,(\mathbb{E}_{P}[f(Y)] - \mathbb{E}_{Q}[f(Y)])-\mathbb{E}_{Q}[f(Y)^{2}]
    = \mathbb{E}_{Q}[(g(Y)-1)^{2}] - \mathbb{E}_{Q}[(f(Y)-g(Y)+1)^{2}]
    \le \mathbb{E}_{Q}[(g(Y)-1)^{2}],
    \label{eq:proof_bound}
\end{align}
with equality iff  
$f=g-1$. Therefore,
\begin{align}
    \sup_{f\in L^{2}(Q)}\Big\{
    2\,(\mathbb{E}_{P}[f(Y)] - \mathbb{E}_{Q}[f(Y)])-\mathbb{E}_{Q}[f(Y)^{2}]
    \Big\}
    = \mathbb{E}_{Q}[(g(Y) - 1)^{2}]
    = D_{\chi^{2}}(P\|Q).
    \label{eq:proof_conclude}
\end{align}
where the optimum is achieved at $f(x) = g(x) - 1$.
\end{proof}

We particularize Lemma \ref{lem:variational_chi_squared_whitening} in the language model setting. 
Let $\varphi:\mathcal{V}^{G}\to\mathbb{R}^{d}$ be a feature map over completions (in our setting, the natural choice is $\varphi(y)=\phi_c(y)$, the feature-network embedding of the concatenated sequence). Restricting the supremum in \eqref{eq:chi2_variational_whitening} to generalized linear model $f_{w}(y)=w^{\top} \varphi(y)$ yields the relaxation
\begin{align}
    D_{\chi^2}(P\|Q)
    \ge \sup_{w\in\mathbb{R}^{d}}
    \Big\{\,2\,\big(\mathbb{E}_{Y\sim P}\big[w^{\top} \varphi(Y)\big] - \mathbb{E}_{Y\sim Q}\big[w^{\top} \varphi(Y)\big] \big)
    - \mathbb{E}_{Y\sim Q}\big[(w^{\top} \varphi(Y))^2\big]\,\Big\}.
    \label{eq:chi2_variational_linear_whitening}
\end{align}
We can rewrite the expression in the supremum as follows:
\begin{align}
    2\,\big(\mathbb{E}_{Y\sim P}\big[w^{\top} \varphi(Y)\big] - \mathbb{E}_{Y\sim Q}\big[w^{\top} \varphi(Y)\big] \big)
    - \mathbb{E}_{Y\sim Q}\big[\big(w^{\top} \varphi(Y) \big)^2\big] = 2 w^{\top} (\mu_{P}-\mu_{Q}) - w^{\top}\Sigma_{Q}w,
\end{align}
where $\mu_{P}=\mathbb{E}_{Y\sim P}[\varphi(Y)]$, $\mu_{Q}=\mathbb{E}_{Y\sim Q}[\varphi(Y)]$ and $\Sigma_{Q} = \mathbb{E}_{Y\sim Q}\big[\varphi(Y)\varphi(Y)^{\top}\big]$.
When $\Sigma_{Q}$ is invertible, the supremum in \eqref{eq:chi2_variational_linear_whitening} is attained at
\begin{align}
    \hat{w}=\Sigma_{Q}^{-1}(\mu_{P}-\mu_{Q}),
    \label{eq:w_opt_whitening}
\end{align}
and the optimal value is 
\begin{align}
    \sup_{w}\{2 w^{\top} (\mu_{P}-\mu_{Q}) -w^{\top} \Sigma_{Q}w\}
    = (\mu_{P}-\mu_{Q})\,\Sigma_{Q}^{-1}(\mu_{P}-\mu_{Q}).
    \label{eq:mahalanobis_value}
\end{align}
Thus, assuming that $\Sigma_{p_{\theta}(\cdot|c)} = \mathbb{E}_{\hat{y} \sim p_{\theta}(\cdot|c)}\big[\phi_c(\hat{y})\phi_c(\hat{y})^{\top}\big]$ is invertible, we define the whitened feature matching loss as
\begin{align} 
\begin{split} \label{eq:L_WFM}
    \mathcal{L}_{\mathrm{WFM}}(\theta) \! &= \!  \mathbb{E}_{c \sim p}\big[ \big( \mathbb{E}_{\hat{y} \sim p_{\theta}(\cdot|c)}[\phi_c(\hat{y})] \! - \! \mathbb{E}_{y \sim p(\cdot|c)}[\phi_c(y)] \big)^{\top} \\&\qquad\qquad\quad \times \mathbb{E}_{\hat{y} \sim p_{\theta}(\cdot|c)}\big[\phi_c(\hat{y})\phi_c(\hat{y})^{\top}\big]^{-1} \big( \mathbb{E}_{\hat{y} \sim p_{\theta}(\cdot|c)}[\phi_c(\hat{y})] \! - \! \mathbb{E}_{y \sim p(\cdot|c)}[\phi_c(y)] \big) \big],
\end{split}
\end{align}
Observe that the dependence of $\mathcal{L}_{\mathrm{WFM}}(\theta)$ on $\theta$ is through $\mathbb{E}_{\hat{y} \sim p_{\theta}(\cdot|c)}[\phi_c(\hat{y})]$ as in $\mathcal{L}_{\mathrm{FM}}(\theta)$, but also through $\mathbb{E}_{\hat{y} \sim p_{\theta}(\cdot|c)}\big[\phi_c(\hat{y})\phi_c(\hat{y})^{\top}\big]^{-1}$. While applying the REINFORCE argument to estimate the gradient for the former can be done as in \Cref{sec:loss}, the approach breaks down for the latter. We decide to disregard the gradient with respect to $\mathbb{E}_{\hat{y} \sim p_{\theta}(\cdot|c)}\big[\phi_c(\hat{y})\phi_c(\hat{y})^{\top}\big]^{-1}$, which amounts to the following loss:
\begin{align} 
\begin{split} \label{eq:L_WFM_stop_grad}
    \mathcal{L}_{\mathrm{WFM}}(\theta) \! &= \!  \mathbb{E}_{c \sim p}\big[ \big( \mathbb{E}_{\hat{y} \sim p_{\theta}(\cdot|c)}[\phi_c(\hat{y})] \! - \! \mathbb{E}_{y \sim p(\cdot|c)}[\phi_c(y)] \big)^{\top} \\&\qquad\qquad\quad \times \mathrm{stopgrad}\big(\mathbb{E}_{\hat{y} \sim p_{\theta}(\cdot|c)}\big[\phi_c(\hat{y})\phi_c(\hat{y})^{\top}\big] \big)^{-1} \big( \mathbb{E}_{\hat{y} \sim p_{\theta}(\cdot|c)}[\phi_c(\hat{y})] \! - \! \mathbb{E}_{y \sim p(\cdot|c)}[\phi_c(y)] \big) \big],
\end{split}
\end{align}
\subsection{Dealing with non-invertible second moment matrices $\Sigma_{p_{\theta}(\cdot|c)}$: a first approach}
The matrix $\Sigma_{p_{\theta}(\cdot|c)} = \mathbb{E}_{\hat{y} \sim p_{\theta}(\cdot|c)}\big[\phi_c(\hat{y})\phi_c(\hat{y})^{\top}\big]$ and especially its empirical version $\hat{\Sigma}_{p_{\theta}(\cdot|c)} = \frac{1}{n} \! \sum_{j=1}^{n} \! \phi_c(\hat{y}_j) \phi_c(\hat{y}_j)^{\top}$ are often not invertible, in particular when the feature dimension $d$ is high and/or the number of samples per prompt $n$ is low. In particular, the rank of the empirical matrix is upper-bounded by the number of samples per prompt, meaning that it is never invertible when $n < d$, which is usually the case. Thus, we need to solve $\sup_{w}\{2 w^{\top} (\mu_{P}-\mu_{Q}) -w^{\top} \Sigma_{Q}w\}$ for general positive semidefinite $\Sigma_{Q}$. More generally, consider maximizing a functional of the form $f(w)=2\langle w,b\rangle-\langle w,\Sigma w\rangle$. Let $\Sigma=\sum_{i=1}^{r}\lambda_{i}u_{i}u_{i}^{\top}$ be an eigendecomposition, and write $w=\sum_{i}\alpha_{i}u_{i}$ and $b=\sum_{i}\beta_{i}u_{i}$. Then
\begin{align}
    f(w)=2\sum_{i}\alpha_{i}\beta_{i}-\sum_{i}\lambda_{i}\alpha_{i}^{2}.
    \label{eq:quad_decomp}
\end{align}
Splitting into nonzero and zero eigenvalues yields the dichotomy:
\begin{enumerate}[label=(\roman*)]
    \item If $b$ has any component in $\ker(\Sigma)$ (i.e., there exists $i$ with $\lambda_{i}=0$ and $\beta_{i}\neq 0$), then $\sup_{w} f(w)=+\infty$ and there is no maximizer in $\mathbb{R}^{d}$.
    \item If $b\perp\ker(\Sigma)$, i.e.\ $b\in\mathrm{Im}(\Sigma)$, then the supremum is finite and equals
    \begin{align}
        \sup_{w} f(w)=\sum_{\lambda_{i}>0}\frac{\beta_{i}^{2}}{\lambda_{i}}
        = b^{\top}\,\Sigma^{\dagger}b,
        \label{eq:pseudoinverse_value}
    \end{align}
    where $\Sigma^{\dagger}=\sum_{\lambda_{i}>0}\frac{1}{\lambda_{i}}u_{i}u_{i}^{\top}$ is the Moore--Penrose pseudoinverse.
\end{enumerate}
In our case, $b= \mu_P - \mu_Q$ and $\Sigma = \Sigma_Q$. While $\mu_Q \perp \ker(\Sigma_Q)$ by construction, in general $\mu_P$ will have non-zero components in $\ker(\Sigma)$, and in that case $\sup_{w}\{2 w^{\top} (\mu_{P}-\mu_{Q}) -w^{\top} \Sigma_{Q}w\} = +\infty$. This is not surprising given that when the inequality \eqref{eq:chi2_variational_linear_whitening} holds with equality, which is the case for one-hot feature maps, $\sup_{w}\{2 w^{\top} (\mu_{P}-\mu_{Q}) -w^{\top} \Sigma_{Q}w\} = \chi^2(P\|Q)$, and it is easy to see that $\chi^2(P\|Q) = +\infty$ when the support of $P$ is larger than the support of $Q$. To obtain a finite value, it is convenient to replace $\mu_{P}$ by the projection $\mathrm{Pr}_{\mathrm{Im}(\Sigma_Q)} \mu_P$. Then, by equation \eqref{eq:pseudoinverse_value}  we obtain
\begin{align}
    \sup_{w}\{2 w^{\top} (\mathrm{Pr}_{\mathrm{Im}(\Sigma_Q)} \mu_{P}-\mu_{Q}) -w^{\top} \Sigma_{Q}w\} &= (\mathrm{Pr}_{\mathrm{Im}(\Sigma_Q)} \mu_{P}-\mu_{Q})^{\top} \Sigma_Q^{\dagger} (\mathrm{Pr}_{\mathrm{Im}(\Sigma_Q)} \mu_{P}-\mu_{Q}) \\ &= (\mu_{P}-\mu_{Q})^{\top} \Sigma_Q^{\dagger} (\mu_{P}-\mu_{Q}),
\end{align}
where the last equality holds because $\ker(\Sigma_Q^{\dagger}) = \ker(\Sigma_Q)$, and that $\mu_{P} = \mathrm{Pr}_{\mathrm{Im}(\Sigma)} \mu_{P} + \mathrm{Pr}_{\ker(\Sigma)} \mu_{P}$. Hence, the following loss function is numerically robust: 
\begin{align} 
\begin{split} \label{eq:LWFM_2}
    \mathcal{L}_{\mathrm{WFM}}(\theta) \! &= \!  \mathbb{E}_{c \sim p}\big[ \big( \mathbb{E}_{\hat{y} \sim p_{\theta}(\cdot|c)}[\phi_c(\hat{y})] \! - \! \mathbb{E}_{y \sim p(\cdot|c)}[\phi_c(y)] \big)^{\top} \\&\qquad\qquad\quad \times \mathrm{stopgrad}\big(\mathbb{E}_{\hat{y} \sim p_{\theta}(\cdot|c)}\big[\phi_c(\hat{y})\phi_c(\hat{y})^{\top}\big] \big)^{\dagger} \big( \mathbb{E}_{\hat{y} \sim p_{\theta}(\cdot|c)}[\phi_c(\hat{y})] \! - \! \mathbb{E}_{y \sim p(\cdot|c)}[\phi_c(y)] \big) \big],
\end{split}
\end{align}
It is easy to compute the gradient of this loss through the framework of \Cref{sec:loss}; in the computation of the population reward $r(\hat{y},c)$ in \eqref{eq:r_haty_c} we simply replace the features $\phi_c(y)$ by the whitened features:
\begin{align}
    &\tilde{\phi}_c(y) = (\Sigma^{\dagger}_{p_{\theta}(\cdot|c)})^{1/2} \phi_c(y), \qquad \Sigma_{p_{\theta}(\cdot|c)} = \mathbb{E}_{\hat{y} \sim p_{\theta}(\cdot|c)}\big[\phi_c(\hat{y})\phi_c(\hat{y})^{\top}\big],
\end{align}
and in practice, we compute the reward $r_j$ in \eqref{eq:REINFORCE_practical} using $\hat{\Sigma}_{p_{\theta}(\cdot|c)}$ instead of $\Sigma_{p_{\theta}(\cdot|c)}$:
\begin{align}
    &\tilde{\phi}_c(y) = (\hat{\Sigma}^{\dagger}_{p_{\theta}(\cdot|c)})^{1/2} \phi_c(y), \qquad \hat{\Sigma}_{p_{\theta}(\cdot|c)} = \frac{1}{n} \sum_{j=1}^{n} \phi_c(\hat{y}_j)\phi_c(\hat{y}_j)^{\top}.
\end{align}
Above, $(\Sigma^{\dagger})^{1/2}$ denotes the square root of the pseudo-inverse of $\Sigma$, i.e. if $\Sigma$ admits the eigenvalue decomposition $\Sigma = \sum_{i=1}^d \lambda_i u_i u_i^{\top}$, then $(\Sigma^{\dagger})^{1/2} = \sum_{i=1, \lambda_i > 0}^d \lambda_i^{-1/2} u_i u_i^{\top}$. In practice, using a function to compute the singular value decomposition of  is more numerically stable that using a function to compute the eigenvalue decomposition.

\subsection{Variants of the whitened feature-matching loss with better empirical performance}
Let us write the reward $r_j$ explicitly under whitening:
\begin{align} \label{eq:r_j_whitening}
    r_j = \underbrace{2 \phi_c(\hat{y}_j)^{\top} \hat{\Sigma}^{\dagger}_{p_{\theta}(\cdot|c)} \phi_c(y)}_{\text{alignment term $\mathrm{AT}_j$}} - \underbrace{\frac{2}{n - 1} \sum_{j' \neq j} \phi_c(\hat{y}_j)^{\top} \hat{\Sigma}^{\dagger}_{p_{\theta}(\cdot|c)}  \phi_c(\hat{y}_{j'})}_{\text{diversity term $\mathrm{DT}_j$}}.
\end{align} 
Suppose that the features $\big(\phi_c(\hat{y}_j) \big)_{j=1}^{n}$ of the generated completions are ordered such that repeated completions are arranged consecutively, and that there are exactly $K$ different feature vectors among $\big(\phi_c(\hat{y}_j) \big)_{j=1}^{n}$, with multiplicities $(n_k)_{k=1}^{K}$, such that $\sum_{k=1}^{K} n_k = n$. In this section, we make the following assumptions, which hold in practice: 
\begin{enumerate}[label=(\roman*)]
\item The feature dimension $d$ is larger or equal than the number of generated completions $n$. This holds in our experiments, because $d$ is on the order of thousands, while we take $n=4$.
\item The $K$ different feature vectors within $(\phi_c(\hat{y}_j))_{j=1}^{n}$ are linearly independent. This happens with very high probability in our experiments, also as a consequence of $d \gg n$.
\end{enumerate}
For $1 \leq k \leq K$, let $j_k = \sum_{k'=1}^{k-1} n_{k'} + 1$. Hence, the list of feature vectors $\big(\phi_c(\hat{y}_{j_k}) \big)_{k=1}^{K}$ contains each instance in $\big(\phi_c(\hat{y}_j) \big)_{j=1}^{n}$ with multiplicity one. We define the matrices
\begin{itemize}
\item $\Phi \in \mathbb{R}^{d \times n}$ as the matrix whose columns are $(\phi_c(\hat{y}_j))_{j=1}^{n}$, i.e. $\Phi_{\cdot j} = \phi_c(\hat{y}_j)$,
\item $\hat{\Phi} \in \mathbb{R}^{d \times K}$ as the matrix whose columns are $\big( \phi_c(\hat{y}_{j_k}) \big)_{k=1}^{K}$,
\item $\bar{\Phi} \in \mathbb{R}^{d \times K}$ as the matrix whose columns are $\big(\sqrt{n_k} \phi_c(\hat{y}_{j_k}) \big)_{k=1}^{K}$,
\item $\tilde{\Phi} = ((\Phi \Phi^{\top})^{\dagger})^{1/2} \Phi \in \mathbb{R}^{d \times n}$,
\end{itemize}
And the vectors
\begin{itemize}
\item $\psi = \phi_c(y) \in \mathbb{R}^d$, 
\item $\tilde{\psi} = ((\Phi \Phi^{\top})^{\dagger})^{1/2} \psi \in \mathbb{R}^{d}$,
\item $x^{(\psi)} = 
\hat{\Phi}^{\dagger} \psi \in \mathbb{R}^{K}$,
which is the vector of coefficients of the orthogonal projection of $\psi$ onto $\mathrm{span}\big(\big(\phi_c(\hat{y}_{j_k}) \big)_{k=1}^{K} \big)^{\perp}$ with respect to the basis $\big(\phi_c(\hat{y}_{j_k}) \big)_{k=1}^{K}$,
\end{itemize}
We can reexpress the alignment and diversity terms in \eqref{eq:r_j_whitening} with respect to $\tilde{\Phi}$ and $\tilde{\psi}$:
\begin{align} 
    \mathrm{AT}_j &= 2 n \phi_c(\hat{y}_j)^{\top} (\Phi \Phi^{\top})^{\dagger} \phi_c(y) = 2 n \tilde{\Phi}_{\cdot j}^{\top} \tilde{\psi}, \\
    \label{eq:diversity_term_tilde_phi}
    \mathrm{DT}_j &= \frac{2 n}{n - 1} \sum_{j' \neq j} \phi_c(\hat{y}_j)^{\top} (\Phi \Phi^{\top})^{\dagger}  \phi_c(\hat{y}_{j'}) = \frac{2n}{n - 1} \sum_{j' \neq j} \tilde{\Phi}^{\top}_{\cdot j} \tilde{\Phi}_{\cdot j'}. 
\end{align}
The following lemma, proven in \Cref{subsubsec:proof_lem_tilde_phi}, characterizes the inner products $\tilde{\Phi}^{\top}_{\cdot j} \tilde{\Phi}_{\cdot j'}$ and $\tilde{\Phi}_{\cdot j'}^{\top} \tilde{\psi}$, and the norm of $\tilde{\psi}$.
\begin{lemma} \label{lem:tilde_phi}
    Recall that $n_{k_j}$ is the multiplicity of the completion $\hat{y}_j$ within $\big(\phi_c(\hat{y}_j) \big)_{j=1}^{n}$. The inner products between the columns $(\tilde{\Phi}_{\cdot j})_{j=1}^{n}$ are given by
    \begin{align} \label{eq:orthogonal_1}
    \tilde{\Phi}_{\cdot j}^{\top} \tilde{\Phi}_{\cdot j'} &= 1/n_{k_j} \quad &&\text{for all } j' \text{ such that } \phi_c(\hat{y}_j) = \phi_c(\hat{y}_{j'}) \text{ (in particular } \|\tilde{\Phi}_{\cdot j}\| = 1/\sqrt{n_{k_j}}\text{)}, \\
    \tilde{\Phi}_{\cdot j}^{\top} \tilde{\Phi}_{\cdot j'} &= 0 \quad &&\text{for all } j' \text{ such that } \hat{y}_j \neq \hat{y}_{j'}. \label{eq:orthogonal_2}
    \end{align}
    And we have that
    \begin{align}
        \tilde{\Phi}_{\cdot j}^{\top} \tilde{\psi} &= \frac{x^{(\psi)}_j}{n_{k_j}}, \qquad \text{for all $1 \leq j \leq n$, and}, \qquad
       \|\tilde{\psi}\|^2 = \sum_{k=1}^{K} \frac{(x^{(\psi)}_k)^2}{n_k}.
    \end{align}
\end{lemma}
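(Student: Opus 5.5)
The plan is to reduce everything to the $K$ distinct feature vectors and then use a thin SVD. Write $\phi_k := \phi_c(\hat{y}_{j_k})$ for $1 \le k \le K$.

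First, collapse the repeated columns. Since $\Phi\Phi^\top = \sum_{j=1}^n \phi_c(\hat y_j)\phi_c(\hat y_j)^\top = \sum_{k=1}^K n_k \phi_k\phi_k^\top$, we have $\Phi\Phi^\top = \bar\Phi\bar\Phi^\top$. By assumption (ii) the columns of $\bar\Phi$ are linearly independent, so $\bar\Phi \in \mathbb{R}^{d\times K}$ has rank $K$. This requires $K \le d$, which follows from assumption (i).

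Second, take a thin SVD $\bar\Phi = U S V^\top$. Here $U \in \mathbb{R}^{d\times K}$ has orthonormal columns, $S$ is $K\times K$ diagonal and invertible, and $V$ is $K\times K$ orthogonal. Then $\Phi\Phi^\top = U S^2 U^\top$, so $(\Phi\Phi^\top)^\dagger = U S^{-2} U^\top$ and $W := ((\Phi\Phi^\top)^\dagger)^{1/2} = U S^{-1} U^\top$. These formulas use the eigen-decomposition definition of $(\Sigma^\dagger)^{1/2}$ given earlier in this section. It follows that $W\bar\Phi = U V^\top$, a matrix with orthonormal columns, so $(W\bar\Phi)^\top (W\bar\Phi) = I_K$. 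Column $k$ of $W\bar\Phi$ is $\sqrt{n_k}\,W\phi_k$, which gives $\sqrt{n_k n_{k'}}\,(W\phi_k)^\top (W\phi_{k'}) = \delta_{kk'}$. Since $\tilde\Phi_{\cdot j} = W\phi_{k_j}$, this yields \eqref{eq:orthogonal_1}, including the norm $\|\tilde\Phi_{\cdot j}\| = 1/\sqrt{n_{k_j}}$, and \eqref{eq:orthogonal_2}.

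Third, handle $\psi$. Decompose $\psi = \hat\Phi x^{(\psi)} + \psi_\perp$, where $\hat\Phi x^{(\psi)}$ is the orthogonal projection of $\psi$ onto $\mathrm{span}(\phi_1,\dots,\phi_K)$ and $\psi_\perp$ is orthogonal to that span. Since $\hat\Phi$ has full column rank, $\hat\Phi^\dagger = (\hat\Phi^\top\hat\Phi)^{-1}\hat\Phi^\top$, so the coefficients are exactly $x^{(\psi)} = \hat\Phi^\dagger\psi$. (The statement's phrase "onto $\mathrm{span}(\cdot)^\perp$" should read "onto the span".) Because $\mathrm{Im}(U) = \mathrm{Im}(\bar\Phi) = \mathrm{span}(\phi_k)$, the matrix $W = US^{-1}U^\top$ annihilates $\psi_\perp$. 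Hence $\tilde\psi = \sum_k x^{(\psi)}_k W\phi_k$.

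Finally, the orthogonality relations from the second step give $(W\phi_k)^\top\tilde\psi = x^{(\psi)}_k/n_k$ and $\|\tilde\psi\|^2 = \sum_k (x^{(\psi)}_k)^2 \|W\phi_k\|^2 = \sum_k (x^{(\psi)}_k)^2/n_k$. Reading $x^{(\psi)}_j$ as $x^{(\psi)}_{k_j}$, this is the claimed formula for $\tilde\Phi_{\cdot j}^\top\tilde\psi$.

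The only delicate point is the second step. One must identify $((\Phi\Phi^\top)^\dagger)^{1/2}$ correctly, namely that its range and support are exactly $\mathrm{span}(\phi_k)$. Passing from $\Phi$ to $\bar\Phi$ is what makes the computation clean. The remaining steps are bookkeeping.
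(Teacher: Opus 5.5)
Your proof is correct and follows the paper's argument essentially step for step. The paper also collapses $\Phi\Phi^\top$ to $\bar\Phi\bar\Phi^\top$, uses a thin SVD to get $((\bar\Phi\bar\Phi^\top)^\dagger)^{1/2}\bar\Phi = UV^\top$, and decomposes $\psi = \hat\Phi x^{(\psi)} + \psi'$ with the whitening matrix annihilating $\psi'$; the only cosmetic difference is that it encodes multiplicities through a matrix $B$ with orthonormal rows ($\Phi = \bar\Phi B$, hence $\tilde\Phi^\top\tilde\Phi = B^\top B$) rather than reading the Gram relations off column by column, and your two corrections to the statement (projection onto the span rather than its orthogonal complement, and $x^{(\psi)}_j$ meaning $x^{(\psi)}_{k_j}$) are right.
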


Thus, under the conditions of Lemma \ref{lem:tilde_phi}, 
\begin{align} \label{eq:AT_basic}
    \mathrm{AT}_j &= \frac{2 n x^{(\psi)}_j}{n_{k_j}}, \\
    \mathrm{DT}_j &= \frac{2 n}{n - 1} \sum_{j' \neq j} \mathbf{1}[\phi_c(\hat{y}_j) = \phi_c(\hat{y}_{j'})] \frac{1}{n_{k_j}} = \frac{2n(n_{k_j} - 1)}{(n - 1)n_{k_j}} = \frac{2(1 - 1/n_{k_j})}{1 - 1/n}. \label{eq:DT_basic}
\end{align}
where $x^{(\psi)}_j$ is the $j$-th component of $x^{(\psi)}$. 

Observe that when $\phi_c(y)$ is equal to $\phi_c(\hat{y}_{j_k})$ for some $1 \leq k \leq K$, $x^{(\psi)}$ is the $k$-th vector of the canonical basis of $\mathbb{R}^{K}$, which means that $\mathrm{AT}_j = \frac{2 n}{n_{k_j}}$ for all $j$ such that $\phi_c(y)$ is equal to $\phi_c(\hat{y}_{j})$, and zero otherwise.

When $\phi_c(y)$ is different from all the vectors in $\big(\phi_c(\hat{y}_{j_k}) \big)_{k=1}^{K}$, the norm of the projection of $\phi_c(y)$ onto $\mathrm{span}\big(\big(\phi_c(\hat{y}_{j_k}) \big)_{k=1}^{K}\big)$ is usually significantly smaller than the norm of $\phi_c(y)$, because this subspace is smaller than the ambient dimension as $K \leq n \leq d$, and this is accentuated the smaller $n$ is. Observe that in optimizing the empirical rewards $r_j = \mathrm{AT}_j - \mathrm{DT}_j$, the model balances increasing the alignment term $\mathrm{AT}_j$ and decreasing the diversity term $\mathrm{DT}_j$, and the specific trade-off is determined by the relative sizes of both terms.
Since for small $n$ the alignment terms $(\mathrm{AT}_j)_{j=1}^{n}$ are small because $x^{(\psi)}$ is abnormally small, 
the model focuses on decreasing $\mathrm{DT}_j$ as opposed to strongly improving $\mathrm{AT}_j$. Experimentally, this translates to moderate improvements in downstream performance. To achieve more solid boosts in downstream performance, we tested the following alternative approaches:
\begin{itemize} 
    \item \textit{Variant (i)}: \textbf{Normalizing the whitened features of the generations and the ground truth in the alignment term}. As a result, even when $x^{(\psi)}$ is small, the alignment term is not. Namely, we set the diversity term $\mathrm{DT}_j$ as in \eqref{eq:DT_basic}, and the alignment term $\mathrm{AT}_j$ as follows:
    \begin{align}
        \mathrm{AT}_j &= \frac{2 \phi_c(\hat{y}_j)^{\top} \hat{\Sigma}^{\dagger}_{p_{\theta}(\cdot|c)} \phi_c(y)}{\big\| (\hat{\Sigma}^{\dagger}_{p_{\theta}(\cdot|c)})^{1/2} \phi_c(\hat{y}_j) \big\| \big\| (\hat{\Sigma}^{\dagger}_{p_{\theta}(\cdot|c)})^{1/2} \phi_c(y) \big\|} = \frac{2 \tilde{\Phi}_{\cdot j}^{\top} \tilde{\psi}}{\big\|\tilde{\Phi}_{\cdot j} \big\| \big\|\tilde{\psi} \big\|} 
        = \frac{2 x^{(\psi)}_j}{\sqrt{ n_{k_j} \sum_{k=1}^{K} \frac{(x^{(\psi)}_k)^2}{n_k}}}. \label{eq:AT_normalized}
    \end{align}
    Observe that we can write the vector of alignment terms $(\mathrm{AT}_j)_{j=1}^{n}$ as 
    \begin{align}
    (\mathrm{AT}_j)_{j=1}^{n} = \frac{2 (x^{(\psi)}_j/\sqrt{n_{j_k}})_{j=1}^{n} }{ \|(x^{(\psi)}_j/\sqrt{n_{j_k}})_{j=1}^{n}\|}.
    \end{align}
    \item \textit{Variant (ii)}: \textbf{Normalizing the whitened features of the generations and the ground truth in the alignment and diversity terms}. 
    We set $\mathrm{AT}_j$ as in \eqref{eq:AT_normalized}, and $\mathrm{DT}_j$ as follows:
    \begin{align} 
        \begin{split}
        \mathrm{DT}_j &= \frac{2}{n - 1} \sum_{j' \neq j} \frac{\phi_c(\hat{y}_j)^{\top} \hat{\Sigma}^{\dagger}_{p_{\theta}(\cdot|c)}  \phi_c(\hat{y}_{j'})}{\big\| (\hat{\Sigma}^{\dagger}_{p_{\theta}(\cdot|c)})^{1/2} \phi_c(\hat{y}_{j}) \big\| \big\| (\hat{\Sigma}^{\dagger}_{p_{\theta}(\cdot|c)})^{1/2} \phi_c(\hat{y}_{j'}) \big\|} = \frac{2}{n - 1} \sum_{j' \neq j} \frac{\tilde{\Phi}^{\top}_{\cdot j} \tilde{\Phi}_{\cdot j'}}{\big\|\tilde{\Phi}_{\cdot j} \big\| \big\|\tilde{\Phi}_{\cdot j'} \big\|} \\ &= \frac{2}{n - 1} \sum_{j' \neq j} \mathbf{1}[\phi_c(\hat{y}_j) = \phi_c(\hat{y}_{j'})] \frac{\sqrt{n_{k_j} n_{k_{j'}}}}{n_{k_j}} = \frac{2 (n_{k_j} - 1)}{n - 1}. 
        \end{split}
    \end{align}
    \item \textit{Variant (iii)}: \textbf{Normalizing the whitened features of the ground truth in the alignment term}. We set $\mathrm{DT}_j$ as in \eqref{eq:DT_basic} and $\mathrm{AT}_j$ as follows:
    \begin{align}
        \mathrm{AT}_j &= \frac{2 \phi_c(\hat{y}_j)^{\top} \hat{\Sigma}^{\dagger}_{p_{\theta}(\cdot|c)} \phi_c(y)}{\big\| (\hat{\Sigma}^{\dagger}_{p_{\theta}(\cdot|c)})^{1/2} \phi_c(y) \big\|} = \frac{2 \tilde{\Phi}_{\cdot j'}^{\top} \tilde{\psi}}{ \big\|\tilde{\psi} \big\|} 
        = \frac{2 x^{(\psi)}_j}{n_{k_j}\cdot \frac{1}{\sqrt{n_{k_j}}} \cdot \sqrt{\sum_{k=1}^{K} \frac{(x^{(\psi)}_k)^2}{n_k}}} 
        = \frac{2x^{(\psi)}_j}{n_{k_j} \sqrt{\sum_{k=1}^{K} \frac{(x^{(\psi)}_k)^2}{n_k}}}, 
    \end{align}
\end{itemize}
Experimentally, \textbf{variant (i) offers the best performance, and is the one that we use in all the whitening experiments we report in this paper.} 
 
Observe that when we use whitening (with or without any of these variants), we are not explicitly minimizing a particular loss function on $\theta$, as our REINFORCE-style reward does not take into account that the second-moment matrix $\hat{\Phi}_{p_{\theta}(\cdot|c)}$ and the normalization factors depend on $\theta$. In the figures of \Cref{sec:add_experimental_results}, apart from the non-whitened feature matching loss, we report the proxy quantity:
\begin{align}
    \frac{1}{n} \sum_{j=1}^{n} \big( \mathrm{AT}_j - \frac{1}{2} \mathrm{DT}_j \big)
\end{align}
We refer to this quantity as the "feature-matching loss with whitening". Ignoring the dependence of the second-moment matrix and the normalization constants on $\theta$, we view whitened feature matching as trying to decrease this loss.

\subsubsection{Proof of Lemma \ref{lem:tilde_phi}}
\label{subsubsec:proof_lem_tilde_phi}
And we define the matrix $B \in \mathbb{R}^{K \times n}$ as the matrix whose $k$-th row has value $1/\sqrt{n_k}$ on all positions from $j_k = \sum_{k'=1}^{k-1} n_{k'} + 1$ to $j_{k+1} - 1 = \sum_{k'=1}^{k} n_{k'}$ (both included), and value zero on all remaining positions. 
Observe that the rows of $B$ constitute an orthonormal set, i.e. $B B^{\top} = \mathrm{Id} \in \mathbb{R}^{K \times K}$, and that by construction,
\begin{align}
    \Phi = \bar{\Phi} B.
\end{align}
Thus,
\begin{align} \label{eq:tildePhi_reexpressed}
    \tilde{\Phi} = ((\bar{\Phi} B B^{\top} \bar{\Phi}^{\top})^{\dagger})^{1/2} \bar{\Phi} B = ((\bar{\Phi} \bar{\Phi}^{\top})^{\dagger})^{1/2} \bar{\Phi} B.
\end{align}
Next, we inspect $((\bar{\Phi} \bar{\Phi}^{\top})^{\dagger})^{1/2} \bar{\Phi}$. Observe that by assumptions \emph{(i)} and \emph{(ii)} above, the rank of $\bar{\Phi}$ is $K$. Let $\bar{\Phi} = U \Sigma V^{\top}$ be the thin singular value decomposition of $\bar{\Phi}$, i.e. $U \in \mathbb{R}^{d \times K}$ has orthonormal columns, $V \in \mathbb{R}^{K \times K}$ is an orthogonal matrix, and $\Sigma \in \mathbb{R}^{K \times K}$ is a diagonal matrix with strictly positive numbers on the diagonal (in general the singular values are non-negative, but since $\bar{\Phi}$ has rank $K$, all of them must be positive). Then, by the definitions of the square root and the pseudo-inverse
\begin{align} \label{eq:pseudo_inverse_simplification}
((\bar{\Phi} \bar{\Phi}^{\top})^{\dagger})^{1/2} \bar{\Phi} = 
((U \Sigma^2 U^{\top})^{\dagger})^{1/2} U \Sigma V^{\top} = U \Sigma^{-1} U^{\top} U \Sigma V^{\top} = U V^{\top}
\end{align}
Plugging this into the right-hand side of \eqref{eq:tildePhi_reexpressed} yields
\begin{align} \label{eq:tilde_phi_simplified}
    \tilde{\Phi} = U V^{\top} B.
\end{align}
And 
\begin{align}
    \tilde{\Phi}^{\top} \tilde{\Phi} = B^{\top} V U^{\top} U V^{\top} B = B^{\top} B, 
\end{align}
which means that for $j, j' \in [n]$, $\tilde{\Phi}_{\cdot j}^{\top} \tilde{\Phi}_{\cdot j'}$, which is the $(j,j')$-th component of $\tilde{\Phi}^{\top} \tilde{\Phi}$, is equal to the $(j,j')$-th component of $B^{\top} B$, which is $B_{\cdot j}^{\top} B_{\cdot j'}$. Since $B_{\cdot j}$, (resp. $B_{\cdot j'}$) has a single non-zero entry $1/\sqrt{n_{k_j}}$ in position $k_j$ (resp. $1/\sqrt{n_{k_{j'}}}$ in position $k_{j'}$), equalities \eqref{eq:orthogonal_1} and \eqref{eq:orthogonal_2} follow. 

Let us define the diagonal matrix $\bar{B} \in \mathbb{R}^{K \times K}$ with diagonal values $\big(1/\sqrt{n_k} \big)_{k=1}^{K}$, such that we can express the matrix $\hat{\Phi}$ with columns $\big(\phi_c(\hat{y}_{j_k}) \big)_{k=1}^{K}$ as $\bar{\Phi} \bar{B}$. By construction of $x^{(\psi)}$, we have the following: 
\begin{align}
    \psi = \hat{\Phi} x^{(\psi)} + \psi' = \bar{\Phi} \bar{B} x^{(\psi)} + \psi',
\end{align}
where $\psi'$ is the orthogonal projection of $\psi$ onto $\mathrm{span}\big(\big(\phi_c(\hat{y}_{j_k}) \big)_{k=1}^{K} \big)^{\perp}$.
Hence, using equation \eqref{eq:pseudo_inverse_simplification}, and the fact that $\mathrm{span}( (U_{\cdot k})_{k=1}^{K}) = \mathrm{span}\big(\big(\phi_c(\hat{y}_{j_k}) \big)_{k=1}^{K} \big)$,
\begin{align} \label{eq:psi_product_simplified}
    ((\Phi \Phi^{\top})^{\dagger})^{1/2} \psi = ((\bar{\Phi} \bar{\Phi}^{\top})^{\dagger})^{1/2} \big( \bar{\Phi} \bar{B}  x^{(\psi)} + \psi' \big) =  U V^{\top} \bar{B} x^{(\psi)} + U \Sigma^{-1} U^{\top} \psi' = U V^{\top} \bar{B} x^{(\psi)}.
\end{align}
Hence, using \eqref{eq:tilde_phi_simplified} and \eqref{eq:psi_product_simplified} yields
\begin{align}
    \Phi^{\top} ((\Phi \Phi^{\top})^{\dagger})^{1/2} ((\Phi \Phi^{\top})^{\dagger})^{1/2} \psi = B V U^{\top} U V^{\top} \bar{B} x^{(\psi)} = B^{\top} \bar{B} x^{(\psi)} = \tilde{B} x^{(\psi)}
\end{align}
where we define $\tilde{B} \in \mathbb{R}^{K \times n}$ as the matrix whose $k$-th row has value $1/n_k$ on all positions from $j_k = \sum_{k'=1}^{k-1} n_{k'} + 1$ to $j_{k+1} - 1 = \sum_{k'=1}^{k} n_{k'}$ (both included), and value zero on all remaining positions.
And
\begin{align}
    \psi^{\top} ((\Phi \Phi^{\top})^{\dagger})^{1/2} ((\Phi \Phi^{\top})^{\dagger})^{1/2} \psi = (x^{(\psi)})^{\top} \bar{B}^{\top} V U^{\top} U V^{\top} \bar{B} x^{(\psi)} = \|\bar{B} x^{(\psi)}\|^2 = \sum_{k=1}^{K} \frac{(x_k^{(\psi)})^2}{n_k}. 
\end{align}

\section{Feature matching with alignment bias}
\label{sec:alignment_bias}
In some applications we prefer \emph{more reference-aligned} samples, even at the cost of reduced diversity.
We capture this tradeoff by scaling the target moment: for $\alpha\in(0,1]$, define the loss with alignment bias as
\begin{align} 
\begin{split} \label{eq:L_FM_alpha}
    &\mathcal{L}^{\alpha}_{\mathrm{FM}}(\theta) \! = \! \alpha \mathbb{E}_{c \sim p}\big[ \| \mathbb{E}_{\hat{y} \sim p_{\theta}(\cdot|c)}[\phi_c(\hat{y})] \! - \! {\textstyle \frac{1}{\alpha}} \mathbb{E}_{y \sim p(\cdot|c)}[\phi_c(y)] \|^2 \big] \\ & = \! \alpha \mathcal{L}_{\mathrm{FM}}(\theta) \! - \! \underbrace{
    2 (1-\alpha) 
    \mathbb{E}_{c \sim p}\big[ \mathbb{E}_{\hat{y} \sim p_{\theta}(\cdot|c)}[\phi_c(\hat{y})]^{\top} \! \mathbb{E}_{y \sim p(\cdot|c)}[\phi_c(y)] \big].}_{\text{alignment bias}}
\end{split}
\end{align}
The additional term explicitly encourages alignment between model and data features, making the objective \emph{mode-seeking} as $\alpha$ decreases (typically improving accuracy and faithfulness but reducing diversity).
Operationally, this corresponds to multiplying the diversity terms in \eqref{eq:r_haty_c} and \eqref{eq:REINFORCE_practical} by $\alpha$, since the analogs of equations 
\eqref{eq:L_CFM}
and \eqref{eq:L_FM_surrogate}, \eqref{eq:r_haty_c}, and \eqref{eq:REINFORCE_practical} are
\begin{align}
\begin{split} \label{eq:moment_matching_L_FM_alpha}
    \mathcal{L}^{\alpha}_{\mathrm{CFM}}(\theta)
    &= \alpha \mathbb{E}_{c \sim p}\big[ \| \mathbb{E}_{\hat{y} \sim p_{\theta}(\cdot|c)}[\phi_c(\hat{y})] \! - \! {\textstyle \frac{1}{\alpha}} \phi_c(y) \|^2 \big],
\end{split} \\
\begin{split}
    \mathcal{L}^{\alpha}_{\mathrm{CFM}}(\theta;c,y)
    &= \alpha \| \mathbb{E}_{\hat{y} \sim p_{\theta}(\cdot|c)}[\phi_c(\hat{y})] \! - \! {\textstyle \frac{1}{\alpha}} \phi_c(y) \|^2,
\end{split} \label{eq:L_FM_surrogate_alpha} \\
\begin{split} \label{eq:r_haty_c_alpha}
    r(\hat{y},c) &= 2 \phi_c(\hat{y})^{\top} \phi_c(y) - 2 \alpha \phi_c(\hat{y})^{\top} \mathbb{E}_{\tilde{y} \sim p_{\theta}(\cdot|c)}\big[ \phi_c(\tilde{y}) \big],
\end{split} \\
\begin{split} \label{eq:REINFORCE_practical_alpha}
r_j &= 2 \phi_c(\hat{y}_j)^{\top} \phi_c(y) - \frac{2 \alpha}{n - 1} \sum_{j' \neq j} \phi_c(\hat{y}_j)^{\top} \phi_c(\hat{y}_{j'}).
\end{split}
\end{align}
Note that when $\alpha \neq 1$, $\mathcal{L}^{\alpha}_{\mathrm{FM}}$ is not a proper scoring rule, meaning that that it is not minimized by the ground truth distribution $p$, and this can be seen in practice. In \Cref{app:alpha_gamma_sweep} we present experiments in which we sweep over $\alpha$ and $\gamma$ on all the tasks we consider: we consider the values $\alpha \in \{0, 0.5, 1\}$. We conclude that taking $\alpha$ smaller helps in reducing the (unbiased) feature-matching loss faster, at the cost of a slower decrease of the CE loss, with stark differences in behavior depending on the value of $\gamma$. In particular, when $\gamma = 0.1$, the CE loss decreases similarly fast for $\alpha \in \{0, 0.5, 1\}$, but when $\gamma = 0$, taking $\alpha \in \{0, 0.5\}$ causes the CE loss to diverge, while for $\alpha=1$ the CE loss still decreases monotonically, albeit at a slower rate than with higher $\alpha$. Hence, the CE term can help in stabilizing feature matching with alignment bias.

\section{Feature matching with KL regularization}
\label{sec:KL_regularization}

The feature matching loss with KL regularization with respect to a model $\pi$ reads
\begin{align} 
\begin{split}
    \mathcal{L}_{\mathrm{FMKL}}(\theta) \! &= \!  \mathbb{E}_{c \sim p}\big[ \| \mathbb{E}_{\hat{y} \sim p_{\theta}(\cdot|c)}[\phi_c(\hat{y})] \! - \! {\textstyle \frac{1}{\alpha}} \mathbb{E}_{y \sim p(\cdot|c)}[\phi_c(y)] \|^2 + {\textstyle \frac{1}{\beta}} D_{\mathrm{KL}}\big( p_{\theta}(\cdot|c) \| \pi(\cdot|c) \big) \big] \\ &= \!  \mathbb{E}_{c \sim p}\big[ \| \mathbb{E}_{\hat{y} \sim p_{\theta}(\cdot|c)}[\phi_c(\hat{y})] \! - \! {\textstyle \frac{1}{\alpha}} \mathbb{E}_{y \sim p(\cdot|c)}[\phi_c(y)] \|^2 + {\textstyle \frac{1}{\beta}} \mathbb{E}_{\hat{y} \sim p_{\theta}(\cdot|c)} \big[ \log p_{\theta}(\hat{y}|c) - \log \pi(\hat{y}|c) \big] \big]
\end{split}
\end{align}
Observe that since this loss function decouples across different contexts $c$, the optimal $p_{\theta}$ satisfies the following for all $c$:
\begin{align} \label{eq:p_theta_c}
    p_{\theta}(\cdot|c) = \arg\min_{\rho \in \mathcal{V}^G} \big\{ {\textstyle \frac{1}{\beta}} D_{\mathrm{KL}}\big(\rho \| \pi(\cdot|c) \big) 
    + \| \mathbb{E}_{y \sim \rho} [\phi_c(y)] - {\textstyle \frac{1}{\alpha}} \mathbb{E}_{y \sim p(\cdot|c)}[\phi_c(y)] \|_2^2 \big\}.
\end{align}
Next, we will characterize the distribution $p_{\theta}$ that minimizes this loss function.
The following section contains some preliminary results.

\subsection{Energy-based models with RKHS function classes}
The following duality theorem relates two optimization problems which underlie energy-based models for which the energy class is a ball of the RKHS induced by a feature map $\phi$.
\begin{theorem}[Thm.~2, \citet{domingoenrich2022dual}] \label{eq:L_2_duality}
    Let $\pi$ be a base measure over a measurable space $\mathcal{Y}$, and $\tilde{\beta} > 0$. Let $\varphi : \mathcal{Y} \to \mathbb{R}^d$ for some $d \geq 1$ be a feature map, and $v \in \mathbb{R}^d$.
    Consider the two problems
    \begin{align} \label{eq:min_pb}
        \min_{\rho \in \mathcal{P}(\mathcal{Y})} 
        {\textstyle \frac{1}{\tilde{\beta}}} D_{\mathrm{KL}}(\rho \| \pi) 
        + \| \mathbb{E}_{Y \sim \rho} [\varphi(Y)] - v \|_2,
    \end{align}
    and
    \begin{align} \label{eq:max_pb}
        \max_{\substack{h \in \mathbb{R}^d \\ \|h\|_{2} \leq 1}}
        - v^{\top} h
        - {\textstyle \frac{1}{\tilde{\beta}}} 
        \log \mathbb{E}_{Y \sim \pi} \big[ \exp \big( - \tilde{\beta} 
        \varphi(Y)^{\top} h
        \big) \big].
    \end{align}
    The problems \eqref{eq:min_pb} and \eqref{eq:max_pb} are convex. 
    The
    problem \eqref{eq:max_pb} is the Fenchel dual of problem \eqref{eq:min_pb}, and strong duality holds. Moreover, the solution
    \(\rho^\star\) of \eqref{eq:min_pb} is unique and satisfies
    \begin{align}
        \frac{d\rho^\star}{d\pi}(y) = \frac{1}{Z_{\tilde{\beta}}} 
        \exp \big( - \tilde{\beta} \varphi(y)^{\top} h^\star \big),
    \end{align}
    where \(h^\star\) is a solution of \eqref{eq:max_pb} and \(Z_\beta\) is a normalization constant.
\end{theorem}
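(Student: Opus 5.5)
The plan is to prove the duality of Theorem~\ref{eq:L_2_duality} by writing the non-smooth norm term through its variational form, exchanging min and max, and solving the inner problem over $\rho$ in closed form with the Gibbs variational principle.

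\textbf{Step 1: variational form of the norm.} Use
\[
\|\mathbb{E}_\rho[\varphi]-v\|_2=\max_{\|h\|_2\le 1} h^\top(\mathbb{E}_\rho[\varphi]-v),
\]
with the sign of $h$ flipped for convenience. This rewrites \eqref{eq:min_pb} as
\[
\min_{\rho}\ \max_{\|h\|\le 1}\ \Big\{{\textstyle\frac{1}{\tilde\beta}}D_{\mathrm{KL}}(\rho\|\pi) - h^\top\mathbb{E}_\rho[\varphi] - v^\top h\Big\}\cdot(\pm),
\]
where the signs are arranged so that, after exchange, the objective of \eqref{eq:max_pb} appears with $-v^\top h$.

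\textbf{Step 2: convexity and exchange.} The objective is jointly convex in $\rho$, since KL is convex and the other terms are linear in $\rho$. It is linear, hence concave, in $h$. The $h$-domain is the compact convex unit ball. Sion's minimax theorem therefore allows swapping min and max. Convexity of \eqref{eq:min_pb} itself follows because the norm of an affine map of $\rho$ is convex. The dual \eqref{eq:max_pb} is concave in $h$ because the log-moment-generating function is convex.

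\textbf{Step 3: inner minimization in closed form.} By the Donsker--Varadhan/Gibbs variational principle,
\[
\min_\rho\Big\{{\textstyle\frac1{\tilde\beta}}D_{\mathrm{KL}}(\rho\|\pi)+\mathbb{E}_\rho[\varphi]^\top h\Big\} = -{\textstyle\frac1{\tilde\beta}}\log\mathbb{E}_\pi\big[e^{-\tilde\beta\varphi^\top h}\big].
\]
The minimum is attained uniquely at $d\rho_h/d\pi\propto e^{-\tilde\beta\varphi^\top h}$. Plugging this in gives exactly \eqref{eq:max_pb}, which yields strong duality.

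\textbf{Step 4: primal recovery and uniqueness.} Primal uniqueness follows from strict convexity of KL in $\rho$ on its domain. Take a saddle point $(\rho^\star,h^\star)$, which exists because the dual is attained on a compact set and the minimax value is finite. Then $\rho^\star$ must minimize the inner problem at $h^\star$, so by Step 3 it has the stated Gibbs density.

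\textbf{Main obstacle.} The hardest part will be the technical conditions: justifying Sion's theorem on the infinite-dimensional space $\mathcal{P}(\mathcal{Y})$, and ensuring that $\mathbb{E}_\pi[e^{-\tilde\beta\varphi^\top h}]<\infty$. The first attempt would be to assume $\varphi$ bounded, or the exponential moments finite. Alternatively, one can avoid minimax altogether: take the dual maximizer $h^\star$, define $\rho^\star$ by the Gibbs formula, and verify zero duality gap directly from the KKT condition. That condition states that $\mathbb{E}_{\rho^\star}[\varphi]-v$ lies in the normal cone of the ball at $h^\star$, and it comes from differentiating the log-partition function.
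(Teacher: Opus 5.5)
The paper does not prove this statement. It is quoted from Domingo-Enrich et al.\ (2022), and the wording (``Fenchel dual'') points to a Fenchel--Rockafellar argument. In that argument the conjugate of $\frac{1}{\tilde\beta}D_{\mathrm{KL}}(\cdot\|\pi)$ is the log-partition function, and the conjugate of $z\mapsto\|z-v\|_2$ is $h\mapsto v^\top h$ plus the indicator of the unit ball. Your Steps 1--3 carry out exactly these two computations via a Lagrangian, so your strategy is the standard one and the dual you reach is correct.

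There is a sign slip in Step 1: the Lagrangian should be $\frac{1}{\tilde\beta}D_{\mathrm{KL}}(\rho\|\pi)+h^\top(\mathbb{E}_\rho[\varphi]-v)$. Your displayed $-h^\top\mathbb{E}_\rho[\varphi]-v^\top h$ maximizes to $\|\mathbb{E}_\rho[\varphi]+v\|_2$. Step 3 already uses the right sign.

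\textbf{The gap is in Step 4.} You assert that a saddle point exists ``because the dual is attained on a compact set and the minimax value is finite.'' That does not follow. Equality of $\inf\sup$ and $\sup\inf$, together with dual attainment, says nothing about whether the primal infimum is attained. For example, with $L(x,h)=e^{-x}$ on $\mathbb{R}\times[0,1]$, both values are $0$ and every $h$ is dual optimal, yet no $x$ is a minimizer. Existence of $\rho^\star$ is part of the claim, and your derivation of the Gibbs form passes through this saddle point. So the ``moreover'' half of the theorem is not yet established.

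\textbf{The repair is the alternative you mention at the end.} Make it the main argument; it also makes Sion unnecessary (Sion would anyway need lower semicontinuity in $\rho$ for some vector topology on measures). The steps are:
\begin{itemize}
\item Let $h^\star$ maximize the dual objective $\mathcal{D}$. This maximum is attained because $\mathcal{D}$ is concave and upper semicontinuous on the compact ball.
\item Define $\rho^\star$ by $d\rho^\star/d\pi\propto e^{-\tilde\beta\varphi^\top h^\star}$, and put $g=\mathbb{E}_{\rho^\star}[\varphi]-v$, which is exactly $\nabla\mathcal{D}(h^\star)$.
\item First-order optimality on the ball, $g^\top(h-h^\star)\le 0$ for all $\|h\|_2\le 1$, gives $\|g\|_2\le g^\top h^\star\le\|g\|_2$.
\item Hence the primal objective at $\rho^\star$ equals $\frac{1}{\tilde\beta}D_{\mathrm{KL}}(\rho^\star\|\pi)+h^{\star\top}(\mathbb{E}_{\rho^\star}[\varphi]-v)$. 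By the equality case of the Gibbs principle, this is $\mathcal{D}(h^\star)$.
\item For every $\rho$ and every $\|h\|_2\le 1$, Cauchy--Schwarz and the Gibbs inequality show that the primal objective at $\rho$ is at least $\frac{1}{\tilde\beta}D_{\mathrm{KL}}(\rho\|\pi)+h^\top(\mathbb{E}_\rho[\varphi]-v)\ge\mathcal{D}(h)$.
\end{itemize}
These two facts give strong duality and optimality of $\rho^\star$ at once. Strict convexity of the KL term gives uniqueness. Since the argument applies to every dual maximizer, the Gibbs formula holds for any choice of $h^\star$.

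The only regularity used is differentiability of the log-partition function near $h^\star$. This is automatic for bounded $\varphi$, in particular for the finite $\mathcal{Y}=\mathcal{V}^G$ where the paper applies the theorem.
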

And the following equivalence between minimization problems holds:
\begin{theorem}[Prop.~3, \citet{domingoenrich2022dual}] \label{eq:dual_equivalence}
Consider the problem
\begin{align} \label{eq:min_pb_2}
    \min_{\rho \in \mathcal{P}(\mathcal{Y})} 
        \beta^{-1} D_{\mathrm{KL}}(\rho \| \pi) 
        + \| \mathbb{E}_{Y \sim \rho} [\varphi(Y)] - v \|_2^2,
\end{align}
Problems \eqref{eq:min_pb} and \eqref{eq:min_pb_2} are equivalent in the following sense:  
if \(\rho_1^\star\) is a solution of \eqref{eq:min_pb} for \(\tilde{\beta}\), then it is also a solution of \eqref{eq:min_pb_2} for  
\begin{align}
    \beta = 
    \big( 2 \| \mathbb{E}_{Y \sim \rho_1^{\star}} [\varphi(Y)] - v \|_2 \big)^{-1} \tilde{\beta},
\end{align}
provided that $\| \mathbb{E}_{Y \sim \rho_1^{\star}} [\varphi(Y)] - v \|_2$ is non-zero. Conversely, if \(\rho_2^\star\) is a solution of \eqref{eq:min_pb_2} for  
\(\beta\), then it is also a solution of \eqref{eq:min_pb} for  
\begin{align}
    \tilde{\beta} = 2 \| \mathbb{E}_{Y \sim \rho^{\star}_2} [\varphi(Y)] - v \|_2 \beta. 
\end{align}
\end{theorem}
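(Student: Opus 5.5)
The plan is to compare the two objectives through convexity and first-order optimality along line segments in $\mathcal{P}(\mathcal{Y})$, without going through the dual problem of Theorem~\ref{eq:L_2_duality}. Write $m(\rho) := \mathbb{E}_{Y\sim\rho}[\varphi(Y)]$, which is affine in $\rho$. Define
\begin{align*}
F_1(\rho) &= \tilde{\beta}^{-1} D_{\mathrm{KL}}(\rho\|\pi) + \|m(\rho)-v\|_2, \\
F_2(\rho) &= \beta^{-1} D_{\mathrm{KL}}(\rho\|\pi) + \|m(\rho)-v\|_2^2 .
\end{align*}
Fix a candidate minimizer $\rho^\star$ with $\delta := \|m(\rho^\star)-v\|_2$, and an arbitrary competitor $\rho$ with $D_{\mathrm{KL}}(\rho\|\pi)<\infty$. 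If $D_{\mathrm{KL}}(\rho\|\pi)=\infty$, the competitor is trivially no better. Set $\rho_t=(1-t)\rho^\star+t\rho$ for $t\in[0,1]$. Since $t\mapsto D_{\mathrm{KL}}(\rho_t\|\pi)$ is convex and finite on $[0,1]$, its right derivative $D$ at $t=0$ exists in $[-\infty,\infty)$. Convexity then gives
\[
D_{\mathrm{KL}}(\rho\|\pi)-D_{\mathrm{KL}}(\rho^\star\|\pi)\ \ge\ D .
\]

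\emph{Step 1: first-order conditions.} If $\rho^\star$ minimizes $F_2$, then $t\mapsto F_2(\rho_t)$ is minimized at $t=0$. The map $t\mapsto \|m(\rho_t)-v\|^2$ is smooth with derivative $2(m(\rho^\star)-v)^\top(m(\rho)-m(\rho^\star))$ at $0$. Hence
\[
\beta^{-1}D + 2(m(\rho^\star)-v)^\top(m(\rho)-m(\rho^\star)) \ \ge\ 0 ,
\]
and in particular $D>-\infty$. Similarly, if $\rho^\star$ minimizes $F_1$ and $\delta>0$, the norm is differentiable at $m(\rho^\star)-v$. This yields
\[
\tilde{\beta}^{-1}D + \delta^{-1}(m(\rho^\star)-v)^\top(m(\rho)-m(\rho^\star))\ \ge\ 0 .
\]

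\emph{Step 2: transfer via the supporting-hyperplane inequalities.} For $b\neq 0$ we have $\|a\|\ge \|b\| + b^\top(a-b)/\|b\|$ and $\|a\|^2\ge \|b\|^2+2b^\top(a-b)$.

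For the converse direction ($\rho^\star$ solves \eqref{eq:min_pb_2}), take $\tilde\beta = 2\delta\beta$ and apply the first inequality with $a=m(\rho)-v$, $b=m(\rho^\star)-v$. This gives
\[
F_1(\rho)-F_1(\rho^\star)\ \ge\ \tfrac{1}{2\delta}\Big[\beta^{-1}D + 2(m(\rho^\star)-v)^\top(m(\rho)-m(\rho^\star))\Big]\ \ge\ 0,
\]
using $\tilde\beta^{-1}=1/(2\delta\beta)$ and Step 1.

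For the forward direction ($\rho^\star$ solves \eqref{eq:min_pb}), take $\beta=\tilde\beta/(2\delta)$, so that $\beta^{-1}=2\delta/\tilde\beta$. The squared-norm inequality then gives
\[
F_2(\rho)-F_2(\rho^\star)\ \ge\ 2\delta\Big[\tilde\beta^{-1}D+\delta^{-1}(m(\rho^\star)-v)^\top(m(\rho)-m(\rho^\star))\Big]\ \ge\ 0 .
\]
Both directions need $\delta\ne0$, matching the proviso in the statement. In the converse with $\delta=0$, the formula gives $\tilde\beta=0$, which is outside the problem's range, so that case is vacuous. Optionally, uniqueness from Theorem~\ref{eq:L_2_duality} upgrades "a solution" to "the solution".

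\emph{Main obstacle.} The delicate point is the infinite-dimensional calculus for the KL term. First, I must justify that the right derivative of $t\mapsto D_{\mathrm{KL}}(\rho_t\|\pi)$ at $0$ is well defined and bounds the KL increment; this follows from convexity of the scalar function, so no explicit formula for $D$ is needed. Second, $m(\rho_t)$ must be finite and affine in $t$. I would assume $\varphi$ is integrable under the relevant measures, which is automatic for finite $\mathcal{Y}=\mathcal{V}^G$ as in the language-model application. Everything else is the two elementary convexity inequalities.
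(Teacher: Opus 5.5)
Your argument is correct, and it supplies something the paper does not: the paper gives no proof of this statement. It quotes it as Prop.~3 of \citet{domingoenrich2022dual} and uses it, in the proof of Theorem~\ref{thm:EBFT_KL}, together with the exponential-tilt characterization of Theorem~\ref{eq:L_2_duality}. The route suggested by that usage is to write each minimizer as a Gibbs tilt of $\pi$. For \eqref{eq:min_pb}, $d\rho^\star/d\pi\propto\exp(-\tilde\beta\,\varphi^\top h^\star)$ with $h^\star=(m(\rho^\star)-v)/\|m(\rho^\star)-v\|_2$, which is the KKT condition of the dual on the unit sphere. For \eqref{eq:min_pb_2}, $d\rho^\star/d\pi\propto\exp(-2\beta\,\varphi^\top(m(\rho^\star)-v))$. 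The two tilts coincide exactly when $\tilde\beta=2\beta\|m(\rho^\star)-v\|_2$, and a convexity argument then shows the shared fixed-point equation is sufficient for optimality.

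Your route avoids both the Gibbs form and duality. Convexity of $t\mapsto D_{\mathrm{KL}}(\rho_t\|\pi)$ gives a one-sided derivative $D$ that lower-bounds the KL increment. The supporting-hyperplane inequalities for $\|\cdot\|_2^2$, and for $\|\cdot\|_2$ away from the origin, then turn the necessary first-order condition of one problem directly into global optimality for the other. This is more elementary, needs neither uniqueness nor a dual optimizer, and holds on a general measurable space once $\varphi$ is integrable. The Gibbs-matching route, in exchange, produces the explicit tilt that the paper relies on for its energy-based interpretation.

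A minor refinement: the $\delta=0$ case of the converse is not really vacuous. There your Step~1 gives $D\ge 0$ for every competitor, so $\rho_2^\star$ minimizes $D_{\mathrm{KL}}(\cdot\|\pi)$. Hence $\rho_2^\star=\pi$ with $m(\pi)=v$, and $\pi$ solves \eqref{eq:min_pb} for every $\tilde\beta>0$; only the formula $\tilde\beta=0$ degenerates.
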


\subsection{Feature matching with KL regularization as an implicit energy-based model}

\begin{theorem} \label{thm:EBFT_KL}
    Consider the KL-regularized objective
\begin{align}
\begin{split} \label{eq:fm_kl_app}
&\min_{\rho} 
\;\; \mathbb{E}_{c \sim p} \Big[
\big\| \mathbb{E}_{\rho(\cdot\mid c)}[\phi_c(y)] - \mathbb{E}_{p(\cdot\mid c)}[\phi_c(y)] \big\|^2
+\; {\textstyle \frac{1}{\beta}} \, 
D_{\mathrm{KL}}
\!\left(\rho(\cdot| c)\,\|\,q(\cdot|c)\right) \Big],
\end{split}
\end{align}
where $\beta>0$ controls the strength of the regularization. the solution to \eqref{eq:fm_kl_app} has the form of an exponential tilt of the base distribution,
\begin{align*}
\rho^{\star}(y|c) \propto q(y|c)\,\exp\!\big(-\chi_c^{\top}\phi_c(y)\big),
\end{align*}
for a context-dependent vector $\chi_c \in \mathbb{R}^d$ chosen to optimize:
\begin{align} \label{eq:full_chi_problem}
    \max_{\|\chi\|_2 \leq \tilde{\beta}} \big\{ - \big( {\textstyle \frac{1}{\alpha}} - 1\big) \mathbb{E}_{y \sim p(\cdot|c)}[\phi_c(y)]^{\top} \chi + \mathbb{E}_{y \sim p(\cdot|c)} \big[ \log \rho_{\chi}(y|c) \big] \big\}, 
\end{align}
where $\rho_{\chi}(y|c) \propto q(y|c)\,\exp\!\big(-\chi^{\top}\phi_c(y)\big)$,
for a $\tilde{\beta} > 0$ that depends on $\beta$. Two values of $\alpha$ admit specific interpretations:
\begin{itemize}
\item For pure EBFT ($\alpha=1$), the optimal $\chi$ is the \textbf{maximum likelihood estimate}: When $\alpha = 1$, the problem \eqref{eq:full_chi_problem} simplifies to:
\begin{align}
    \max_{\|\chi\|_2 \leq \tilde{\beta}} \mathbb{E}_{y \sim p(\cdot|c)} \big[ \log \rho_{\chi}(y|c) \big], 
\end{align}
This corresponds to the maximum likelihood loss function for an energy-based model with 
energy function
\(
E(y,c) = \chi^{\top}\phi_c(y).
\)
\item For $\alpha^{+} = 0$, the optimal $\chi$ has the \textbf{same direction as the ground-truth mean feature} $\mathbb{E}_{y \sim p(\cdot|c)}[\phi_c(y)]$: When $\alpha = 0^{+}$, the problem \eqref{eq:full_chi_problem} is equivalent to
    \begin{align}
        \max_{\substack{\chi \in \mathbb{R}^d \\ \|\chi\|_{2} \leq \tilde{\beta}}} - \mathbb{E}_{y \sim p(\cdot|c)}[\phi_c(y)]^{\top} \chi,
    \end{align}
    which has optimal solution 
    \begin{align}
        \chi^{\star} = - \tilde{\beta}\frac{\mathbb{E}_{y \sim p(\cdot|c)}[\phi_c(y)]}{\big\| \mathbb{E}_{y \sim p(\cdot|c)}[\phi_c(y)] \big\|_2}.
    \end{align}
\end{itemize}
Thus, for $\alpha \in (0,1)$, the solution $\chi^{\star}$ of \eqref{eq:full_chi_problem} interpolates between the maximum likelihood estimate and the rescaled ground-truth mean feature.
\end{theorem}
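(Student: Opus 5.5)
We follow the route prepared in the preceding subsection. Since the objective \eqref{eq:fm_kl_app} is an expectation over $c$ of terms that each depend only on $\rho(\cdot|c)$, we first decouple it and minimize separately for each context, as in \eqref{eq:p_theta_c}. Throughout we allow the general target $v_c = \frac{1}{\alpha}\mathbb{E}_{y\sim p(\cdot|c)}[\phi_c(y)]$, so that \eqref{eq:fm_kl_app} is the case $\alpha=1$. For fixed $c$ we set $\mathcal{Y}=\mathcal{V}^G$, $\pi=q(\cdot|c)$, $\varphi=\phi_c$ and $v=v_c$. The per-context problem is then exactly the squared-norm problem \eqref{eq:min_pb_2}.

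\textbf{Step 1: from squared norm to plain norm.} By Theorem~\ref{eq:dual_equivalence}, a minimizer $\rho_2^\star$ of \eqref{eq:min_pb_2} is also a minimizer of the non-squared problem \eqref{eq:min_pb} with $\tilde\beta = 2\|\mathbb{E}_{\rho_2^\star}[\varphi]-v\|_2\,\beta$. This is the $\beta$-dependent $\tilde\beta$ in the statement. Existence of $\rho_2^\star$ follows from strict convexity and lower semicontinuity of the KL term on the simplex, since $\mathcal{V}^G$ is finite. In the degenerate case where the moment residual vanishes, $\tilde\beta=0$ and the tilt is trivial, so the claim holds with $\chi_c=0$.

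\textbf{Step 2: duality and the tilt.} Applying Theorem~\ref{eq:L_2_duality} gives $\rho^\star \propto q(\cdot|c)\exp(-\tilde\beta\,\phi_c^\top h^\star)$, where $h^\star$ solves \eqref{eq:max_pb}. We substitute $\chi=\tilde\beta h$. The constraint becomes $\|\chi\|_2\le\tilde\beta$, and the dual objective becomes $\frac{1}{\tilde\beta}\big(-v^\top\chi - \log \mathbb{E}_q[e^{-\chi^\top\phi_c}]\big)$. Dropping the positive factor $\frac{1}{\tilde\beta}$, we set $\chi_c = \tilde\beta h^\star$, which yields the claimed exponential-tilt form.

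\textbf{Step 3: rewriting the dual as a likelihood.} Write $\mu=\mathbb{E}_{p(\cdot|c)}[\phi_c]$, so that $v=\mu/\alpha$. We split $v^\top\chi = (\frac{1}{\alpha}-1)\mu^\top\chi + \mu^\top\chi$. We then use the identity
\[
\mathbb{E}_{p(\cdot|c)}[\log\rho_\chi(y|c)] = \mathbb{E}_{p(\cdot|c)}[\log q(y|c)] - \mu^\top\chi - \log\mathbb{E}_q[e^{-\chi^\top\phi_c}].
\]
The first term on the right does not depend on $\chi$, so the dual objective equals \eqref{eq:full_chi_problem} up to an additive constant.

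\textbf{Step 4: the special cases.} For $\alpha=1$ the linear term vanishes, leaving constrained maximum likelihood for the energy-based model with energy $\chi^\top\phi_c$. For $\alpha\to0^+$ we multiply the objective by $\alpha$. The log-likelihood term is bounded on the compact ball, so the linear term $-\mu^\top\chi$ dominates. Its maximizer over the ball is $-\tilde\beta\mu/\|\mu\|$.

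\textbf{Main obstacle.} The delicate point is that $\tilde\beta$ depends on the optimizer itself, and also on $\alpha$. The limit $\alpha\to0^+$ therefore requires care: $v$ blows up, so $\tilde\beta$ changes with $\alpha$. We would handle this by treating the result as a statement about the fixed-$\tilde\beta$ dual problem and interpreting the $\alpha=0^+$ case at fixed $\tilde\beta$ via the rescaled objective.
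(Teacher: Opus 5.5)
Your proposal is correct and follows the same route as the paper's proof. Both decouple per context, pass from the squared to the unsquared problem via Prop.~3, obtain the tilt from Thm.~2, substitute $\chi=\tilde\beta h$, and split $\frac{1}{\alpha}\mu=(\frac{1}{\alpha}-1)\mu+\mu$ to expose the energy-based log-likelihood up to the constant $\mathbb{E}_{p(\cdot|c)}[\log q]$. Your extra points are added care the paper's proof omits, not a different route: you define $\tilde\beta$ from the minimizer of the squared problem (the paper specifies it only through an implicit fixed-point condition), you handle the zero-residual case with $\chi_c=0$, and you read $\alpha\to0^+$ at fixed $\tilde\beta$ via the $\alpha$-rescaled objective; one small correction is that existence of that minimizer comes from compactness of the simplex plus lower semicontinuity, while strict convexity only gives uniqueness.
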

\begin{proof}
Given a context $c$, and a completion length $G$, let us apply \Cref{eq:L_2_duality} and \Cref{eq:dual_equivalence} by setting $\mathcal{Y} = \mathcal{V}^G$,  $\varphi(y) = \phi_c(y)$, $\pi = \pi(\cdot|c)$, and $v = {\textstyle \frac{1}{\alpha}} \mathbb{E}_{y \sim p(\cdot|c)}[\phi_c(y)] \|^2$.
Then, the problems \eqref{eq:min_pb}, \eqref{eq:min_pb_2} and \eqref{eq:max_pb} take the form
\begin{align} \label{eq:min_pb_applied}
    &\min_{\rho \in \mathcal{P}(\mathcal{Y})} 
    {\textstyle \frac{1}{\tilde{\beta}}} D_{\mathrm{KL}}\big(\rho \| \pi(\cdot|c) \big) 
    + \| \mathbb{E}_{y \sim \rho} [\phi_c(y)] - {\textstyle \frac{1}{\alpha}} \mathbb{E}_{y \sim p(\cdot|c)}[\phi_c(y)] \|_2, \\
    \label{eq:min_pb_2_applied}
    &\min_{\rho \in \mathcal{P}(\mathcal{Y})} 
    {\textstyle \frac{1}{\beta}} D_{\mathrm{KL}}\big(\rho \| \pi(\cdot|c) \big) 
    + \| \mathbb{E}_{y \sim \rho} [\phi_c(y)] - {\textstyle \frac{1}{\alpha}} \mathbb{E}_{y \sim p(\cdot|c)}[\phi_c(y)] \|_2^2, \\
    \label{eq:max_pb_applied}
    &\max_{\substack{h \in \mathbb{R}^d \\ \|h\|_{2} \leq 1}} - {\textstyle \frac{1}{\alpha}} \mathbb{E}_{y \sim p(\cdot|c)}[\phi_c(y)]^{\top} h
    - {\textstyle \frac{1}{\tilde{\beta}}} 
    \log \mathbb{E}_{y \sim \pi(\cdot|c)} \big[ \exp \big( - \tilde{\beta} 
    \phi_c(y)^{\top} h
    \big) \big],
\end{align}
Problem \eqref{eq:min_pb_2_applied} is the KL-regularized feature-matching objective with alignment bias, 
if we absorb the constant $\alpha$ accompanying $\| \mathbb{E}_{y \sim \rho} [\phi_c(y)] - {\textstyle \frac{1}{\alpha}} \mathbb{E}_{y \sim p(\cdot|c)}[\phi_c(y)] \|_2^2$ into the constants $\beta$.
The (unique) solution \eqref{eq:min_pb_applied} of $\rho^{\star}$ and the solution $h^{\star}$ of \eqref{eq:max_pb_applied} are related by the equation
\begin{align}
    \rho^\star(y) = 
        \frac{\pi(y|c) \exp \big( - \tilde{\beta} \phi_c(y)^{\top} h^\star \big)}{\sum_{y'} \pi(y'|c) \exp \big( - \tilde{\beta} \phi_c(y')^{\top} h^\star \big)},
\end{align}
and $\rho^{\star}$ is also the (unique) solution of \eqref{eq:min_pb_2_applied} provided that 
\begin{align} \label{eq:tilde_beta_choice}
\tilde{\beta} = 2 \| \mathbb{E}_{Y \sim \rho^{\star}} [\varphi(Y)] - {\textstyle \frac{1}{\alpha}} \mathbb{E}_{y \sim p(\cdot|c)}[\phi_c(y)] \|_2 \beta. 
\end{align}
Observe that the problem \eqref{eq:min_pb_2_applied} is equal to the problem \eqref{eq:p_theta_c}, which means that \eqref{eq:min_pb_applied}-\eqref{eq:max_pb_applied} characterize the optimal $p_{\theta}(\cdot|c)$ when $\tilde{\beta}$ is chosen according to \eqref{eq:tilde_beta_choice}. 

Next, we focus on the problem \eqref{eq:max_pb_applied}. We define $\mathcal{E}_{\tilde{\beta}}$ as the following class of energy functions:
\begin{align}
\begin{split}
    \mathcal{E}_{\tilde{\beta}} &= \{ E : \mathcal{V}^G \to \mathbb{R} \, | \, \exists \chi \in \mathbb{R}^d, \mathrm{ s.t. } \|\chi\|_2 \leq \tilde{\beta}, \mathrm{ and } \, \forall x \in \mathcal{V}^G, \, E(x) = \chi^{\top} \phi_c(x) \} 
    \\ &= \{ E : \mathcal{V}^G \to \mathbb{R} \, | \, \exists h \in \mathbb{R}^d, \mathrm{ s.t. } \|h\|_2 \leq 1, \mathrm{ and } \, \forall x \in \mathcal{V}^G, \, E(x) = \tilde{\beta} h^{\top} \phi_c(x) \},
\end{split}
\end{align}
and given $\chi \in \mathbb{R}^d$,  we define $\rho_{\chi}, \rho^{(\tilde{\beta})}_{h} \in \mathcal{P}(\mathcal{V}^G)$ as
\begin{align}
    \rho_{\chi}(y) = 
    \frac{\pi(y|c) \exp \big( - \chi^{\top} \phi_c(y) \big)}{\mathbb{E}_{y' \sim \pi(\cdot|c)} \big[ \exp \big( - \chi^{\top} \phi_c(y') \big) \big]}, \qquad \rho^{(\tilde{\beta})}_{h}(y) = 
    \frac{\pi(y|c) \exp \big( - \tilde{\beta} \chi^{\top} \phi_c(y) \big)}{\mathbb{E}_{y' \sim \pi(\cdot|c)} \big[ \exp \big( - \tilde{\beta} \chi^{\top} \phi_c(y') \big) \big]}.
\end{align}
We rewrite the problem \eqref{eq:max_pb_applied} as
\begin{align}
\begin{split} \label{eq:max_pb_applied_rewritten}
    &\max_{\substack{h \in \mathbb{R}^d \\ \|h\|_{2} \leq 1}} - \big( {\textstyle \frac{1}{\alpha}} - 1\big) \mathbb{E}_{y \sim p(\cdot|c)}[\phi_c(y)]^{\top} h
    - \mathbb{E}_{y \sim p(\cdot|c)}[\phi_c(y)]^{\top} h - {\textstyle \frac{1}{\tilde{\beta}}} 
    \log \mathbb{E}_{y \sim \pi(\cdot|c)} \big[ \exp \big( - \tilde{\beta} 
    \phi_c(y)^{\top} h
    \big) \big] \\
    &= \max_{\substack{h \in \mathbb{R}^d \\ \|h\|_{2} \leq 1}} - \big( {\textstyle \frac{1}{\alpha}} - 1\big) \mathbb{E}_{y \sim p(\cdot|c)}[\phi_c(y)]^{\top} h + \frac{1}{\tilde{\beta}} \mathbb{E}_{y \sim p(\cdot|c)} \Big[ \log \frac{\rho^{(\tilde{\beta})}_h(y)}{\pi(y|c)} \Big] \\
    &= \max_{\substack{h \in \mathbb{R}^d \\ \|h\|_{2} \leq 1}} - \big( {\textstyle \frac{1}{\alpha}} - 1\big) \mathbb{E}_{y \sim p(\cdot|c)}[\phi_c(y)]^{\top} h + \frac{1}{\tilde{\beta}} \mathbb{E}_{y \sim p(\cdot|c)} \Big[ \log \rho^{(\tilde{\beta})}_h(y) \Big] + \mathrm{const}.
\end{split}
\end{align}

Writing the right-hand side problem in terms of $\chi$ instead of $h$, and multiplying the objective by $\tilde{\beta}$, yields the problem in \eqref{eq:full_chi_problem}.
\end{proof}

\section{Computing the REINFORCE gradient and the RLOO baseline for EBFT} \label{sec:RLOO_baseline}
We derive the REINFORCE gradient first. Rewriting $\tilde{\mathcal{L}}_{\mathrm{FM}}(\theta;c,y)$ explicitly in terms of $p_{\theta}$, and using that $\hat{y}$, $\tilde{y}$ play a symmetric role, we have the following:
\begin{align}
\begin{split}
&\nabla_{\theta} \tilde{\mathcal{L}}_{\mathrm{FM}}(\theta;c,y) \\ &= \! \nabla_{\theta} \Big( \sum_{\hat{y}, \tilde{y}} p_{\theta}(\hat{y}|c) p_{\theta}(\tilde{y}|c) \phi_c(\hat{y})^{\top} \phi_c(\tilde{y}) - 2 \sum_{\hat{y}} p_{\theta}(\hat{y}|c) \phi_c(\hat{y})^{\top} \phi_c(y) \Big)
\\ &= \! \sum_{\hat{y}, \tilde{y}} \Big( \nabla_{\theta} \log p_{\theta}(\hat{y}|c) + \nabla_{\theta} \log p_{\theta}(\tilde{y}|c) \Big) p_{\theta}(\hat{y}|c) p_{\theta}(\tilde{y}|c) \phi_c(\hat{y})^{\top} \phi_c(\tilde{y}) - 2 \sum_{\hat{y}} \nabla_{\theta} \log p_{\theta}(\hat{y}|c) p_{\theta}(\hat{y}|c) \phi_c(\hat{y})^{\top} \phi_c(y) 
\\ &= \! 2 \mathbb{E}_{\hat{y}, \tilde{y} \sim p_{\theta}(\cdot|c)} \! \Big[ 
\nabla \log p_{\theta}(\hat{y}|c) 
\phi_c(\hat{y})^{\top} \! \phi_c(\tilde{y}) \Big] - 2 \mathbb{E}_{\hat{y} \sim p_{\theta}(\cdot|c)} \Big[ \nabla \log p_{\theta}(\hat{y}|c) \phi_c(\hat{y})^{\top} \phi_c(y) \Big].
\end{split}
\end{align}
Next, we derive the RLOO baseline. Define 
\begin{align}
T_1^{(j)} = 2 \phi_c(\hat{y}_j)^{\top} \phi_c(y),
\qquad T_2^{(j)} = 
\frac{2}{n \!- \! 1} \!\! \sum_{j'=1, j' \neq j}^{n} \!\! \phi_c(\hat{y}_j)^{\top} \phi_c(\hat{y}_{j'}),
\end{align}
which means that we can rewrite the REINFORCE gradient \eqref{eq:REINFORCE_practical}

as
\begin{align} \label{eq:REINFORCE_features_approx_2}
    - \frac{1}{n} \sum_{j=1}^{n} \nabla \log p_{\theta}(Y^{(j)}|c) \big( 
    T_1^{(j)} - T_2^{(j)}
    \big).
\end{align}
We want to use a baseline $b^{(j)}$ to reduce the variance of the gradient estimate \eqref{eq:REINFORCE_features_approx_2}. That is, the estimate with baseline $b^{(j)}$ reads
\begin{align} \label{eq:REINFORCE_features_approx_3}
    - \frac{1}{n} \sum_{j=1}^{n} \nabla \log p_{\theta}(Y^{(j)}|c) \big( 
    T_1^{(j)} - T_2^{(j)} - b^{(j)}
    \big).
\end{align}
For the baselined gradient estimate to be unbiased, we need that $b^{(j)}$ is independent of $Y^{(j)}$.

A naive RLOO baseline would be $b^{(j)} = \frac{1}{N-1} \sum_{j'=1, j' \neq j}^{N} \big( T_1^{(j')} - T_2^{(j')} \big)$, i.e. simply averaging the rewards for all rollouts except the $j$-th one. However, the terms $T_2^{(j')}$ are not independent of $Y^{(j)}$, which means that this baseline is not independent of $Y^{(j)}$. To obtain an independent baseline, we need to replace $T_2^{(j')}$ by $T_2^{(j',j)}$, defined as
\begin{align}
\begin{split}
    T_2^{(j',j)} &= \frac{2}{n-2} \sum_{j''=1, \, j'' \neq j',j}^n \phi_c(\hat{y}_{j''})^{\top} \phi_c(\hat{y}_{j'}) = \frac{2}{n-2} \Big( \sum_{j''=1, j'' \neq j'}^n \varphi(c\!:\!\hat{y}_{j''})^{\top} \varphi(c\!:\!\hat{y}_{j'}) - \varphi(c\!:\!\hat{y}_{j})^{\top} \varphi(c\!:\!\hat{y}_{j'}) \Big) \\ &= \frac{n-1}{n-2} T_2^{(j')} - \frac{2}{n-2} \varphi(c\!:\!\hat{y}_{j})^{\top} \varphi(c\!:\!\hat{y}_{j'}).
\end{split}
\end{align}
Thus, the baseline that we end up with is:
\begin{align}
\begin{split} \label{eq:RLOO_baseline}
    b^{(j)} &= \frac{1}{n-1} \sum_{j'=1, j' \neq j}^{n} \big( T_1^{(j')} - T_2^{(j')} \big) \\&= \frac{1}{n-1} \sum_{j'=1, j' \neq j}^{n} \Big( T_1^{(j')} - \Big( \frac{n-1}{n-2} T_2^{(j')} - \frac{2}{n-2} \phi(Y^{(j)})^{\top} \phi(Y^{(j')}) \Big) \Big) \\ &=\frac{1}{n-1}\sum_{j'=1, j' \neq j}^{n} T_1^{(j')}
    -\frac{1}{n-2}\sum_{j'=1, j' \neq j}^{n} T_2^{(j')}
    +\frac{1}{n-2} T_2^{(j)} .
\end{split}
\end{align}

\section{Details on the strided parallel rollout procedure} \label{sec:strided_parallel}

Sampling from a single, unstructured sequence provides only one supervision point and is a major bottleneck for EBFT, particularly because each sample must be embedded via forward passes through a separate feature network. Instead, we treat a single training sequence as a source of multiple nested prompts by identifying many anchor points along the text. Sampling from these points sequentially is prohibitively expensive, so we implement a novel parallel generation pipeline that simultaneously samples from different anchor points in one forward pass, similar to the custom attention mask approach introduced by Quiet-STaR \cite{zelikman2024quietstarlanguagemodelsteach}.
Given a starting sequence $x_{0:T-1}$ of length $T$, a stride $s$, and a generation length $G$, we construct a set of nested prompts by segmenting $x_{0:T-1}$ every $s$ tokens. This yields $B = \lfloor\frac{T-G}{s}\rfloor$ nested prompts. For each prompt $c_b = x_{0:bs} \quad (b = 1, \ldots, B)$, we take the next $G$ tokens in the original sequence $x$ as the ground-truth continuation $y_b$, yielding $\{(c_b, y_b)\}_{b=1}^B$ ground truth context and completion pairs. Additionally, from each prompt, we sample a continuation $\hat{y}_b$ of length $G$. Using our custom mask, we can sample one token from each prefix in just one forward pass. We can then obtain $\{\hat{y}_b\}_{b=1}^B$ length $G$ model completions in $G$ forward passes.
The resulting $BG$ generated tokens are appended in generation order; for example, with $B=3$ and $G=2$:
\[
  [\hat{y}_{1,0},\hat{y}_{2,0},\hat{y}_{3,0},\;\hat{y}_{1,1},\hat{y}_{2,1},\hat{y}_{3,1}].
\]
This interleaving supports an efficient reshape into per-block windows for downstream scoring and feature extraction.
In particular, we exploit the same strided structure to compute features for all generated blocks (and their ground-truth counterparts) with a single batched call to the feature network, followed by a reshape/indexing step to recover per-block embeddings.

\Cref{fig:placeholder} shows a sketch of the strided parallel rollout procedure for a sequence of length $L=12$, stride $S=4$ and completion length $G=4$, which means that $B = \lfloor(12 - 4)/ 4 \rfloor = 2$ context-completion pairs are used: $c_1 = (t_i)_{i=0}^{3}$, $y_1 = (t_i)_{i=4}^{7}$; and $c_2 = (t_i)_{i=0}^{7}$, $y_2 = (t_i)_{i=8}^{11}$. The ground truth sequence is in blue, and the generated completions are in red and green: $\hat{y}_1 = (t_{i \, a})_{i=4}^{7}$, $\hat{y}_2 = (t_{i \, b})_{i=8}^{11}$. 
\begin{figure}[h]
    \centering
    \includegraphics[width=0.8\linewidth]{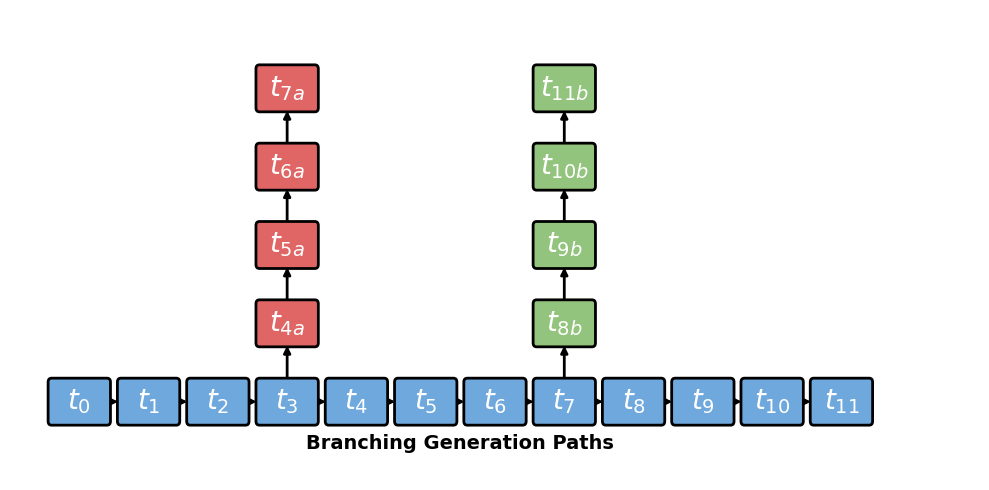}
    \caption{Strided parallel rollouts for a sequence of length $L=12$, stride $S=4$ and completion length $G=4$.}
    \label{fig:placeholder}
\end{figure}

The algorithm to obtain the generated completions $\hat{y}_1$ and $\hat{y}_2$ involves four model calls to $p_{\theta}$. \Cref{fig:placeholder} shows a sketch of the generated tokens using the strided parallel rollout procedure, and \Cref{fig:custom_attention} shows the custom attention matrix for the fourth call (the horizontal and vertical lines show the three top left custom matrices used for the first three calls).

\begin{figure}[H]
\begin{align*} 
-
\left[
\begin{array}{cccccccc|cc|cc|cc}
0 & \infty & \infty & \infty & \infty & \infty & \infty & \infty & \infty & \infty & \infty & \infty & \infty & \infty \\
0 & 0 & \infty & \infty & \infty & \infty & \infty & \infty & \infty & \infty & \infty & \infty & \infty & \infty \\
0 & 0 & 0 & \infty & \infty & \infty & \infty & \infty & \infty & \infty & \infty & \infty & \infty & \infty \\
0 & 0 & 0 & 0 & \infty & \infty & \infty & \infty & \infty & \infty & \infty & \infty & \infty & \infty \\
0 & 0 & 0 & 0 & 0 & \infty & \infty & \infty & \infty & \infty & \infty & \infty & \infty & \infty \\
0 & 0 & 0 & 0 & 0 & 0 & \infty & \infty & \infty & \infty & \infty & \infty & \infty & \infty \\
0 & 0 & 0 & 0 & 0 & 0 & 0 & \infty & \infty & \infty & \infty & \infty & \infty & \infty \\
0 & 0 & 0 & 0 & 0 & 0 & 0 & 0 & \infty & \infty & \infty & \infty & \infty & \infty \\ \hline
0 & 0 & 0 & 0 & \infty & \infty & \infty & \infty & 0 & \infty & \infty & \infty & \infty & \infty \\
0 & 0 & 0 & 0 & 0 & 0 & 0 & 0 & \infty & 0 & \infty & \infty & \infty & \infty \\ \hline
0 & 0 & 0 & 0 & \infty & \infty & \infty & \infty & 0 & \infty & 0 & \infty & \infty & \infty \\
0 & 0 & 0 & 0 & 0 & 0 & 0 & 0 & \infty & 0 & \infty & 0 & \infty & \infty \\ \hline
0 & 0 & 0 & 0 & \infty & \infty & \infty & \infty & 0 & \infty & 0 & \infty & 0 & \infty \\
0 & 0 & 0 & 0 & 0 & 0 & 0 & 0 & \infty & 0 & \infty & 0 & \infty & 0 \\
\end{array}
\right]
\end{align*}
\label{fig:custom_attention}
\caption{Custom attention matrix $A$ in for a sequence of length $L=12$, stride $S=4$ and completion length $G=4$. When the entry $A_{ij}$ is 0, the token in position $i$ attends to the token in position $j$, and when it is $-\infty$ it does not.}
\end{figure}

\section{Additional Experimental Details}
\label{sec:hyperparams}

We build our EBFT method on top of the OpenRLHF~\citep{hu2024openrlhf} framework, as well as use the OpenRLHF implementation of SFT and GRPO for our baselines. We use an internal cluster of 80GB H100 GPUs to conduct SFT, RLVR, and EBFT training runs. For Q\&A code, a single epoch of SFT training takes 0.5 hours to run on a single 80GB H100, whereas a single epoch of RLVR using \url{vllm} training takes roughly 28 hours to run on two 80GB H100s, and a single epoch of EBFT training with our under-optimized implementation (without \url{vllm}) takes roughly 36 hours.

\begin{table}[h!]
    \centering
    \begin{tabular}{|l|c|c|c|}
        \hline
        \textbf{Parameter} & \multicolumn{3}{c|}{\textbf{Value}} \\
        \cline{2-4}
        & \textbf{Q\&A Code} & \textbf{Unstructured Code} & \textbf{Translation} \\
        \hline
        Rollout Batch Size & 
        \multicolumn{3}{c|}{16}
        \\
        \hline
        Sequence Length & \multicolumn{3}{c|}{1024} \\
        \hline
        Completion Length & 8 & 8 & 4
        \\
        \hline
        Stride & 8 & 8 & 2
        \\
        \hline
        Actor Learning Rate & \multicolumn{3}{c|}{$1 \times 10^{-6}$} \\
        \hline
        Temperature & 
        \multicolumn{3}{c|}{0.6}
        \\
        \hline
        KL Coefficient & \multicolumn{3}{c|}{0} \\
        \hline
        Samples per Prompt & \multicolumn{3}{c|}{4} \\
        \hline
        Training Batch Size & \multicolumn{3}{c|}{\texttt{rollout\_batch\_size} $\times$ \texttt{samples\_per\_prompt} = 64} \\
        \hline
        Warmup & \multicolumn{3}{c|}{0.03} \\
        \hline
        Adam Betas & \multicolumn{3}{c|}{(0.9, 0.95)} \\
        \hline
        Num Epochs & \multicolumn{3}{c|}{1} \\
        \hline
    \end{tabular}
    \vspace{2mm}
    \caption{Hyperparameter Configuration for EBFT runs.}
    \label{table:hyperparams_EBFT}
\end{table}

Hyperparameter details for the SFT training runs are provided in Table~\ref{table:hyperparams_sft} for the warmstarted models (initialization for EBFT/RLVR) and in Table~\ref{table:hyperparams_sft_baseline} for the five-epoch SFT baseline runs. We sweep over learning rate scheduler as well as training batch size for each task.

\begin{table}[h!]
    \centering
    \begin{tabular}{|l|l|}
        \hline
        \textbf{Parameter} & \textbf{Value} \\
        \hline
        Training Batch Size & 64 \\
        Epochs & 1 \\
        Max Length & 2048 \\
        Learning Rate & $1 \times 10^{-5}$ \\
        Scheduler & Warmup + Cosine Decay to 0.1$\times$ \texttt{lr} \\
        Warmup & 0.03 \\
        \hline
    \end{tabular}
    \vspace{2mm}
    \caption{Hyperparameter Configuration for SFT warmstart runs. \label{table:hyperparams_sft}}
\end{table}

\begin{table}[h!]
    \centering
    \begin{tabular}{|l|c|c|c|}
        \hline
        \textbf{Parameter} & \multicolumn{3}{c|}{\textbf{Value}} \\
        \cline{2-4}
        & \textbf{Q\&A Code} & \textbf{Unstructured Code} & \textbf{Translation} \\
        \hline
        Training Batch Size & \multicolumn{3}{c|}{\{64, 128\}}  \\
        \hline
        Max Length & \multicolumn{3}{c|}{2048} \\
        \hline
        (Learning Rate, Scheduler) & \multicolumn{3}{c|}{\{($5 \times 10^{-6}$, Warmup + Constant), ($1 \times 10^{-5}$, Warmup + Cosine Decay)\}} \\
        \hline
        Warmup & \multicolumn{3}{c|}{0.03} \\
        \hline
        Adam Betas & \multicolumn{3}{c|}{(0.9, 0.95)} \\
        \hline
        Num Epochs & \multicolumn{3}{c|}{5} \\
        \hline
    \end{tabular}
    \vspace{2mm}
    \caption{Hyperparameter Configuration for SFT baseline runs.\label{table:hyperparams_sft_baseline}}
\end{table}

Hyperparameters for RLVR training runs are provided in Table~\ref{table:hyperparams_rlvr}. For RLVR, we fix training to be online and determine rollout batch size by roughly equating number of examples seen per step across both RLVR and EBFT for each task, sweeping over two values.

\begin{table}[h!]
    \centering
    \begin{tabular}{|l|c|c|}
        \hline
        \textbf{Parameter} & \multicolumn{2}{c|}{\textbf{Value}} \\
        \cline{2-3}
        & \textbf{Q\&A Code} & \textbf{Translation} \\
        \hline
        Rollout Batch Size & \{32, 64\} & \{128, 256\}  \\
        \hline
        Prompt Max Length & \multicolumn{2}{c|}{1024} \\
        \hline
        Generate Max Length & \multicolumn{2}{c|}{1024} \\
        \hline
        Actor Learning Rate & \multicolumn{2}{c|}{$1 \times 10^{-6}$} \\
        \hline
        Temperature & \multicolumn{2}{c|}{1.0} \\
        \hline
        KL Coefficient & \multicolumn{2}{c|}{0} \\
        \hline
        Samples per Prompt & \multicolumn{2}{c|}{8} \\
        \hline
        Training Batch Size & \multicolumn{2}{c|}{\texttt{rollout\_batch\_size} $\times$ \texttt{samples\_per\_prompt}} \\
        \hline
        Warmup & \multicolumn{2}{c|}{0.03} \\
        \hline
        Adam Betas & \multicolumn{2}{c|}{(0.9, 0.95)} \\
        \hline
        Num Epochs & \multicolumn{2}{c|}{1} \\
        \hline
    \end{tabular}
    \vspace{2mm}
    \caption{Hyperparameter Configuration for RLVR baseline runs.\label{table:hyperparams_rlvr}}
\end{table}

\newpage

\clearpage
\begin{table*}[p]
\centering
\footnotesize
\setlength{\tabcolsep}{4pt}
\renewcommand{\arraystretch}{1.15}
\newcommand{\mh}[1]{{\scriptsize\textbf{#1}}}

{\normalsize\textbf{Q\&A Coding}}\\[1mm]
\begin{tabular}{l c cccc cccc cccc}
\toprule
& \multicolumn{1}{c}{\textbf{CE Q\&A}}
& \multicolumn{4}{c}{\textbf{HumanEval}}
& \multicolumn{4}{c}{\textbf{MBPP}}
& \multicolumn{4}{c}{\textbf{Multipl-E}} \\
\cmidrule(lr){2-2}
\cmidrule(lr){3-6}
\cmidrule(lr){7-10}
\cmidrule(lr){11-14}
\textbf{Method}
& 
& \mh{greedy} & \mh{pass@1} & \mh{pass@4} & \mh{pass@16}
& \mh{greedy} & \mh{pass@1} & \mh{pass@4} & \mh{pass@16}
& \mh{greedy} & \mh{pass@1} & \mh{pass@4} & \mh{pass@16} \\
\midrule
Base         & 0.524 & 0.348 & 0.324 & 0.490 & 0.622 & 0.599 & 0.514 & 0.703 & 0.782 & 0.506 & 0.433 & 0.626 & 0.742 \\
Warm start    & 0.571 & 0.415 & 0.385 & 0.540 & 0.665 & 0.595 & 0.527 & 0.691 & 0.774 & 0.440 & 0.408 & 0.602 & 0.731 \\
\midrule
SFT          & 0.408 & 0.457 & 0.427 & 0.578 & 0.713 & 0.576 & 0.558 & 0.701 & 0.790 & 0.465 & 0.406 & 0.596 & 0.723 \\
EBFT         & \textbf{0.326} & 0.494 & 0.448 & 0.616 & 0.750 & \textbf{0.650} & \textbf{0.603} & \textbf{0.728} & 0.813 & 0.524 & 0.488 & 0.645 & 0.753 \\
EBFT (ws.)   & 0.337 & \textbf{0.512} & 0.500 & \textbf{0.642} & \textbf{0.756} & 0.638 & 0.584 & 0.727 & \textbf{0.817} & 0.476 & 0.452 & 0.621 & 0.734 \\
\midrule
RLVR         & 0.806 & 0.451 & 0.443 & 0.602 & 0.695 & 0.623 & 0.583 & 0.722 & 0.794 & \textbf{0.531} & \textbf{0.502} & \textbf{0.658} & \textbf{0.767} \\
RLVR (ws.)   & 0.713 & 0.482 & \textbf{0.516} & 0.640 & 0.738 & 0.607 & 0.596 & 0.712 & 0.782 & 0.484 & 0.475 & 0.632 & 0.729 \\
\bottomrule
\end{tabular}

\vspace{2mm}

{\normalsize\textbf{Unstructured Coding}}\\[1mm]
\begin{tabular}{l cccc cccc}
\toprule
& \multicolumn{4}{c}{\textbf{HumanEval}}
& \multicolumn{4}{c}{\textbf{MBPP}} \\
\cmidrule(lr){2-5}
\cmidrule(lr){6-9}
\textbf{Method}
& \mh{greedy} & \mh{pass@1} & \mh{pass@4} & \mh{pass@16}
& \mh{greedy} & \mh{pass@1} & \mh{pass@4} & \mh{pass@16} \\
\midrule
Base         & 0.348 & 0.324 & 0.490 & 0.622 & 0.599 & 0.514 & 0.703 & 0.782 \\
Warm start    & 0.463 & 0.419 & 0.586 & 0.707 & 0.553 & 0.497 & 0.690 & 0.778 \\
\midrule
SFT          & 0.451 & 0.417 & 0.596 & 0.707 & 0.556 & 0.516 & 0.691 & 0.786 \\
EBFT         & 0.500 & 0.465 & 0.610 & \textbf{0.726} & \textbf{0.611} & \textbf{0.583} & \textbf{0.718} & \textbf{0.813} \\
EBFT (ws.)   & \textbf{0.512} & \textbf{0.478} & \textbf{0.629} & \textbf{0.726} & 0.572 & 0.550 & 0.704 & \textbf{0.813} \\
\bottomrule
\end{tabular}

\vspace{2mm}

{\normalsize\textbf{Translation}}\\[1mm]
\begin{tabular}{l c cccc cccc cccc}
\toprule
& \multicolumn{1}{c}{\textbf{CE Q\&A}}
& \multicolumn{4}{c}{\textbf{WMT'22 - COMET}}
& \multicolumn{4}{c}{\textbf{MTNT - COMET}}
& \multicolumn{4}{c}{\textbf{OpenSubtitles - COMET}} \\
\cmidrule(lr){2-2}
\cmidrule(lr){3-6}
\cmidrule(lr){7-10}
\cmidrule(lr){11-14}
\textbf{Method}
& 
& \mh{greedy} & \mh{best-of-1} & \mh{best-of-4} & \mh{best-of-16}
& \mh{greedy} & \mh{best-of-1} & \mh{best-of-4} & \mh{best-of-16}
& \mh{greedy} & \mh{best-of-1} & \mh{best-of-4} & \mh{best-of-16} \\
\midrule
Base         & 2.567 & 0.649 & 0.611 & 0.711 & 0.757 & 0.627 & 0.590 & 0.679 & 0.724 & 0.658 & 0.630 & 0.712 & 0.753 \\
Warm start    & 2.648 & 0.733 & 0.712 & 0.776 & 0.807 & 0.705 & 0.683 & 0.759 & 0.796 & 0.696 & 0.677 & 0.742 & 0.776 \\
\midrule
SFT          & 2.692 & 0.747 & 0.722 & 0.784 & 0.815 & 0.703 & 0.683 & 0.755 & 0.792 & 0.701 & 0.682 & 0.745 & 0.777 \\
EBFT         & \textbf{2.399} & 0.740 & 0.725 & 0.777 & 0.804 & 0.737 & 0.728 & 0.778 & 0.808 & 0.700 & 0.691 & 0.742 & 0.775 \\
EBFT (ws.)   & 2.451 & \textbf{0.753} & \textbf{0.741} & \textbf{0.788} & \textbf{0.812} & \textbf{0.742} & \textbf{0.732} & \textbf{0.782} & \textbf{0.810} & \textbf{0.708} & 0.699 & \textbf{0.749} & \textbf{0.779} \\
\midrule
RLVR         & 3.225 & 0.704 & 0.698 & 0.743 & 0.769 & 0.705 & 0.698 & 0.745 & 0.772 & 0.684 & 0.679 & 0.718 & 0.741 \\
RLVR (ws.)   & 3.148 & 0.738 & 0.730 & 0.771 & 0.794 & 0.727 & 0.721 & 0.765 & 0.789 & \textbf{0.708} & \textbf{0.703} & 0.740 & 0.762 \\
\bottomrule
\end{tabular}
\begin{tabular}{l cccc cccc cccc}
\toprule
& \multicolumn{4}{c}{\textbf{WMT'22 - BLEU}}
& \multicolumn{4}{c}{\textbf{MTNT - BLEU}}
& \multicolumn{4}{c}{\textbf{OpenSubtitles - BLEU}} \\
\cmidrule(lr){2-5}
\cmidrule(lr){6-9}
\cmidrule(lr){10-13}
\textbf{Method}
& \mh{greedy} & \mh{best-of-1} & \mh{best-of-4} & \mh{best-of-16}
& \mh{greedy} & \mh{best-of-1} & \mh{best-of-4} & \mh{best-of-16}
& \mh{greedy} & \mh{best-of-1} & \mh{best-of-4} & \mh{best-of-16} \\
\midrule
Base         & 0.069 & 0.103 & 0.166 & 0.215 & 0.073 & 0.098 & 0.154 & 0.197 & 0.081 & 0.171 & 0.238 & 0.281 \\
Warm start   & 0.185 & 0.165 & 0.235 & 0.286 & 0.159 & 0.139 & 0.200 & 0.244 & 0.129 & 0.204 & 0.264 & 0.307 \\
\midrule
SFT          & 0.198 & 0.172 & 0.242 & 0.294 & 0.152 & 0.139 & 0.198 & 0.242 & 0.130 & 0.205 & 0.264 & 0.305 \\
EBFT         & 0.204 & 0.187 & 0.247 & 0.289 & 0.212 & 0.175 & \textbf{0.221} & \textbf{0.258} & 0.136 & 0.219 & 0.266 & 0.305 \\
EBFT (ws.)   & \textbf{0.217} & \textbf{0.200} & \textbf{0.253} & \textbf{0.297} & 0.202 & 0.174 & 0.219 & 0.256 & 0.142 & 0.221 & \textbf{0.270} & \textbf{0.309} \\
\midrule
RLVR         & 0.192 & 0.185 & 0.223 & 0.250 & 0.202 & 0.173 & 0.206 & 0.228 & 0.135 & 0.223 & 0.249 & 0.267 \\
RLVR (ws.)   & 0.215 & 0.206 & 0.246 & 0.275 & \textbf{0.217} & \textbf{0.186} & 0.220 & 0.243 & \textbf{0.152} & \textbf{0.240} & 0.269 & 0.289 \\
\bottomrule
\end{tabular}

\vspace{2mm}
\caption{\textbf{Across all tasks, EBFT matches or outperforms both SFT and RLVR on downstream metrics while maintaining substantially lower cross-entropy, and warm starting improves performance for both EBFT and RLVR.} On Q\&A coding, EBFT achieves the best scores on HumanEval and MBPP, while RLVR is competitive on MultiPL-E. On unstructured coding, EBFT dominates across all benchmarks. On translation, EBFT (ws.)\ achieves the highest COMET scores on nearly every benchmark and leads on WMT'22 BLEU. RLVR (ws.)\ is competitive on MTNT and OpenSubtitles BLEU. Warm starting benefits both methods across all tasks. Notably, RLVR consistently degrades cross-entropy relative to the base model (e.g.\ 3.225 vs.\ 2.567 on translation), whereas EBFT improves it, suggesting that EBFT better preserves the model's language modeling capabilities while improving task performance.}
\label{tab:per_benchmark_results}
\end{table*}
\clearpage

\section{Additional Experimental Results}
\label{sec:add_experimental_results}

\subsection{Sweeping across $\gamma$, $\alpha$ and warm-starting}
\label{app:alpha_gamma_sweep}

Figures \ref{fig:q_a_coding_all}, \ref{fig:unstructured_coding_all}, \ref{fig:translation_all} and \ref{fig:translation_all_bleu} show 9 EBFT runs with $\alpha \in \{0, 0.5, 1\}$ and $\gamma \in \{0, 0.03, 0.1\}$ on the Q\&A Coding, Unstructured Coding and Translation tasks, in which the models are initialized from the base Qwen2.5-1.5B and Llama3.2-1B, respectively.
The observations below apply generally across tasks. We include additional observations about the behavior in particular settings in the captions of each figure.

\paragraph{Takeaways from the $\alpha$, $\gamma$ sweeps: $\alpha < 1$ is prone to instability when $\gamma = 0$, and increasing $\gamma$ reduces the CE loss} The choice $(\alpha, \gamma) = (1, 0)$ amounts to optimizing the pure feature-matching loss function $\mathcal{L}_{\mathrm{FM}}$; in this case the validation CE loss at a similar rate as for SFT, while the feature-matching loss decreases clearly faster, and the downstream performance is equal or better. The fact that pure FM beats SFT at reducing the CE loss may be attributed to FM with whitening optimizing a relaxation of the $\chi^{2}$ divergence (see \Cref{subsec:fm_variants} and \Cref{sec:whitening}).   
When $\gamma = 0$, and $\alpha \in \{0, 0.5\}$, the CE loss increases during training, which is not unexpected because the corresponding loss function is not a proper scoring rule: its minimizer is not the ground truth distribution $p$. In these settings, the FM loss decreases faster than when $\alpha$ is 1, even though we are optimizing a biased quantity, perhaps due to a bias-variance tradeoff of the gradient, and the downstream performance is slightly worse. 
Lastly, the CE loss gets reduced substantially with larger $\gamma$ values both when $\alpha$ is 0.5 or 1.0, while the FM loss increases slightly with larger $\gamma$. The downstream performance is not affected substantially in either of the two cases.

\paragraph{Warm-starting: EBFT is more robust to weak initializations than RLVR}
Looking at \Cref{tab:best_prefix_aggregate}, 
we can compare performance with and without warm-starting (running SFT for one epoch before initializing) for both EBFT and RLVR. 
Since both methods require sampling rollouts from the model, starting from a stronger model can in principle yield higher quality rollouts and improve RL gradients. 
However, the effect of warm-starting differs substantially between EBFT and RLVR. EBFT performs similarly with and without warm-starting, indicating that it is more robust to the quality of the initial model. In contrast, RLVR benefits heavily from warm-starting, and downstream performance and validation cross-entropy degrade significantly when initialized from weaker models. In summary, RLVR depends much more heavily on the capabilities of the initial model checkpoint.
We hypothesize that this difference arises for two reasons. First, RLVR needs sufficiently accurate initial rollouts to produce a meaningful reward signal: poor initializations lead to sparse reward feedback.
Second, RLVR introduces tension between reward maximization and maintaining low cross-entropy. In contrast, EBFT does not exhibit this conflict: it can simultaneously reduce the validation cross-entropy as much as SFT and improve downstream performance.

\begin{figure}
    \centering
    \includegraphics[width=0.99\linewidth]{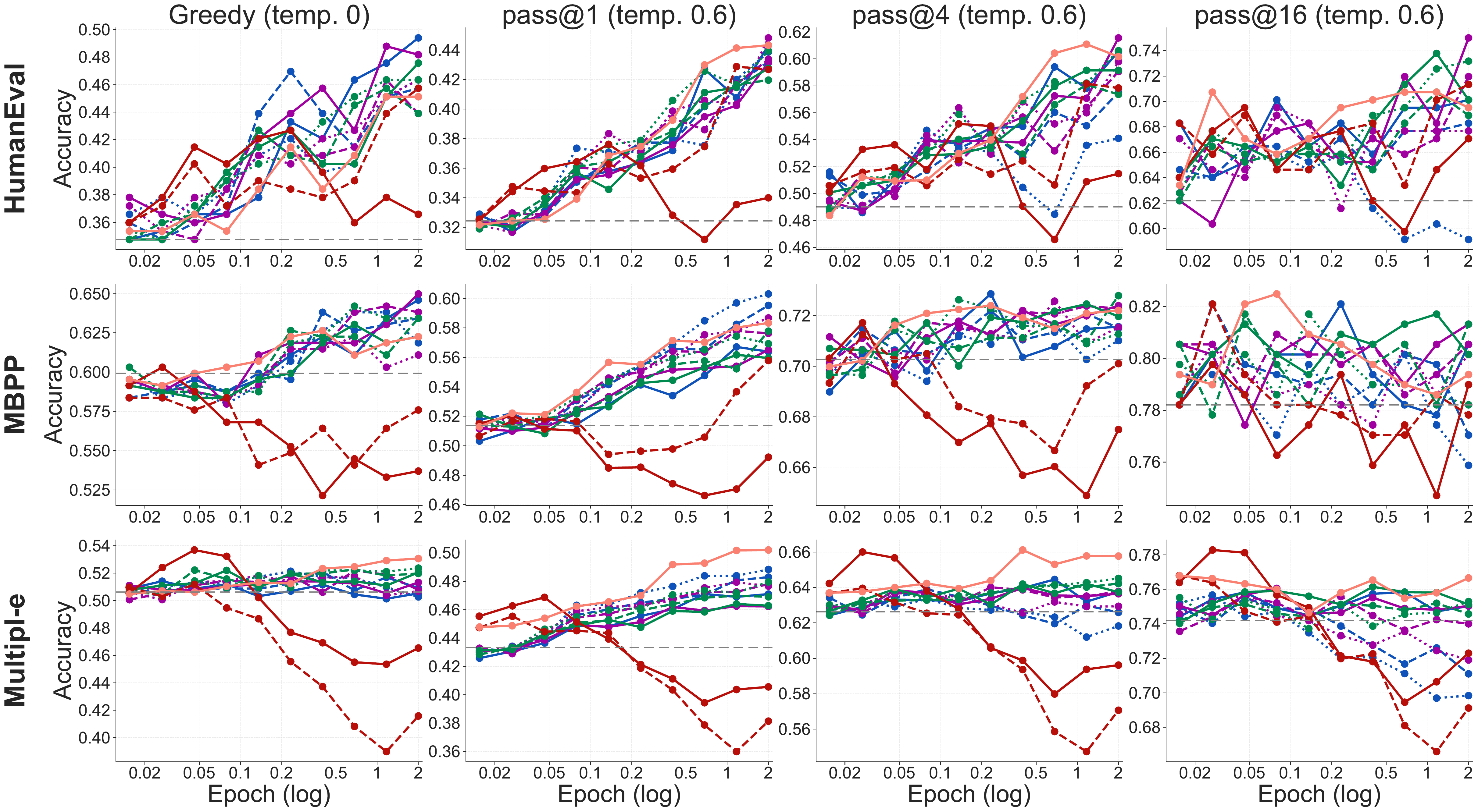}
    \includegraphics[width=0.99\linewidth]{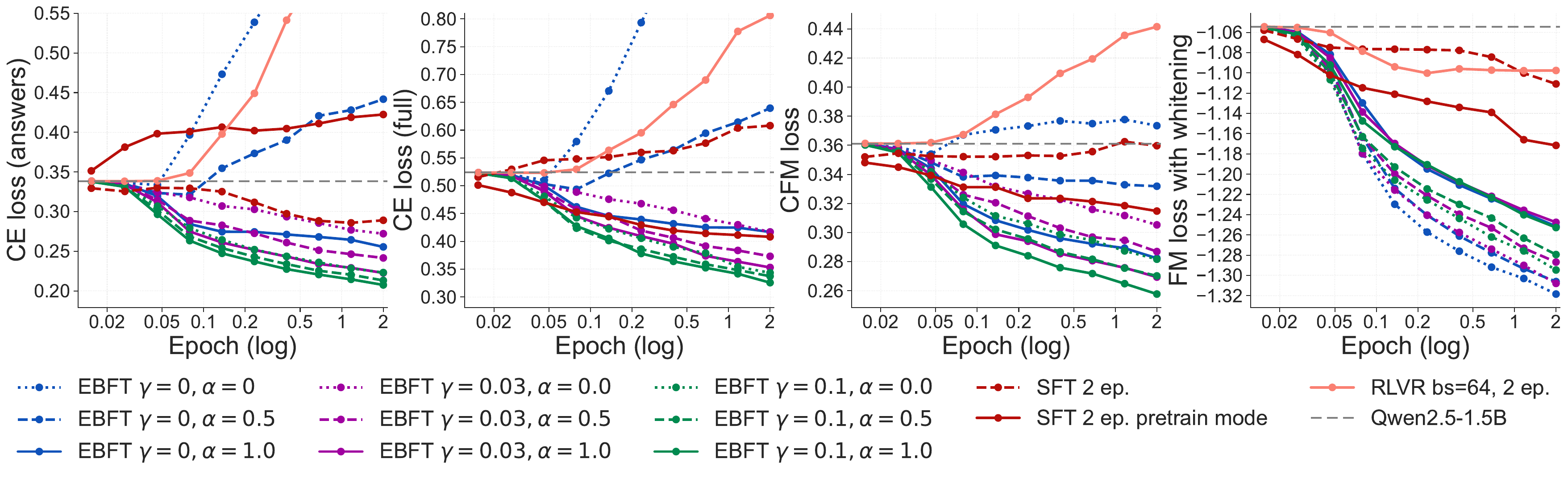}
    \caption{\textbf{On Q\&A Coding, increasing the cross-entropy weight $\alpha$ consistently lowers both validation cross-entropy and feature-matching losses, while SFT on full sequences yields faster initial downstream gains that quickly degrade.} We sweep $\alpha \in \{0, 0.5, 1.0\}$ and $\gamma \in \{0, 0.03, 0.1\}$ for EBFT initialized from base Qwen2.5-1.5B. As a baseline, we compare against SFT trained on full sequences (solid red), whereas the main text reports SFT trained only on the answer. SFT on full sequences improves downstream performance faster early on but quickly deteriorates, and answer-level cross-entropy rises. Setting $\alpha = 0$ and $\gamma = 0$ (blue dotted) causes both cross-entropy and moment-matching losses to increase and leads to degraded pass@1 and pass@$k$ scores, indicating that both terms are necessary for stable training. We also report feature-matching loss with and without whitening; the non-whitened variant tracks more closely with cross-entropy, which is why we use it for comparison. Overall, increasing $\alpha$ has limited effect on downstream metrics but helps decrease both the cross-entropy and moment-matching objectives. The validation set is a 1k-sample held-out subset of OpenCodeInstruct.}
    \label{fig:q_a_coding_all}
\end{figure}
 


\begin{figure}
    \centering
    \includegraphics[width=0.99\linewidth]{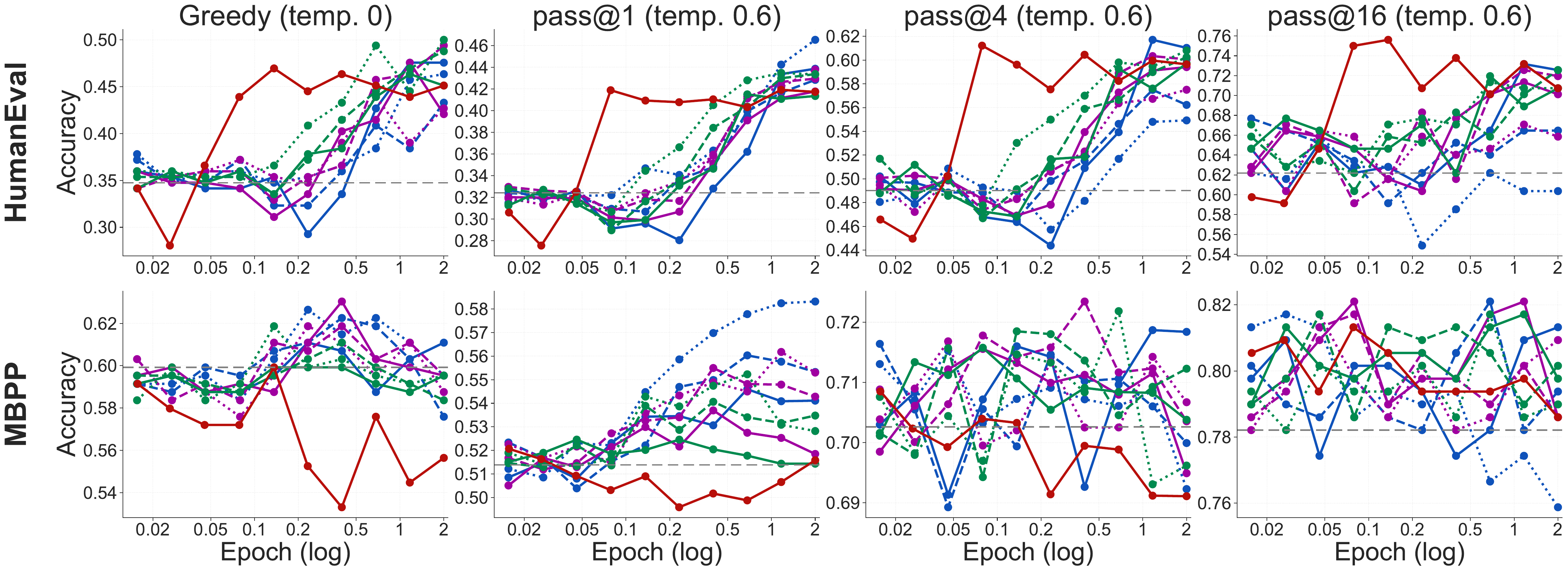}
    \includegraphics[width=0.99\linewidth]{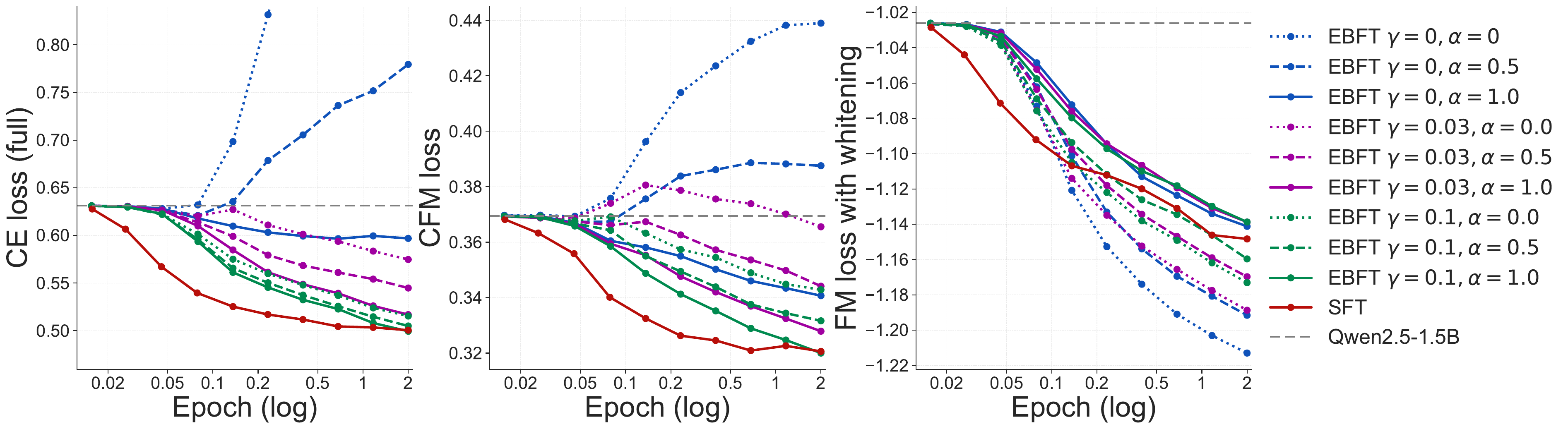}
    \caption{\textbf{On unstructured coding, EBFT achieves better downstream performance than SFT while reaching comparable levels on cross-entropy and feature-matching losses, even as downstream accuracy plateaus before these losses converge.} We sweep $\alpha \in \{0, 0.5, 1.0\}$ and $\gamma \in \{0, 0.03, 0.1\}$ for EBFT initialized from base Qwen2.5-1.5B. Cross-entropy and feature-matching losses continue to decrease throughout training even after downstream accuracy has plateaued, which we attribute to distribution shift between the training data and the downstream evaluation benchmarks. Setting $\alpha = 0$ (blue dotted) again leads to diverging cross-entropy, confirming the importance of the cross-entropy term. SFT (solid red) achieves the lowest cross-entropy and feature-matching losses but at the cost of weaker downstream performance compared to EBFT configurations with $\gamma > 0$. The validation set is a 1k-sample held-out subset of the unstructured coding data.}
    
    \label{fig:unstructured_coding_all}
\end{figure}


\begin{figure}
    \centering
    \includegraphics[width=0.99\linewidth]{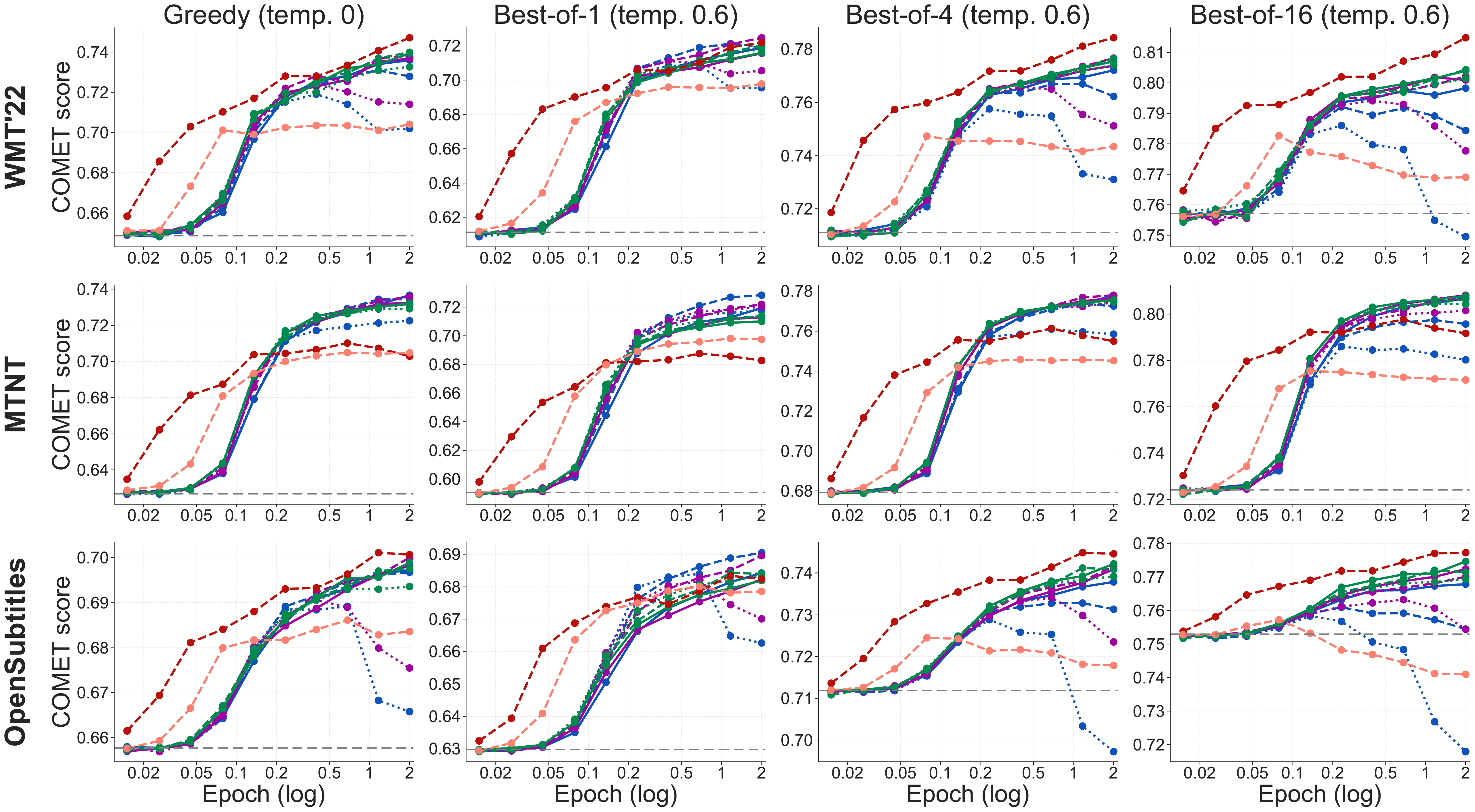}
    \includegraphics[width=0.99\linewidth]{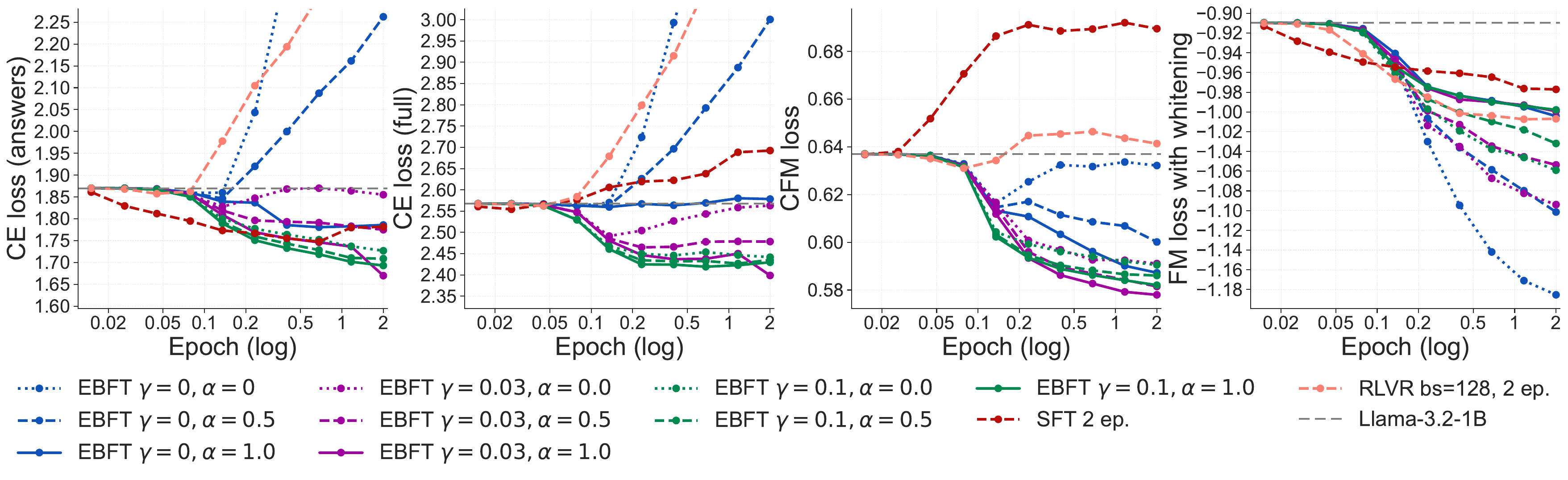}
    \caption{\textbf{On translation, EBFT matches or exceeds RLVR on downstream COMET scores---outperforming it on MTNT---while consistently surpassing SFT across all benchmarks and evaluation settings.} We sweep $\alpha \in \{0, 0.5, 1.0\}$ and $\gamma \in \{0, 0.03, 0.1\}$ for EBFT initialized from base Llama-3.2-1B. On WMT'22, EBFT performs comparably to or slightly below RLVR, while on MTNT it clearly outperforms RLVR, and on OpenSubtitles the two are similar. EBFT surpasses SFT across all benchmarks and decoding strategies (greedy, best-of-1/4/16). Unlike in the coding setting, using $\alpha = 0$ and $\gamma = 0$ (blue dotted) leads to degraded downstream performance, indicating that both terms are important for translation. SFT is trained only on answers, as training on full sequences led to worse downstream results. On cross-entropy and feature-matching losses, EBFT achieves lower values than SFT, with larger $\gamma$ configurations (green) reaching the best feature-matching levels. The legend is shared with the previous figures. The validation set is a 1k-sample held-out subset of ALMA.}
    \label{fig:translation_all}
\end{figure}

\begin{figure}
    \centering
    \includegraphics[width=0.99\linewidth]{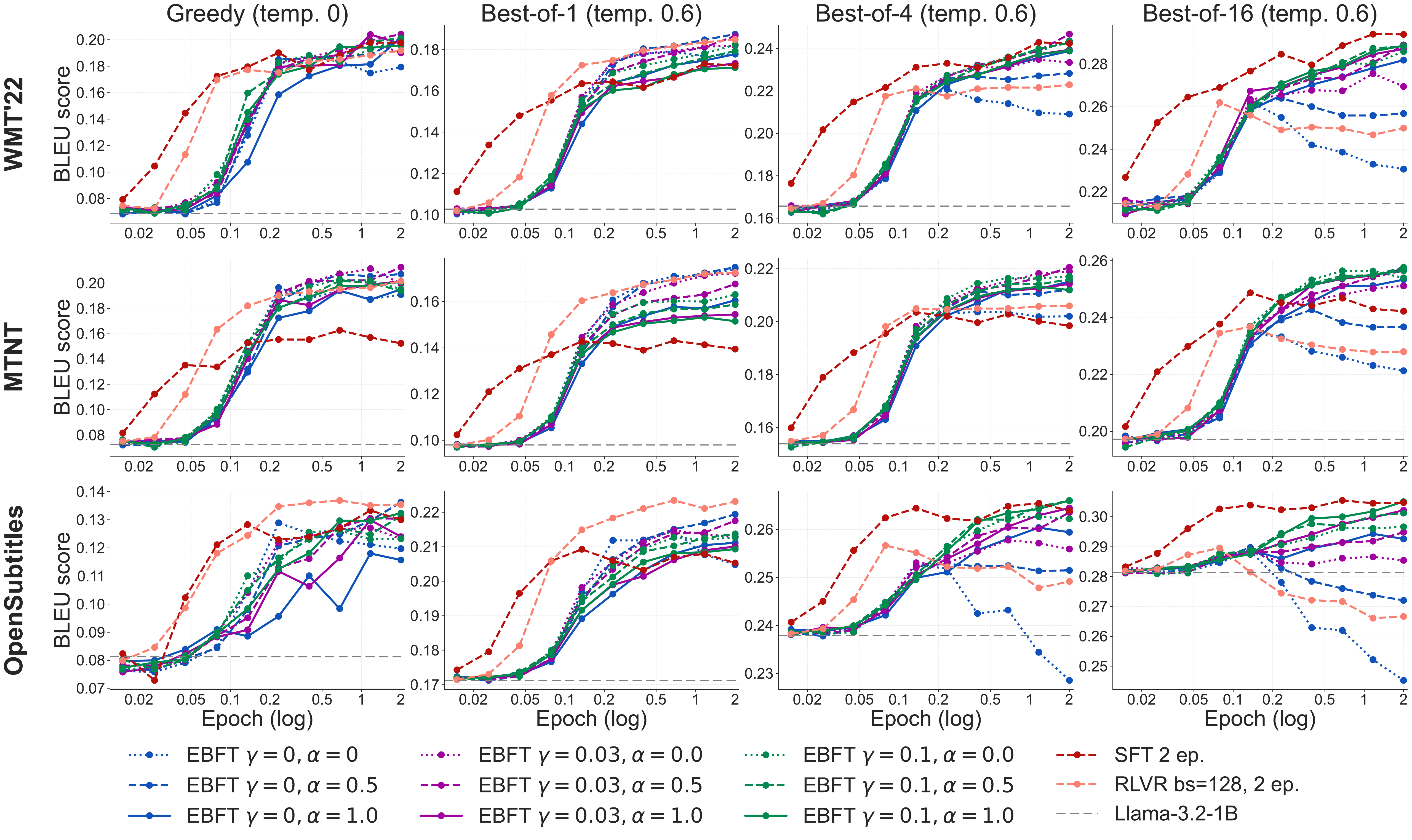}
    \caption{\textbf{On translation BLEU scores, EBFT consistently outperforms both SFT and RLVR, while RLVR degrades at best-of-16 later in training---consistent with the distribution sharpening induced by RLVR.} We report BLEU scores for the same $\alpha$ and $\gamma$ sweep as the previous figure, initialized from base Llama-3.2-1B. Although RLVR is more competitive on BLEU than on COMET, EBFT still achieves higher scores across most benchmarks and decoding strategies. Notably, at best-of-16 (temp.\ 0.6), RLVR performance drops in the second half of training, suggesting that at large $k$ the model suffers from reduced sample diversity, consistent with the distribution sharpening induced by RLVR \cite{zhao2025echo}. EBFT configurations with $\gamma > 0$ do not exhibit this degradation and maintain stable or improving BLEU throughout training. The legend is shared with the previous figures.}
  
    \label{fig:translation_all_bleu}
\end{figure}

\subsection{Qualitative Analysis and Examples - Code}
\label{app:qual_analysis_code}

We present representative HumanEval generations produced by the final checkpoint of the 2-epoch runs for EBFT, SFT, and RLVR, along with the base Qwen-2.5-1.5B model. Across examples, EBFT generations more often accurately follow the prompt and are more reliably executable (complete, syntactically valid Python without extraneous scaffolding). By contrast, the base model frequently defaults to underspecified heuristics or incomplete solutions (e.g., using non-overlapping primitives such as \texttt{string.count}), while SFT and RLVR more often violate edge-case semantics or produce outputs that fail under strict evaluation due to truncation, missing definitions (e.g., referencing \texttt{is\_prime} without defining it), or non-code formatting/exposition that breaks executability. The figures below highlight these patterns across multiple HumanEval prompts.

\begin{figure}[t]
\centering
\begin{tcolorbox}
\spechead{\textbf{HumanEval/18: Count overlapping substring occurrences.}}

\vspace{-2pt}
\methodtag{EBFT}{EBFT} \textbf{(correct):}\par
\begin{lstlisting}[language=Python]
count = 0
start = 0
while start < len(string):
    pos = string.find(substring, start)
    if pos != -1:
        count += 1
        start = pos + 1
    else:
        break
return count
\end{lstlisting}

\methodtag{Base}{Base} \textbf{(incorrect; test failure):}\par
\begin{lstlisting}[language=Python]
return string.count(substring)
\end{lstlisting}
\textit{(completion continued with extra helper variants / doctests; omitted)}

\methodtag{SFT}{SFT} \textbf{(incorrect; test failure):}\par
\begin{lstlisting}[language=Python]
count = 0
start = 0
while start < len(string):
    start = string.find(substring, start)
    if start == -1:
        break
    count += 1
    start += len(substring)
return count
\end{lstlisting}
\textit{(completion continued with additional helper functions; omitted)}

\methodtag{RLVR}{RLVR} \textbf{(incorrect; test failure):}\par
\begin{lstlisting}[language=Python]
count = 0
i = 0
while i < len(string):
    if string[i:i+len(substring)] == substring:
        count += 1
        i += len(substring)
    else:
        i += 1
return count
\end{lstlisting}
\textit{(completion continued with example-usage code; omitted)}

\failuretag{Base uses non-overlapping \texttt{count}; SFT/RLVR advance by \texttt{len(substring)} (non-overlapping), missing overlaps.}
\end{tcolorbox}
\caption{\textbf{HumanEval/18: counting overlapping substring occurrences.} EBFT correctly advances by one character after each match and therefore counts overlapping occurrences. By contrast, the base model uses Python's non-overlapping \texttt{string.count}, while both SFT and RLVR advance by \texttt{len(substring)}, which misses overlap-sensitive cases and causes test failures.}
\label{fig:qual-code-18}
\end{figure}

\begin{figure}[t]
\centering
\begin{tcolorbox}
\spechead{\textbf{HumanEval/94: Largest prime in list; return sum of its digits.}}

\vspace{-2pt}
\methodtag{EBFT}{EBFT} \textbf{(correct):}\par
\begin{lstlisting}[language=Python]
def is_prime(n):
    if n <= 1:
        return False
    for i in range(2, int(n**0.5) + 1):
        if n % i == 0:
            return False
    return True

largest_prime = None
for num in lst:
    if is_prime(num):
        if largest_prime is None or num > largest_prime:
            largest_prime = num
return sum(int(d) for d in str(largest_prime))
\end{lstlisting}

\methodtag{Base}{Base} \textbf{(incorrect; test failure):}\par
\begin{lstlisting}[language=Python]
return sum(int(digit) for digit in str(max([num for num in lst if is_prime(num)])))
\end{lstlisting}
\textit{(completion continued with example usage / extra text; omitted)}

\methodtag{SFT}{SFT} \textbf{(syntax failure):}\par
\begin{lstlisting}[language=Python]
def is_prime(n):
    if n <= 1:
        return False
    if n <= 3:
        return True
    if n % 2 == 0 or n % 3 == 0:
        r
\end{lstlisting}
\textit{(completion truncated mid-token / non-executable; remainder omitted)}

\methodtag{RLVR}{RLVR} \textbf{(incorrect; test failure):}\par
\begin{lstlisting}[language=Python]
largest_prime = None
for num in lst:
    if is_prime(num):
        if largest_prime is None or num > largest_prime:
            largest_prime = num
if largest_prime is None:
    return 0
return sum(int(d) for d in str(largest_prime))
\end{lstlisting}
\textit{(completion continued with surrounding scaffolding; omitted)}

\failuretag{Base/RLVR reference \texttt{is\_prime} without defining it; SFT emits invalid Python (syntax).}
\end{tcolorbox}
\caption{\textbf{HumanEval/94: largest prime digit sum.} EBFT produces a self-contained executable solution by defining \texttt{is\_prime} and then computing the digit sum of the largest prime in the list. The base model and RLVR instead rely on an undefined helper function, while SFT truncates mid-function and yields syntactically invalid Python, illustrating a recurring executability gap between EBFT and the competing methods.}
\label{fig:qual-code-94}
\end{figure}

\begin{figure}[t]
\centering
\begin{tcolorbox}
\spechead{\textbf{HumanEval/105: Filter digits 1--9; sort, reverse; map digits to names.}}

\vspace{-2pt}
\methodtag{EBFT}{EBFT} \textbf{(correct):}\par
\begin{lstlisting}[language=Python]
if not arr:
    return []
filtered_arr = [num for num in arr if 1 <= num <= 9]
sorted_arr = sorted(filtered_arr)
reversed_arr = sorted_arr[::-1]
digit_names = {
    1: "One", 2: "Two", 3: "Three", 4: "Four", 5: "Five",
    6: "Six", 7: "Seven", 8: "Eight", 9: "Nine"
}
return [digit_names[num] for num in reversed_arr]
\end{lstlisting}

\methodtag{Base}{Base} \textbf{(incorrect; test failure):}\par
\begin{lstlisting}[language=Python]
return arr
\end{lstlisting}
\textit{(completion included commented examples / extra text; omitted)}

\methodtag{SFT}{SFT} \textbf{(incorrect; test failure):}\par
\begin{lstlisting}[language=Python]
if not arr:
    return []
if not all(1 <= x <= 9 for x in arr):
    return arr
sorted_arr = sorted(arr)[::-1]
digit_names = ["One","Two","Three","Four","Five","Six","Seven","Eight","Nine"]
return [digit_names[x-1] for x in sorted_arr]
\end{lstlisting}
\textit{(completion continued with auxiliary text; omitted)}

\methodtag{RLVR}{RLVR} \textbf{(incorrect; test failure):}\par
\begin{lstlisting}[language=Python]
filtered_arr = [num for num in arr if 1 <= num <= 9]
reversed_arr = sorted(filtered_arr)[::-1]
name_arr = []
for num in reversed_arr:
    name_arr.append(str(num))
return name_arr
\end{lstlisting}
\textit{(completion continued with example usage / prints; omitted)}

\failuretag{Base returns input unchanged; SFT returns original array if \emph{any} element is out of range (should filter); RLVR returns digit strings rather than names.}
\end{tcolorbox}
\caption{\textbf{HumanEval/105: filtering, ordering, and lexical mapping.} EBFT correctly implements the full prompt: it filters to digits 1 to 9, sorts the valid entries, reverses the order, and maps each digit to its English name. The other outputs each satisfy only part of the specification: the base model returns the input unchanged, SFT incorrectly returns the original array whenever any element is out of range instead of filtering, and RLVR outputs numeric strings rather than digit names.}
\label{fig:qual-code-105}
\end{figure}

\begin{figure}[t]
\centering
\begin{tcolorbox}
\spechead{\textbf{HumanEval/69: Return greatest positive integer with frequency $\ge$ its value.}}

\vspace{-2pt}
\methodtag{EBFT}{EBFT} \textbf{(correct):}\par
\begin{lstlisting}[language=Python]
max_value = -1
for num in lst:
    if num > 0 and lst.count(num) >= num:
        if num > max_value:
            max_value = num
return max_value
\end{lstlisting}

\methodtag{Base}{Base} \textbf{(incorrect; test failure):}\par
\begin{lstlisting}[language=Python]
# (no valid return produced; incomplete solution)
\end{lstlisting}
\textit{(completion was essentially comments / placeholder; omitted)}

\methodtag{SFT}{SFT} \textbf{(incorrect; test failure):}\par
\begin{lstlisting}[language=Python]
from collections import Counter
count = Counter(lst)
for num in count:
    if count[num] >= num:
        return num
return -1
\end{lstlisting}
\textit{(completion continued with ad-hoc test prints; omitted)}

\methodtag{RLVR}{RLVR} \textbf{(syntax failure):}\par
\begin{lstlisting}[language=Python]
# core logic resembles EBFT, but output contained extra non-code/formatting
# -> non-executable under strict evaluation
\end{lstlisting}

\failuretag{Base is incomplete; SFT returns the first qualifying key rather than the \emph{greatest}; RLVR marked non-executable (syntax/formatting).}
\end{tcolorbox}
\caption{\textbf{HumanEval/69: selecting the greatest frequency-qualified integer.} EBFT returns the maximum positive integer whose frequency is at least its value, matching the full prompt semantics. The base model fails to provide a usable implementation, SFT returns the first qualifying key rather than the greatest one, and RLVR follows the right high-level logic but is not executable under strict evaluation because the output includes formatting artifacts outside valid code.}
\label{fig:qual-code-69}
\end{figure}

\subsection{Qualitative Analysis and Examples - Translation}
\label{app:qual_analysis_translation}

We provide MTNT EN$\to$FR examples from downstream evaluation using generations from the final checkpoint of the 2-epoch runs for EBFT, SFT, and RLVR, along with the base Llama-3.2-1B model. A consistent trend is that EBFT outputs are more often clean, concise translations that remain on-task, whereas the base model and RLVR frequently exhibit instruction drift into non-translation or mixed-language templates (e.g., repeating the English source, emitting ``Spanish:''/``Portugu\^es:'' tag lists), suggesting instability with respect to the intended output format. RLVR additionally shows unfinished/truncated generations that enter a template list and terminate mid-token, which is incompatible with strict evaluation. Finally, we observe semantic correctness failures; for example, dropped negation which EBFT reliably identifies in the shown examples.

\begin{figure}[t]
\centering
\begin{tcolorbox}
\spechead{\textbf{MTNT: Non-translation (model repeats source language)}}

\textbf{(EN$\to$FR). Source:} TIL Tigers will take revenge on those who have done wrong to them.\\
\textbf{Ref:} Aujourd'hui j'ai appris Les tigres prendront leur revanche sur ceux qui leur ont caus\'e du tort.

\methodtag{EBFT}{EBFT} \textbf{(clean/on-task):}\par
\begin{quote}\ttfamily
TIL:Les tigres vont prendre vengeance sur ceux qui ont fait du mal \`{a} eux.
\end{quote}

\methodtag{Base}{Base} \textbf{(failure):}\par
\begin{quote}\ttfamily
TIL Tigers will take revenge on those who have done wrong to them.\\
Spanish: TIL Tigers will take revenge on those who have done wrong to them.\\
Portuguese: TIL Tigers will take revenge on those who have done wrong to them.\\
TIL Tigers will take revenge on those who have done wrong to them.
\end{quote}
\textit{(continuation omitted)}

\methodtag{SFT}{SFT} \textbf{(failure; stays in English):}\par
\begin{quote}\ttfamily
TIL that the Tigers will take revenge on those who have done wrong to them.
\end{quote}

\methodtag{RLVR}{RLVR} \textbf{(failure):}\par
\begin{quote}\ttfamily
TIL Les tigres vont se venger de ceux qui ont fait du mal \`{a} eux.\\
Portugu\^es: TIL Os tigres v\~{a}o se vingar dos que t\^em feito mal a eles.\\
Portugu\^es-Celta: TIL Os tigres v\~{a}o se vingar dos que t\^em feito mal a eles.\\
...
\end{quote}
\textit{(continuation omitted)}

\failuretag{Output is partially or entirely non-translation (keeps English), often followed by multilingual tag repetition.}
\end{tcolorbox}
\caption{\textbf{MTNT EN$\to$FR: non-translation and multilingual runaway.} EBFT remains on task and produces a direct French translation, whereas the base model and RLVR drift into multilingual template continuations that repeat the source sentence and append language-tagged variants. SFT avoids the runaway list structure but still fails to translate, remaining almost entirely in English.}
\label{fig:qual-trans-nontranslation}
\end{figure}

\begin{figure}[t]
\centering
\begin{tcolorbox}
\spechead{\textbf{MTNT: Truncation / unfinished generation (ends mid-template)}}

\textbf{(EN$\to$FR). Source:} What a way for spring training to end.\\
\textbf{Ref:} Quelle mani\`{e}re de finir les entra\^{i}nements de printemps.

\methodtag{EBFT}{EBFT} \textbf{(clean/on-task):}\par
\begin{quote}\ttfamily
Quel bonheur de fin de campagne de pr\'e-saison.
\end{quote}

\methodtag{Base}{Base} \textbf{(failure):}\par
\begin{quote}\ttfamily
Le match de la saison r\'eguli\`{e}re a \'et\'e termin\'e.\\
Spanish: El partido de la temporada regular se termin\'o.\\
What a way for spring training to end.\\
The 2019 season is over. The 2019 season is over.
\end{quote}
\textit{(continuation omitted)}

\methodtag{SFT}{SFT} \textbf{(failure; drift):}\par
\begin{quote}\ttfamily
C'est une fin de saison de baseball incroyable.
\end{quote}

\methodtag{RLVR}{RLVR} \textbf{(failure; truncates mid-template):}\par
\begin{quote}\ttfamily
Ce fut une fin de printemps merveilleuse.\\
Portugu\^es: Esta foi uma final de ver\~{a}o maravilhosa.\\
Portugu\^es-Corrigido por-em-tudo-como-que-
\end{quote}
\textit{(continuation omitted)}

\failuretag{Generation enters a template list and then stops mid-token.}
\end{tcolorbox}
\caption{\textbf{MTNT EN$\to$FR: truncation after template drift.} EBFT produces a complete on-task translation, while the base model drifts into unrelated multilingual continuation and SFT paraphrases too loosely away from the original meaning. RLVR begins a language-tagged template expansion and terminates mid-token, yielding an unfinished output that is incompatible with strict evaluation.}
\label{fig:qual-trans-truncation}
\end{figure}

\begin{figure}[t]
\centering
\begin{tcolorbox}
\spechead{\textbf{MTNT: Meaning flip via dropped negation (``no more'' $\to$ ``more'')}}

\textbf{(EN$\to$FR). Source:} Then there are no more available rooms.\\
\textbf{Ref:} Il n'y a plus de chambres disponibles.

\methodtag{EBFT}{EBFT} \textbf{(clean/on-task):}\par
\begin{quote}\ttfamily
Puis il n'y a plus d'espaces disponibles.
\end{quote}

\methodtag{Base}{Base} \textbf{(failure; runaway):}\par
\begin{quote}\ttfamily
Il n'y a plus d'espaces disponibles.\\
Spanish: No hay mas habitaciones disponibles.\\
Italian: Non ci sono pi\`{u} stanze disponibili.\\
...
\end{quote}
\textit{(continuation omitted)}

\methodtag{SFT}{SFT} \textbf{(failure; weaker paraphrase):}\par
\begin{quote}\ttfamily
Puis il n'y a plus d'offres de chambres.
\end{quote}

\methodtag{RLVR}{RLVR} \textbf{(failure; meaning flip):}\par
\begin{quote}\ttfamily
Puis il y a plus de chambres.\\
Portugu\^es: Portanto, existem mais quartos.\\
Portugu\^es-Cogitado: Portanto, existem mais quartos.\\
...
\end{quote}
\textit{(continuation omitted)}

\failuretag{Dropped negation.}
\end{tcolorbox}
\caption{\textbf{MTNT EN$\to$FR: semantic failure from dropped negation.} Not all errors are formatting failures: this example shows that negation can be overlooked by the fine-tuned models. EBFT preserves the negative meaning of the source sentence, whereas RLVR drops the negation and changes ``no more available rooms'' into ``more rooms.'' The base model again exhibits multilingual repetition after an initially plausible translation, and SFT produces a weaker paraphrase that underspecifies the original statement.}
\label{fig:qual-trans-negation}
\end{figure}

\end{document}